\PassOptionsToPackage{pagebackref=true, colorlinks=true, citecolor=green!60!black, linkcolor=blue}{hyperref}
\PassOptionsToPackage{linesnumbered,noend,algo2e,ruled}{algorithm2e}

\documentclass[11pt]{article}
\usepackage{fullpage}
\usepackage{parskip}

\usepackage{titletoc}

\pdfoutput=1

\usepackage[utf8]{inputenc} %
\usepackage{hyperref}       %
\usepackage{url}            %
\usepackage{booktabs}       %
\usepackage{amsfonts}       %
\usepackage{nicefrac}       %
\usepackage[round]{natbib}	
\usepackage{microtype}      %

\usepackage{amsmath,amsfonts,amsthm,amssymb}
\numberwithin{equation}{section}
\usepackage{thmtools,thm-restate}
\usepackage{enumitem}
\usepackage{bbm}
\bibliographystyle{plainnat}
\usepackage{tikz}
\usepackage{xspace}
\usepackage[algo2e,ruled]{algorithm2e}

\newcommand{\algocfconts}[3]{%
  \@algocf@pre@ruled
  #2\label{#1}\kern2pt\hrule height.8pt depth0pt\kern2pt%
  #3\@algocf@pre@ruled
}
{%
  \ifundef{\algocf}%
  {`algorithm2e' package is required if you want to
   use the algorithm environment}%
  {}%
  \begin{algocf}[#1]%
  \renewcommand\@makecaption[2]{%
    \hskip\AlCapHSkip
    \parbox[t]{\hsize}{\algocf@captiontext{##1}{##2}}%
  }%
}%
{%
  \end{algocf}%
}

\usepackage{mathtools}
\usepackage{multirow}
\usepackage{subcaption}
\usepackage[makeroom]{cancel}
\usepackage{bigints}
\usepackage{relsize}
\usepackage{tabularx}
\usepackage{pgf,tikz}
\usetikzlibrary{patterns}
\usetikzlibrary{arrows}
\usepackage{mathrsfs}
\newcommand{\ud}{\,\mathrm{d}}
\newcommand{\tbeta}{\widetilde{\beta}}
\usepackage[toc,title]{appendix}

\newenvironment{enumerate*}%
  {\vspace{-2ex} \begin{enumerate} %
     \setlength{\itemsep}{-1ex} \setlength{\parsep}{0pt}}%
  {\end{enumerate}}

\newenvironment{itemize*}%
  {\vspace{-2ex} \begin{itemize} %
     \setlength{\itemsep}{-1ex} \setlength{\parsep}{0pt}}%
  {\end{itemize}}

\newenvironment{description*}%
  {\vspace{-2ex} \begin{description} %
     \setlength{\itemsep}{-1ex} \setlength{\parsep}{0pt}}%
  {\end{description}}

\DeclareMathOperator*{\E}{\mathbb{E}}
\let\Pr\relax
\DeclareMathOperator*{\Pr}{\text{Pr}}

\newcommand{\eps}{\varepsilon}
\newcommand{\hbeta}{\hat{\beta}}

\newcommand{\vx}{x}

\newcommand{\uu}{\underline{u}}

\newcommand{\ips}{\term{IPS}}
\newcommand{\hg}{\widehat{g}}

\newcommand{\tg}{\widetilde{g}}

\renewcommand{\b}{b}

\newcommand{\calP}{\mathcal{P}}

\newcommand{\calA}{\mathcal{A}}

\newcommand{\calD}{\mathcal{D}}

\newcommand{\calG}{\mathcal{G}}
\newcommand{\calI}{\mathcal{I}}
\newcommand{\calR}{\mathcal{R}}

\newcommand{\calO}{\mathcal{O}}
\newcommand{\tcalO}{\widetilde{\mathcal{O}}}

\newcommand{\hG}{\widehat{G}}

\newcommand{\1}{\mathbbm{1}}

\newcommand{\act}{\term{act}}
\newcommand{\childprod}{\c_{\term{prod}}}

\renewcommand{\S}{\mathcal{S}}
\newcommand{\Z}{Z}

\newcommand\numberthis{\addtocounter{equation}{1}\tag{\theequation}}
\newcommand{\ust}{u^{\star}}
\newcommand{\vxst}{\vx^{\star}}
\newcommand{\parent}{\term{parent}}

\newcommand{\xst}{x^{\star}}

\newcommand{\jst}{j^{\star}}

\newtheorem{theorem}{Theorem}[section]
\newtheorem{lemma}{Lemma}[section]

\newtheorem{definition}{Definition}[section]
\newtheorem{corollary}{Corollary}[section]

\newtheorem{remark}{Remark}[section]
\newtheorem{proposition}{Proposition}[section]
\newtheorem{claim}{Claim}[section]

\newcommand{\poly}{\text{\upshape poly}}

\newcommand{\squishlist}{
   \begin{list}{$\bullet$}
    { \setlength{\itemsep}{0pt}      \setlength{\parsep}{3pt}
      \setlength{\topsep}{3pt}       \setlength{\partopsep}{0pt}
      \setlength{\leftmargin}{1.5em} \setlength{\labelwidth}{1em}
      \setlength{\labelsep}{0.5em} } }
\newcommand{\squishend}{  \end{list}  }

\usepackage[suppress]{color-edits}
\addauthor{as}{blue}
\addauthor{cp}{red}

\usepackage{xspace,nicefrac}
\newcommand{\xhdr}[1]{\vspace{1mm} \noindent{\bf #1}}
\renewcommand{\refeq}[1]{Eq.~(\ref{#1})}
\newcommand{\OMIT}[1]{}
\newcommand{\ie}{{\em i.e.,~\xspace}}
\newcommand{\eg}{{\em e.g.,~\xspace}}
\newcommand{\term}[1]{\ensuremath{\mathtt{#1}}\xspace}
\newcommand{\tildeO}{\tcalO}
\newcommand{\tOmega}{\widetilde{\Omega}}
\newcommand{\LDOTS}{\, ,\ \ldots\ ,}     %
\newcommand{\rbr}[1]{\left(\,#1\,\right)}
\newcommand{\sbr}[1]{\left[\,#1\,\right]}
\newcommand{\cbr}[1]{\left\{\,#1\,\right\}}
\newcommand{\N} {\ensuremath{\mathbb{N}}} %
\newcommand{\EqComment}[1]{\textsl{(#1)}} %

\newenvironment{OneLiners}[1][\ensuremath{\bullet}]
    {\begin{list}
        {#1}
        {\setlength{\itemsep}{0pt}
          \setlength{\parsep }{0pt}
          \setlength{\topsep }{0pt}
        }}
    {\end{list}}

\newcommand{\covnum}{\mathcal{N}}   %
\newcommand{\covdim}{\term{CovDim}} %
\newcommand{\DblC}{C_\term{dbl}}    %

\renewcommand{\c}{\mathcal{C}}  %
\newcommand{\opt}{\vx^{\star}}  %
\newcommand{\ZoomDAG}{\term{ZoomDAG}}  %
\newcommand{\nodes}{\mathcal{U}}  %
\renewcommand{\root}{\term{root}} %

\newcommand{\arms}{\calA}       %
\newcommand{\dist}{\calD}       %
\newcommand{\gap}{\term{AdvGap}}   %
\newcommand{\gapIID}{\term{Gap}}   %
\newcommand{\zoomdim}{\term{ZoomDim}} %
\newcommand{\advzoomdim}{\term{AdvZoomDim}} %
\newcommand{\must}{\mu^{\star}}

\newcommand{\CONFtot}{\term{conf}^{\term{tot}}}
\newcommand{\CONFinst}{\term{conf}^{\term{inst}}}

\newcommand{\inhbias}{\text{inherited $\tau$-diameter}\xspace}

\newcommand{\repr}{\term{repr}} %
\newcommand{\mass}{\mathcal{M}} %
\newcommand{\activeN}{N^{\term{act}}} %

\newcommand{\reprset}{\arms_{\term{repr}}}

\newcommand{\calE}{\mathcal{E}}
\newcommand{\bxst}{\bar{x}^{\star}}
\newcommand{\ist}{i^{\star}}
\newcommand{\phist}{\phi^{\star}}
\newcommand{\rewE}{\calE^{\term{rew}}_{\term{clean}}}
\newcommand{\algE}{\calE^{\term{alg}}_{\term{clean}}}

\newcommand{\advzoom}{\textsc{AdversarialZooming}\xspace}
\newcommand{\ExpThree}{\term{EXP3}}
\newcommand{\ExpThreeP}{\term{EXP3.P}}
\newcommand{\UcbOne}{\term{UCB1}}

\title{Adaptive Discretization against an Adversary:\\ Lipschitz bandits, Dynamic Pricing, and Auction Tuning%
\footnote{A short version of this paper, titled ``Adaptive Discretization for Adversarial Lipschitz Bandits", was published in \emph{COLT 2021} (34th Conf. on Learning Theory). The conference version does not include applications to dynamic pricing and auction tuning (Section~\ref{sec:applications}), and omits detailed proofs (\ie all appendices). The material in Section~\ref{sec:applications}, except for Section~\ref{sec:DP}, is new as of June'25. 
}}

\date{First version: June 2020\\This version: June 2025}

\author{Chara Podimata\thanks{Part of the research was done while the author was an intern at Microsoft Research NYC.} \\
        MIT \\
        \texttt{podimata@mit.edu}
        \and Aleksandrs Slivkins \\
        Microsoft Research NYC \\
        \texttt{slivkins@microsoft.com}
        }

\begin{document}

\maketitle

\begin{abstract}
Lipschitz bandits is a prominent version of multi-armed bandits that studies large, structured action spaces such as the $[0,1]$ interval, where similar actions are guaranteed to have similar rewards. A central theme here is the adaptive discretization of the action space, which gradually ``zooms in'' on the more promising regions thereof. The goal is to take advantage of ``nicer'' problem instances, while retaining near-optimal worst-case performance. While the stochastic version of the problem is well-understood, the general version with adversarial rewards is not.

We provide the first algorithm (\emph{Adversarial Zooming}) 
 for adaptive discretization in the adversarial version, and derive instance-dependent regret bounds. In particular, we recover the worst-case optimal regret bound for the adversarial version, and the instance-dependent regret bound for the stochastic version.

We apply our algorithm to several fundamental applications---including dynamic pricing and auction reserve tuning---all under adversarial reward models. While these domains often violate Lipschitzness, our analysis only requires a weaker version thereof, allowing for meaningful regret bounds without additional smoothness assumptions. Notably, we extend our results to multi-product dynamic pricing with non-smooth reward structures, a setting which does not even satisfy one-sided Lipschitzness.

\end{abstract}

\newpage
\tableofcontents
\newpage

\section{Introduction}

Multi-armed bandits is a simple yet powerful model for decision-making under uncertainty, extensively studied since 1950ies and exposed in several books, \eg
\citep{Bubeck-survey12,slivkins-MABbook,LS19bandit-book}. In a basic version, the algorithm repeatedly chooses actions (a.k.a. arms) from a fixed action space, and observes their rewards. Only rewards from the chosen actions are revealed, leading to
the exploration-exploitation tradeoff.

We focus on \emph{Lipschitz bandits}, a prominent version that studies  large, structured action spaces such as the $[0,1]$ interval. Similar actions are guaranteed to have similar rewards, as per Lipschitz-continuity or a similar condition. In applications, actions can correspond to items with feature vectors, such as products, documents or webpages; or to offered prices for buying, selling or hiring; or to different tunings of a complex system such as a datacenter or an ad auction. \emph{Dynamic pricing}, a version where the algorithm is a seller and arms are prices, has attracted much attention on its own.

Extensive literature on Lipschitz bandits centers on two key themes. One is \emph{adaptive discretization} of the action space which gradually ``zooms in'' on the more promising regions thereof
\citep{LipschitzMAB-stoc08,LipschitzMAB-JACM, xbandits-nips08-conf,xbandits-nips08, ZoomingRBA-icml10, contextualMAB-colt11, Munos-nips11, ImplicitMAB-nips11, Valko-icml13,Minsker-colt13,Bull-bandits14,RepeatedPA-ec14, Grill-nips15}. This approach takes advantage of ``nice'' problem instances {-- ones in which near-optimal arms are confined to a relatively small region of the action space --} while retaining near-optimal worst-case performance. Another theme is relaxing and mitigating the Lipschitz assumptions
\citep{LipschitzMAB-stoc08,LipschitzMAB-JACM, xbandits-nips08-conf,xbandits-nips08, Bubeck-alt11,Munos-nips11, ImplicitMAB-nips11, Valko-icml13,Minsker-colt13,Bull-bandits14,RepeatedPA-ec14, Grill-nips15, SmoothedRegret-colt19}. The point of departure for all this literature is \emph{uniform discretization} \citep{KleinbergL03,Bobby-nips04,LipschitzMAB-stoc08,LipschitzMAB-JACM,xbandits-nips08-conf,xbandits-nips08}, a simple algorithm which discretizes the action space uniformly and obtains worst-case optimal regret bounds.%

{All these developments concern the \emph{stochastic} version, in which the rewards of each action are drawn from the same, albeit unknown, distribution in each round. A more general version allows the rewards to be adversarially chosen. Known as \emph{adversarial bandits}, this version is also widely studied in the literature \citep[starting from ][]{bandits-exp3}, and tends to be much more challending. The adversarial version of Lipschitz bandits is not understood beyond uniform discretization.}

We tackle adversarial Lipschitz bandits.
We provide the first algorithm for adaptive discretization for the adversarial case, and derive instance-dependent regret bounds.%
\footnote{That is, regret bounds which depend on the properties of the problem instance that are not known initially.} Our regret bounds are optimal in the worst case, and improve dramatically when the near-optimal arms comprise a small region of the action space. In particular, we recover the instance-dependent regret bound for the stochastic version of the problem \citep{LipschitzMAB-stoc08,LipschitzMAB-JACM, xbandits-nips08-conf,xbandits-nips08}.

Further, we consider \emph{dynamic pricing}, a well-studied bandit problem in which a seller repeatedly adjusts prices for some product(s), and the problem of adjusting a reserve price in a truthful auction. While these problems do not inherently satisfy Lipschitz-continuity, we show that our algorithm and analysis carry over without additional Lipschitz assumptions (touching upon the second theme mentioned above). Like in  Lipschitz bandits, prior work on the adversarial version of these problems was limited to uniform discretization.

\xhdr{Problem Statement: Adversarial Lipschitz Bandits.}
We are given a set $\arms$ of actions (a.k.a. \emph{arms}), the time horizon $T$, and a metric space $(\arms,\dist)$, also called the \emph{action space}. The adversary chooses randomized reward functions
$g_1 \LDOTS g_T:\arms \to [0,1]$.
 In each round $t$, the algorithm chooses an arm $x_t\in\arms$ and observes reward $g_t(x_t)\in [0,1]$ and nothing else.
We focus on the \emph{oblivious adversary}: all reward functions are chosen before round $1$. The adversary is restricted in that the expected rewards $\E[g_t(\cdot)]$ satisfy the Lipschitz condition:%
\footnote{The expectation in \eqref{eq:Lip} is over the randomness in the reward functions. While adversarial bandits are often defined with deterministic reward functions, it is also common to allow randomness therein, \eg to include stochastic bandits as a special case. The said randomness is essential to include \emph{stochastic} Lipschitz bandits as a special case. Indeed, for stochastic rewards, \eqref{eq:Lip} specializes to the Lipschitz condition from prior work on stochastic Lipschitz bandits. A stronger Lipschitz condition
    $g_t(x) - g_t(y) \leq \dist(x,y)$
is unreasonable for many applications; \eg if the rewards correspond to user's clicks or other discrete signals, we can only assume Lipschitzness ``on average".}
\begin{align}\label{eq:Lip}
\E\sbr{g_t(x) - g_t(y)} \leq \dist(x,y)\quad
    \forall x,y\in\arms,\,t\in [T].
\end{align}
The algorithm's goal is to minimize \emph{regret}, a standard performance measure in multi-armed bandits:
\begin{align}\label{eq:regret-defn}
R(T) := \textstyle \sup_{x\in\arms}\; \sum_{t\in[T]}
    g_t(x) - g_t(x_t).
\end{align}
A problem instance consists of action space $(\arms,\dist)$ and reward functions $g_1 \LDOTS g_T$. The stochastic version of the problem (\emph{stochastic rewards}) posits that each $g_t$ is drawn independently from some fixed but unknown distribution $\calG$. A problem instance is then the tuple $(\arms,\dist,\calG,T)$.

The canonical examples are a $d$-dimensional unit cube
    $(\arms,\dist) = ([0,1]^d,\,\ell_p)$, $p\geq 1$
(where $\ell_p(x,y) = \|x-y\|_p$ is the $p$-norm), and the \emph{exponential tree metric}, where $\arms$ is a leaf set of a rooted infinite tree, and the distance between any two leaves is exponential in the height of their least common ancestor. Our results are equally meaningful for large but finite action sets.

{Our results are most naturally stated without an explicit Lipschitz constant $L$. The latter is implicitly ``baked into" the metric $\dist$, \eg if the action set is $[0,1]$ one can take $\dist(x,y) =L\, |x-y|$. However, we investigate the dependence on $L$ in corollaries. Absent $L$, one can take $\dist\leq 1$ w.l.o.g.}

\xhdr{Our Results.}
We present \advzoom, an algorithm for adaptive discretization of the action space. Our main result is a regret bound of the form
\begin{align}\label{eq:regret-generic}
\E[R(T)] \leq \tcalO(\; T^{(z+1)/(z+2)}\;),
\end{align}
where $z=\advzoomdim\geq 0$ is a new quantity called the \emph{adversarial zooming dimension}.%
\footnote{As usual, the $\tildeO(\cdot)$ and $\tOmega(\cdot)$ notation hides $\mathrm{polylog}(T)$ factors.} {This quantity, determined by the problem instance, measures how wide-spread the near-optimal arms are in the action space.} In fact, we achieve this regret bound with high probability.

The meaning of this result is best seen via corollaries:

\begin{itemize}
\item
We recover the optimal \emph{worst-case} regret bounds for the adversarial version. Prior work \citep{Bobby-nips04,LipschitzMAB-stoc08,LipschitzMAB-JACM,xbandits-nips08-conf,xbandits-nips08} obtains \refeq{eq:regret-generic} for the $d$-dimensional unit cube, and more generally \refeq{eq:regret-generic} with $z = \covdim$, the covering dimension of the action space. The latter bound is the best possible for any given action space. We recover it in the sense that $\advzoomdim\leq \covdim$. Moreover, we match the worst-case optimal regret
    $\tildeO(\sqrt{KT})$
for instances with $K<\infty$ arms and any metric space.

\item We recover the optimal \emph{instance-dependent} regret bound from prior work on the stochastic version \citep{LipschitzMAB-stoc08,LipschitzMAB-JACM, xbandits-nips08-conf,xbandits-nips08}. This bound is \refeq{eq:regret-generic} with $z=\zoomdim$, an instance-dependent quantity called the \emph{zooming dimension}, and it is the best possible for any given action space and any given value of $\zoomdim$ \citep{contextualMAB-colt11}. $\zoomdim$ can be anywhere between $0$ and $\covdim$, depending on the problem instance. We prove that, essentially, $\advzoomdim= \zoomdim$ for stochastic rewards.

\item Our regret bound can similarly improve over the worst case even for adversarial rewards.
    In particular, we may have $\advzoomdim=0$ for arbitrarily large $\covdim$, even if the  reward functions change substantially. Then we obtain $\tildeO(\sqrt{T})$ regret, as if there were only two arms.

\end{itemize}

Adaptive discretization algorithms from prior work \citep{LipschitzMAB-stoc08,LipschitzMAB-JACM, xbandits-nips08-conf,xbandits-nips08} do not extend to the adversarial version. For example, specializing to $K$-armed bandits with uniform metric $\dist \equiv 1$, these algorithms reduce to a standard algorithm for stochastic bandits \citep[\term{UCB1},][]{bandits-ucb1}, which fails badly for many simple instances of adversarial rewards.

{The per-round running time of our algorithm is
    $\tildeO\rbr{T^{d/(d+2)}}$,
where $d = \covdim$, matching the running time of uniform discretization (with \ExpThree \citep{bandits-exp3}, a standard algorithm for adversarial bandits). In fact, we obtain a better running time when $\advzoomdim<\covdim$.}

\xhdr{Adversarial Zooming Dimension.}
The new notion of $\advzoomdim$ can be defined in a common framework with $\covdim$ and $\zoomdim$ from prior work. All three notions are determined by the problem instance, and talk about \emph{set covers} in the action space. Each notion specifies particular subset(s) of arms to be covered, denoted $\arms_\eps\subset \arms$, $\eps>0$, and counts how many ``small" subsets are needed to cover each $\arms_\eps$. For a parameter $\gamma>0$ called the \emph{multiplier}, the respective ``dimension" is
\begin{align}\label{eq:dims-generic}
\inf\cbr{d\geq 0:\; \text{$\arms_\eps$ can be covered with  $\gamma\cdot \eps^{-d}$ sets of diameter at most $\eps$},
\quad\forall \eps>0}.
\end{align}
Generally, a small ``dimension" quantifies the simplicity of a problem instance.

The covering dimension $\covdim$ has $\arms_\eps \equiv \arms$. The intuition comes from the $d$-dimensional cube, for which $\covdim=d$.
\footnote{More formally, the covering dimension of $([0,1]^d,\,\ell_p)$, $p\geq 1$ is $d$, with multiplier $\gamma = \text{poly}(d,p)$.}
Thus, we are looking for the covering property enjoyed by the unit cube. Note that $\covdim$ is determined by the action space alone, and is therefore known to the algorithm.

Both $\zoomdim$ and $\advzoomdim$ are about covering near-optimal arms. Each subset $\arms_\eps$ comprises all arms that are, in some sense, within $\eps$ from being optimal. These subsets may be easier to cover compared to $\arms$; this may reduce~\refeq{eq:dims-generic} compared to the worst case of $\covdim$.

The zooming dimension $\zoomdim$ is only defined for stochastic rewards. It focuses on the standard notion of \emph{stochastic gap} of an arm $x$ compared to the best arm:
    $\gapIID(x) := \max_{y\in \arms}\E[g_t(y)]- \E[g_t(x)]$.
Each subset $\arms_\eps$ is defined as the set of all arms $x\in\arms$ with
    $\gapIID(x) \leq O(\eps)$.

$\advzoomdim$ extends $\zoomdim$ as follows. The \emph{adversarial gap} of a given arm $x$ measures this arm's suboptimality compared to the best arm on a given time-interval $[0,t]$. Specifically,
\begin{align}\label{eq:AdvGap-defn}
\gap_t(x) :=
    \tfrac{1}{t}\textstyle \max_{y\in\arms}
        \sum_{\tau\in [t]} g_\tau(y) - g_\tau(x).
\end{align}
Given $\eps>0$, an arm $x$ is called inclusively $\eps$-optimal if
    $\gap_t(x) <\calO(\eps\, \ln^{3/2} T)$
for some end-time $t>\Omega(\eps^{-2})$;
the precise definition is spelled out in \refeq{eq:eps-optmal-defn}.
In words, we include all arms whose adversarial gap is sufficiently small at some point in time. It suffices to restrict our attention to a  \emph{representative set}  of arms $\reprset\subset \arms$ with
    $|\reprset|\leq O(T^{1+\covdim})$,
specified in the analysis.%
\footnote{Essentially, $\reprset$ contains a uniform discretization for scale $\nicefrac{1}{T}$ and also the local optima for such discretization.}
Thus, the subset $\arms_\eps$ is defined as the set of all arms $x\in \reprset$ that are inclusively $\eps$-optimal.

By construction,
    $\advzoomdim\leq\covdim$
for any given multiplier $\gamma>0$. For stochastic rewards, $\advzoomdim$ coincides with $\zoomdim$ up to a $\text{polylog}\rbr{T,\, |\reprset|}$ multiplicative change in $\gamma$.

The definition of $\advzoomdim$ is quite flexible. First, we achieve the stated regret bound for all $\gamma>0$ at once, with a multiplicative $\gamma^{1/(z+2)}$ dependence thereon. Second, we could relax \eqref{eq:dims-generic} to hold only for $\eps$ smaller than some threshold $\theta$; the regret bound increases by
    $+\tildeO(\sqrt{T\,\theta^{-\covdim}})$.

\xhdr{Examples.} We provide a flexible family of examples with small $\advzoomdim$. Fix an arbitrary action space $(\calA,\calD)$ and time horizon $T$. Consider $M$ problem instances with stochastic rewards, each with $\zoomdim\leq d$. Construct an instance with adversarial rewards, where each round is assigned in advance to one of these stochastic instances. This assignment can be completely arbitrary: \eg the stochastic instances can appear consecutively in ``phases" of arbitrary duration, or they can be interleaved in an arbitrary way. Then $\advzoomdim\leq d$ for constant $M,d$ under some assumptions.

In particular, we allow arbitrary disjoint subsets $S_1 \LDOTS S_M\subset \arms$ such that each stochastic instance $i\in [M]$ can behave arbitrarily on $S_i$ as long as the spread between the largest and smallest mean rewards exceeds a constant. All arms outside $S_i$ receive the same ``baseline" mean reward, which does not exceed the mean rewards inside $S_i$. The analysis of this example is somewhat non-trivial, and separate from the main regret bound \eqref{eq:regret-generic}.

\xhdr{Challenges and Techniques.}
We build on the high-level idea of \emph{zooming} from prior work on the stochastic version \citep{LipschitzMAB-stoc08,LipschitzMAB-JACM, xbandits-nips08-conf,xbandits-nips08}, but provide a very different implementation of this idea. We maintain a partition of the action space into ``active regions'', and refine this partition adaptively over time. We ``zoom in'' on a given region by partitioning it into several ``children'' of half the diameter; we do it only if the sampling uncertainty goes below the region's diameter. In each round, we select an active region according to (a variant of) a standard algorithm for bandits with a fixed action set, and then sample an arm from the selected region according to a fixed, data-independent rule. The standard algorithm we use is $\ExpThreeP$ \citep{bandits-exp3}; prior work on stochastic rewards used \UcbOne \citep{bandits-ucb1}.

Adversarial rewards bring about several challenges compared to the stochastic version.
First,
the technique in \ExpThreeP does not easily extend to variable number of arms (when the action set is increased via ``zooming''), whereas the technique in \UcbOne does, for stochastic rewards.
Second,
the sampling uncertainty is not directly related to the total probability mass allocated to a given region. In contrast, this relation is straightforward and crucial for the stochastic version.
Third,
the adversarial gap is much more difficult to work with. Indeed, the analysis for stochastic rewards relies on two easy but crucial steps --- bounding the gap for regions with small sampling uncertainty, and bounding the ``total damage'' inflicted by all small-gap arms --- which break adversarial rewards.

These challenges prompt substantial complications in the algorithm and the analysis. For example, to incorporate ``children'' into the multiplicative weights analysis, we split the latter into two steps: first we update the weights, then we add the children. To enable the second step, we partition the parent's weight equally among the children. Effectively, we endow each child with a copy of the parent's data, and we need to argue that the latter is eventually diluted by the child's own data.

Another example: to argue that we only ``zoom in'' if the parent has small adversarial gap, we need to enhance the ``zoom-in rule'': in addition to the ``aggregate'' rule (the sampling uncertainty must be sufficiently small), we need the ``instantaneous'' one: the current sampling probability must be sufficiently large, and it needs to be formulated in just the right way. Then, we need to be much more careful about deriving the ``zooming invariant'', a crucial property of the partition enforced by the ``zoom-in rule''. In turn, this derivation necessitates the algorithm's parameters to change from round to round, which further complicates the multiplicative weights analysis.

An important part of our contribution is formalizing what we mean by ``nice" problem instances, and boiling the analysis down to an easily interpretable notion such as $\advzoomdim$.

\xhdr{Applications: Dynamic Pricing and Auction Tuning.}
We apply our machinery to dynamic pricing and tuning a reserve price in a truthful auction. Here, an algorithm is a seller with unlimited supply of identical items. Standard examples in the literature include digital items (\eg songs, movies or software) and advertising opportunities. In each round $t$, the algorithm chooses a price $x_t\in [0,1]$, offers one item for sale, and observes its reward: the revenue from the sale, if any.

We consider three problem variants, under a common framing of an adversarial bandit problem with action set $[0,1]$. The basic  variant is \emph{dynamic pricing}, where the sale price is $x_t$ in each round $t$. Its stochastic version is well-studied (see Related Work).
Next, we consider a repeated second-price auction with reserve price $x_t$, extending the stochastic version studied in \citet{RepeatedAuctions-soda13}. Third, we consider divisible goods, \eg advertisement opportunities that can be divided among advertisers. We start with an arbitrary truthful single-parameter auction, and ``tune it" with reserve price $x_t$. The precise technical setup for all three variants is spelled out in Section~\ref{sec:apps-defns}.

We obtain regret bound~\eqref{eq:regret-generic} for each problem variant. We remark that our analysis does \emph{not immediately apply} because the Lipschitz condition~\eqref{eq:Lip} is not inherently satisfied; yet, we recover \eqref{eq:regret-generic} \emph{without any additional assumptions}. We accomplish this by leveraging the ``one-sided" Lipschitz property satisfied in each problem variant, which suffices for our analysis: \asedit{for some constant $c_0\geq 1$,}
\begin{align}\label{eq:Lip-DP}
g_t(x) -g_t(x') \leq c_0(x-x') \quad
\text{for any prices $x,x'\in [0,1]$ with $x>x'$}.
\end{align}
This property is immediate for dynamic pricing \asedit{(with $c_0=1$)}, but requires some work for the other two variants.

$\advzoomdim$ can improve over the worst case of $\covdim=1$, \eg in the family of examples described above. We obtain regret $\tildeO(T^{2/3})$ in the worst case, which is optimal even for stochastic dynamic pricing \citep{KleinbergL03}. Prior work (in the adversarial setting) is limited to uniform discretization for dynamic pricing \citep{KleinbergL03}, obtaining regret rate of $\tildeO(T^{2/3})$. Adaptive discretization was only known for the stochastic version (and only for dynamic pricing).

We also consider a \emph{multi-product} generalization of dynamic pricing. Here, the seller has $d$ products for sale, with infinite supply of each, and posts a separate price $x_{i,t}\in[0,1]$ for each product $i$ and each round $t$. Each customer only buys one item of a single product (if any). %
Again, we obtain regret bound~\eqref{eq:regret-generic}. The challenge is that we do \emph{not even have one-sided Lipschitzness} \eqref{eq:Lip-DP}, contrary to the ``single-dimensional" applications discussed above. Instead, we leverage a  non-trivial parameterization of \advzoom and a type of ``monotonicity'' present in the problem, and obtain a certain  Lipschitz-like property (\refeq{eq:sketch-lipschitz-prop}) that underpins our analysis.

\xhdr{Remarks.}
We obtain an \emph{anytime} version, with similar regret bounds for all time horizons $T$ at once, using the standard \emph{doubling trick}: in each phase $i\in\N$, we restart the algorithm with time horizon $T=2^i$. The only change is that the definition of \advzoomdim redefines $\arms_\eps$ to be the set of all arms that are inclusively $\eps$-optimal within some phase.

Our regret bound depends sublinearly on the \emph{doubling constant} $\DblC$: the smallest $C\in\N$ such that any ball can be covered with $C$ sets of at most half the diameter. Note that $\DblC = 2^d$ for a $d$-dimensional unit cube, or any subset thereof. The doubling constant has been widely used in theoretical computer science, e.g., see
\citet{Slivkins-focs04} for references.

\subsection{Related Work}\label{sec:rel-work}

\xhdr{Lipschitz Bandits.}
Our paper is primarily related to the vast literature on Lipschitz bandits, which were introduced in \citep{agrawal-bandits-95} for action space $[0,1]$, and optimally solved in the worst case via uniform discretization in \citep{Bobby-nips04,LipschitzMAB-stoc08,LipschitzMAB-JACM, xbandits-nips08-conf,xbandits-nips08}.
Adaptive discretization was introduced in \citep{LipschitzMAB-stoc08,LipschitzMAB-JACM, xbandits-nips08-conf,xbandits-nips08}, and subsequently extended to contextual bandits \citep{contextualMAB-colt11}, ranked bandits \citep{ZoomingRBA-icml10}, and contract design for crowdsourcing \citep{RepeatedPA-ec14}.
(The terms ``zooming algorithm/dimension'' trace back to \cite{LipschitzMAB-stoc08,LipschitzMAB-JACM}.)
\citet{LipschitzMAB-stoc08,LipschitzMAB-JACM} consider regret rates with instance-dependent constants (\eg $\log(t)$ for finitely many arms), and build on adaptive discretization ideas to characterize the worst-case optimal regret rates for any given metric space. Pre-dating their work on adaptive discretization, \citet{Kocsis-ecml06,yahoo-bandits07,Munos-uai07} allow a ``taxonomy'' on arms without any numerical information (and without any non-trivial regret bounds).

Several papers recover adaptive discretization guarantees under mitigated Lipschitz conditions:
when Lipschitzness only holds near the best arm $x^{\star}$ or when one of the two arms is $x^{\star}$ \citep{LipschitzMAB-stoc08,LipschitzMAB-JACM, xbandits-nips08-conf,xbandits-nips08};
when the algorithm is only given a taxonomy of arms, but not the metric \citep{ImplicitMAB-nips11,Bull-bandits14};
when the actions correspond to contracts offered to workers, and no Lipschitzness is assumed \citep{RepeatedPA-ec14},
and when expected rewards are H\"{o}lder-smooth with an unknown exponent \citep{Locatelli-colt18}.

In other work on mitigating Lipschitzness, \citet{Bubeck-alt11} recover the optimal worst-case bound with unknown Lipschitz constant.
\citet{Munos-nips11,Valko-icml13,Grill-nips15} consider adaptive discretization in the ``pure exploration'' version, and allow for a parameterized class of metrics with unknown parameter. \citet{SmoothedRegret-colt19} posit a weaker, ``smoothed'' benchmark and recover adaptive discretization-like regret bounds without any Lipschitz assumptions.

Contrary to our paper, all work discussed above assumes \emph{stochastic} rewards. Adaptive discretization has been extended to expected rewards with bounded change over time \citep{contextualMAB-colt11}, and to a version with ergodicity and mixing assumptions \citep{Azar-icml14}. For Lipschitz bandits with adversarial rewards --- which are the central focus of the present paper --- the uniform discretization approach easily extends \citep{Bobby-nips04}, and \emph{nothing else is known} prior to our work.%
\footnote{We note that \citet{Munos-ecml10} achieve $O(\sqrt{T})$ regret for the full-feedback version.}

\citet{AlgoChaining-colt17} obtain improved regret rates for some other adversarial continuum-armed bandit problems. Their results, which use different techniques than ours, require more-than-bandit feedback of a specific shape; this holds \eg for learning reserve prices in contextual second-price auctions, but not for dynamic pricing.

Algorithms for continuum-armed and Lipschitz bandits tend to have intensive computational and storage requirements, even for adaptive discretization. Several recent papers
\citep{feng2022lipschitz,zhulipschitz,SmoothedRegret-nips20,zhu2022contextual}
tackle this challenge from various angles.

While Lipschitz bandits only capture ``local'' similarity between arms, other structural models such as convex bandits
\citep[\eg][]{FlaxmanKM-soda05,AgarwalFHKR-nips11,bubeck2017kernel}
and linear bandits
\citep[\eg][]{DaniHK-colt08,AbernethyHR-colt08,Csaba-nips11}
allow for \emph{long-range inferences}: by observing some arms, an algorithm learns something about other arms that are far away. This is why $\tildeO(\sqrt{T})$ regret rates are achievable in adversarial versions of these problems, via different techniques.

\xhdr{Best-of-both-worlds: stochastic vs adversarial.}
Our algorithm achieves optimal worst-case performance for adversarial rewards (in terms of $\covdim$) and matches best-known guarantees for stochastic rewards (in terms of $\zoomdim$). This theme --- performing optimally in both stochastic and adversarial environments, without knowing the environment type in advance --- is known as \emph{best-of-both worlds}. Such results for $K$-armed bandits were obtained in \citet{BestofBoth-colt12} and refined in 
\citet{Auer-colt16,BestofBoth-icml14,Seldin-colt17,Zimmert-jmlr21} (focusing on logarithmic regret for stochastic rewards and $\sqrt{T}$ regret for adversarial rewards). Results of the same flavor were obtained for other domains, \eg  combinatorial semi-bandits \citep{Haipeng-BoB019}, linear bandits \citep{wei2021best,zhou2022almost}, and bandits with knapsacks \citep{Castiglioni-icml22}. 

Further, we obtain improved guarantees in the ``intermediate regime" when the rewards are neither stochastic nor worst-case adversarial (via $\advzoomdim)$. In the literature on ($K$-armed) adversarial bandits, this theme has been addressed by bounding the ``power" of the adversary in various ways
\citep[\eg][]{Hazan-soda09,contextualMAB-colt11,Haipeng-colt18,Thodoris-stoc18}.

\xhdr{Dynamic Pricing.}
Dynamic pricing has a long history in operations research \citep[survey:][]{Boer-survey15}. The regret-minimizing version was optimally solved in the worst case in \citet{KleinbergL03}, via uniform discretization and an intricate lower bound. Extensions to limited supply were studied in \citet{BZ09,BesbesZeevi-OR11,DynPricing-ec12,Wang-OR14,BwK-focs13}. Departing from the stochastic version, \cite{BesbesZeevi-OR11,Keskin-Zeevi-MathOR17,leme2021learning} allow bounded change over time. 
 \citet{cesa2019dynamic} study a version of the problem where the learner is facing finitely many unknown user valuations. \cite{Cohenetal,Lobeletal,intrinsicvol18,LiuetalSODA21,STOC21} study the \emph{contextual} version of the problem, where the value for an item is linearly dependent on a publicly observed context, and a common ground-truth value-vector which is the same for all the agents.

\emph{Learning to bid} in adversarial repeated auctions is related as an online learning problem \asedit{with actions that correspond to monetary amounts}. However, prior work is incomparable with ours, as it assumes more-than-bandit feedback. Specifically,~\cite{Weed-colt16} posit full feedback whenever the algorithm wins the second-price auction, while~\cite{Chara-EC18} do so for the generalized second price auction. \cite{Hanetal} assume full feedback in a learning setting for first-price auctions, which essentially obviates the need for adaptive discretization. Instead, in our model the algorithm only observes whether a purchase happened.

\subsection{Paper Organization}
Our algorithm is presented in Section~\ref{sec:algo}.
Sections~\ref{sec:results} and~\ref{sec:analysis}
state our results and outline the regret analysis. The detailed proofs can be found in the appendices: the weight-update step (Appendix~\ref{app:MWU}) and full regret analysis (Appendix~\ref{sec:analysis-details}). %
The equivalence of $\advzoomdim$ and $\zoomdim$ under stochastic rewards is found in Section~\ref{app:advzoomdim}. The examples with small $\advzoomdim$ are spelled out in Section~\ref{app:examples} and proved in Appendix~\ref{app:examples-proof}. Applications to dynamic pricing and auction tuning are in Section~\ref{sec:applications}.

While the results in Section~\ref{sec:results} are stated in full generality, we present the algorithm and the analysis for the special case of $d$-dimensional unit cube
for ease of exposition. The extension to arbitrary metric spaces requires a careful decomposition of the action space, but no new ideas otherwise; it is outlined in Appendix~\ref{sec:metrics}.

\section{Our Algorithm: Adversarial Zooming}
\label{sec:algo}

For ease of presentation, we develop the algorithm for the special case of $d$-dimensional unit cube,
    $(\arms,\dist) = ([0,1]^d,\,\ell_\infty)$.
Our algorithm partitions the action space into axis-parallel hypercubes. More specifically, we consider a rooted directed tree, called the \emph{zooming tree}, whose nodes correspond to axis-parallel hypercubes in the action space. The root is $\arms$, and each node $u$ has $2^d$ children that correspond to its quadrants. For notation, $\nodes$ is the set of all tree nodes, $\c(u)$ is the set of all children of node $u$, and
    $L(u) = \max_{x,y\in u} \dist(x,y)$
is its diameter in the metric space.%
\footnote{{For $\dist=\ell_\infty$, $L(u)$ is simply the edge-length of the hypercube.}}
Note that $L(\cdot)\leq 1$.

On a high level, the algorithm operates as follows. We maintain a set $A_t\subset \nodes$ of tree nodes in each round $t$, called \emph{active nodes}, which partition the action space. We start with a single active node, the root. After each round $t$, we may choose some node(s) $u$ to ``zoom in'' on according to the \emph{zoom-in rule}, in which case we de-activate $u$ and activate its children. We denote this decision with
    $z_t(u) = \1\{ \text{zoom in on $u$ at round $t$}\}$.
In each round, we choose an active node $U_t$ according to the \emph{selection rule}. Then, we choose a representative arm
    $x_t  = \repr_t(U_t) \in U_t$
to play in this round. The latter choice can depend on $t$, but not on the algorithm's observations; the choice could be randomized, \eg we could choose uniformly at random from $U_t$.

The main novelty {of our algorithm} is in the zoom-in rule. However, presenting it requires some scaffolding: we need to present the rest of the algorithm first. The selection rule builds on \ExpThree \citep{bandits-exp3}, a standard algorithm for adversarial bandits.
We focus on \ExpThreeP, a variant that uses ``optimistic'' reward estimates, the
inverse propensity score (IPS) plus a ``confidence term'' (see \refeq{eq:alg-ghat}). This is because we need a similar ``confidence term'' from the zooming rule to ``play nicely'' with the \ExpThree machinery. If we never zoomed in \emph{and} used $\eta=\eta_t$ for multiplicative updates in each round, then our algorithm would essentially coincide with \ExpThreeP. Specifically, we maintain weights $w_{t,\eta}(u)$ for each active node $u$ and round $t$, and update them multiplicatively, as per \refeq{eq:alg:MW}. In each round $t$, we define a probability distribution $p_t$ on the active nodes, proportional to the weights $w_{t,\eta_t}$. We sample from this distribution, mixing in some low-probability uniform exploration.

\newcommand{\ParamInt}{(0,\nicefrac{1}{2}]}

We are ready to present the pseudocode (Algorithm~\ref{algo:adv-zoom}). The algorithm has parameters
    $\beta_t,\gamma_t,\eta_t \in \ParamInt$
for each round $t$; we fix them later in the analysis as a function of $t$ and $|A_t|$. Their meaning is that $\beta_t$ drives the ``confidence term'', $\gamma_t$ is the probability of uniform exploration, and $\eta_t$ parameterizes the multiplicative update. To handle the changing parameters $\eta_t$, we use a trick from \citep{Bubeck-thesis,Bubeck-survey12}: conceptually, we maintain the weights $w_{t,\eta}$ for all values of $\eta$ simultaneously, and plug in $\eta = \eta_t$ only when we compute distribution $p_t$. Explicitly maintaining all these weights is cleaner and mathematically well-defined, so this is what our pseudocode does.

\LinesNumberedHidden
\begin{algorithm2e}[t]
\caption{\textsc{AdversarialZooming}}\label{algo:adv-zoom}
\DontPrintSemicolon
\SetAlgoLined
{\bf Parameters:}
    $\beta_t,\gamma_t,\eta_t \in \ParamInt$
    ~~~for each round $t$.\;
{\bf Variables:}
    active nodes $A_t\subset \nodes$,
    weights $w_{t,\eta}:\nodes\to (0,\infty]$
    ~~~$\forall$ round $t$, $\eta\in\ParamInt$\;
{\bf Initialization:}
    $w_1(\cdot) = 1$
and
    $A_1 = \{\term{root}\}$
and
    $\beta_1 = \gamma_1 = \eta_1 = \nicefrac{1}{2}$.\;
\For{$t = 1, \dots, T$}{
    $p_t \leftarrow \cbr{
                \text{distribution $p_t$ over $A_t$,
                proportional to weights $w_{t,\eta_t}$}
            }$.\;
    Add uniform exploration: distribution
        $ \pi_t(\cdot) \leftarrow
            \rbr{1 - \gamma_{t}}\, p_t(\cdot) + \gamma_t/|A_t|$
    over $A_t$. \;
    Select a node $U_t \sim \pi_t(\cdot)$, and then its representative: $x_t = \repr(U_t)$. \tcp*{selection rule}
    Observe the reward $g_t(x_t) \in [0,1]$.\;
    \For{$u \in A_t$}{
        \vspace{-3mm}
        \begin{align}\label{eq:alg-ghat}
        \hg_t(u)
            = \frac{g_t(x_t)\cdot \1 \left\{u = U_t \right\}}{\pi_t(u)}
         + \frac{\left(1 + 4\log T \right)\beta_{t}}{\pi_t(u)}
         \quad\text{\tcp*{IPS + "conf term"}}
        \end{align}
        \vspace{-7mm}
        \begin{align}\label{eq:alg:MW}
        w_{t+1,\,\eta}(u) = w_{t,\eta}(u)
            \cdot \exp \rbr{\eta\cdot \hg_t(u)},\; \forall \eta\in \ParamInt \qquad\text{\tcp*{MW update}}
        \end{align}
        \If(\tcp*[f]{zoom-in rule}){$z_t(u) = 1$}{
            $A_{t+1} \leftarrow A_t \cup \c(u)\setminus \{u\}$
            \tcp*{activate children of $u$, deactivate $u$}
            $w_{t+1}(v) = w_{t+1}(u)/|\c(u)|$
            ~~for all $v \in \c(u)$.
            \tcp*{split the weight}
        }
        }
}
\end{algorithm2e}

For the subsequent developments, we need to carefully account for the ancestors of the currently active nodes. Suppose node $u$ is active in round $t$, and we are interested in some earlier round $s\leq t$. Exactly one ancestor of $u$ in the zooming tree has been active then; we call it the \emph{active ancestor} of $u$ and denote $\act_s(u)$. If $u$ itself was active in round $s$, we write  $\act_s(u) = u$.

For computational efficiency, we do not explicitly perform the multiplicative update \eqref{eq:alg:MW}. Instead, we recompute the weights $w_{t,\,\eta_t}$ from scratch in each round $t$, using the following characterization:%
\footnote{We also use Lemma~\ref{lem:wu-rule} in several places in the analysis. The proof can be found in Appendix~\ref{app:MWU}.}

\begin{restatable}{lem}{weightupdate}
\label{lem:wu-rule}
Let
 $\childprod(u) = \prod_v |\c(v)|$,
where $v$ ranges over all ancestors of node $u$ in the zooming tree (not including $u$ itself). Then for all nodes $u \in A_t$, rounds $t$, and parameter values $\eta\in\ParamInt$,
\begin{align}\label{eq:weight-characterization}
w_{t+1,\,\eta}(u)
    =  \childprod^{-1}(u) \cdot \exp
    \textstyle
        \rbr{ \eta \sum_{\tau\in[t]} \hg_\tau(\act_\tau(u))}.
\end{align}
\end{restatable}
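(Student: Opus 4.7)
The claim is a clean bookkeeping identity, so the natural approach is induction on the round index $t$, tracing the two ways the weights evolve: the multiplicative update~\eqref{eq:alg:MW}, and the splitting $w_{t+1}(v)=w_{t+1}(u)/|\c(u)|$ at a zoom-in. The factor $\childprod^{-1}(u)$ in~\eqref{eq:weight-characterization} should absorb exactly the cumulative splitting incurred along the ancestor chain leading to $u$, while the sum $\sum_{\tau\in[t]}\hg_\tau(\act_\tau(u))$ should absorb exactly the MW updates received by $u$'s active ancestor in each round. The plan is to verify that both bookkeeping pieces are preserved by one step of the algorithm.

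For the base case $t=0$, the only active node is the root, which has no ancestors, so $\childprod(\root)=1$ (empty product) and the exponential sum is empty; hence $w_{1,\eta}(\root)=1$ agrees with~\eqref{eq:weight-characterization}. For the inductive step, fix $t\ge 1$ and assume the identity holds for all $u\in A_t$. First consider any $u\in A_t$ that is not zoomed in at round $t$, so $u\in A_{t+1}$ as well. Since $u$ is active at round $t$, we have $\act_t(u)=u$, and~\eqref{eq:alg:MW} combined with the inductive hypothesis gives
\begin{align*}
w_{t+1,\eta}(u)
 &= w_{t,\eta}(u)\cdot \exp\!\bigl(\eta\,\hg_t(u)\bigr) \\
 &= \childprod^{-1}(u)\cdot \exp\!\Bigl(\eta\textstyle\sum_{\tau\in[t-1]}\hg_\tau(\act_\tau(u))\Bigr)\cdot\exp\!\bigl(\eta\,\hg_t(\act_t(u))\bigr),
\end{align*}
which is exactly the desired expression at time $t+1$.

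Now consider a node $u\in A_t$ with $z_t(u)=1$, so $u$ is deactivated and each $v\in\c(u)$ is activated at round $t+1$. Two bookkeeping checks are needed. First, since $u$ is an ancestor of $v$ and the only ancestor added when passing from $u$'s chain to $v$'s chain, $\childprod(v)=\childprod(u)\cdot |\c(u)|$; this is exactly the factor by which the split line in the algorithm divides $w_{t+1,\eta}(u)$. Second, for every $\tau\le t$ the active ancestor of $v$ at round $\tau$ coincides with that of $u$, because $u$ itself was active at round $\tau=t$ and identically an ancestor of $v$ at earlier rounds; in particular $\act_t(v)=u$, so $\hg_t(\act_t(v))=\hg_t(\act_t(u))=\hg_t(u)$. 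Combining the split with the case already proved for $u$ therefore yields
\begin{align*}
w_{t+1,\eta}(v)
 = \frac{w_{t+1,\eta}(u)}{|\c(u)|}
 = \childprod^{-1}(v)\cdot \exp\!\Bigl(\eta\textstyle\sum_{\tau\in[t]}\hg_\tau(\act_\tau(v))\Bigr),
\end{align*}
completing the induction.

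The only subtlety I expect is getting the round indexing right at a zoom-in: the MW update~\eqref{eq:alg:MW} is applied to $u$ first, and only then is $u$ replaced by its children, so the children inherit the exponent $\sum_{\tau\in[t]}\hg_\tau(\act_\tau(u))$ (including $\tau=t$). Once the identities $\childprod(v)=|\c(u)|\cdot\childprod(u)$ and $\act_\tau(v)=\act_\tau(u)$ for $\tau\le t$ are stated cleanly, there is no real obstacle — the calculation is essentially a one-line check in each of the two cases.
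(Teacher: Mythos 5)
Your proof is correct, but it proceeds by a genuinely different induction than the paper's. You induct on the round index $t$: the inductive hypothesis concerns all active nodes at the previous round, and the inductive step verifies that one step of the algorithm (the multiplicative update followed, possibly, by the split) preserves the identity, covering separately the nodes that survive and the freshly activated children. The paper instead fixes the target round and performs \emph{structural} induction along the ancestor path in the zooming tree: the base case is the root, and the inductive step assumes the formula holds for the parent $\xi$ of $u$ at the round $\tau_1(\xi)$ when $\xi$ was last active, then ``fast-forwards'' across $u$'s entire lifespan using the fact that $u$ is never split during $[\tau_0(u),t]$. Both arguments hinge on the same two bookkeeping facts you isolate --- that the split divides by $|\c(\xi)|$ so that $\childprod(v)=|\c(\xi)|\cdot\childprod(\xi)$, and that $\act_\tau(v)=\act_\tau(\xi)$ for $\tau\leq\tau_1(\xi)$ --- so the content is the same. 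Your temporal induction is arguably the more elementary route, since it mirrors the algorithm's execution round by round and never needs to reason about a node's entire lifespan at once; the paper's structural route localizes the argument to a single ancestor path and is perhaps closer in spirit to how the quantity $\childprod(u)$ is defined. One small presentational wrinkle in your write-up: when you write ``assume the identity holds for all $u\in A_t$'' and then invoke $w_{t,\eta}(u)$, the hypothesis you are actually using is the lemma with $t$ replaced by $t-1$ (equivalently, the shifted claim that $w_{t,\eta}(u)=\childprod^{-1}(u)\exp(\eta\sum_{\tau\in[t-1]}\hg_\tau(\act_\tau(u)))$ for all $u\in A_{t-1}\cup A_t$); stating that shifted predicate explicitly would make the indexing airtight, but the underlying reasoning is sound.
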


\vspace{-2mm}

\xhdr{Remarks.}
We make no restriction on how many nodes $u$ can be ``zoomed-in'' in any given round. However, our analysis implies that we cannot immediately zoom in on any ``newborn children''.

When we zoom in on a given node, we split its weight equally among its children. Maintaining the total weight allows the multiplicative weights analysis to go through, and the equal split allows us to conveniently represent the weights in Lemma~\ref{lem:wu-rule} (which is essential in the multiplicative weights analysis, too). An undesirable consequence is that we effectively endow each child with a copy of the parent's data; we deal with it in the analysis via \refeq{eq:sketch-inherited-bias}.

The meaning of the confidence term in \refeq{eq:alg-ghat} is as follows. Define the \emph{total confidence term}
\begin{align}\label{eq:confTot}
\CONFtot_t(u) :=
\textstyle 1/\beta_{t} +
 \sum_{\tau \in [t]} \; \beta_{\tau}/ \pi_\tau(\act_\tau(u)).
\end{align}
Essentially, we upper-bound the cumulative gain from node $u$ up to time $t$ using
\begin{align}\label{eq:UCB-u}
\textstyle
\CONFtot_t(u) + \sum_{\tau\in [t]}\; \ips_t(\act_t(u)),
\quad\text{where}\quad
\ips_t(u) := g_t(x_t)\cdot \1 \left\{u = U_t \right\}/\pi_t(u).
\end{align}
The $+4\log T$ term in \refeq{eq:alg-ghat} is needed to account for the ancestors later in the analysis; it would be redundant if there were no zooming and the active node set were fixed throughout.

\xhdr{The Zoom-In Rule.} Intuitively, we want to zoom in on a given node $u$ when its per-round sampling uncertainty gets smaller than its diameter $L(u)$, in which case exploration at the level of $u$ is no longer productive. A natural way to express this is
    $\CONFtot_t(u)\leq t\cdot L(u)$,
which we call the \emph{aggregate} zoom-in rule. However, it does not suffice: we also need an \emph{instantaneous} version which asserts that the current sampling probability is large enough. Making this precise is somewhat subtle. Essentially, we lower-bound $\CONFtot_t(u)$ as a sum of ``instantaneous confidence terms''
\begin{align}\label{eq:confInst}
\CONFinst_\tau(u) :=
    \tbeta_\tau + \beta_{\tau}/\pi_\tau(\act_\tau(u)),
\quad \tau\in[t],
\end{align}
where $\tbeta_\tau\in \ParamInt$ are new parameters. We require each such term to be at most $L(u)$. In fact, we require a stronger upper bound $e^{L(u)}-1$, which plugs in nicely into the multipliticative weights argument, and implies an upper bound of $L(u)$. Thus, the zoom-in rule is as follows:
\begin{align}\label{eq:zoom-in-rule}
z_t(u) :=
    \1\cbr{\CONFinst_t(u) \leq e^{L(u)}  - 1}
    \cdot
    \1 \cbr{\CONFtot_t(u) \leq t\cdot L(u)}
\end{align}
Parameters $\tbeta_\tau$ must be well-defined for all $\tau\in[0,T]$ and satisfy the following, for any rounds $t<t'$:
\begin{align}\label{ass:tilde-beta}
\textstyle
\{\; \text{$\tbeta_\tau$ decreases in $\tau$} \;\}
\;\text{and}\;
\{\; \tbeta_t\geq \beta_t \;\}
\;\text{and}\;
 \int_t^{t'} \tbeta_\tau \ud \tau \leq \frac{1}{\beta_{t'}} - \frac{1}{\beta_t}.
\end{align}
We cannot obtain the third condition of~\refeq{ass:tilde-beta} with equality because parameters $\beta_t$ and $\tbeta_t$ depend on $|A_t|$,
and the latter is not related to $t$ with a closed form solution.
\section{Our Results}
\label{sec:results}

\xhdr{Running Time.}
The per-round running time of the algorithm is
    $\tildeO\rbr{T^{d/(d+2)}}$,
where $d = \covdim$. Indeed, given Lemma~\ref{lem:wu-rule}, in each round $t$ of the algorithm we only need to compute the weight $w_{t,\eta}(\cdot)$ for all active nodes and one specific $\eta =\eta_t$. This takes only $O(1)$ time per node (since we can maintain the total estimated reward
    $\sum_{\tau\in[t]} \hg_\tau(\act_\tau(u))$
separately).
So, the per-round running time is $O(|A_T|)$, which is at most
$\tildeO\rbr{T^{d/(d+2)}}$,  as we prove in Lemma~\ref{lem:upp-bound-num-nodes}. Moreover, we obtain an improved bound on $|A_T|$ (and hence on the running time) when $\advzoomdim<\covdim$ and the doubling constant $\DblC$ is $\text{polylog}(T)$, see \refeq{eq:nodes-zoom}.

\xhdr{Regret Bounds.}
Our regret bounds are broken into three steps. First, we state the ``raw'' regret bound in terms of the algorithm's parameters, with explicit assumptions thereon. Second, we tune the parameters and derive the ``intermediate'' regret bound of the form $\tildeO(\sqrt{T\,|A_T|})$. Third, we derive the ``final'' regret bound, upper-bounding $|A_T|$ in terms of $\advzoomdim$. For ease of presentation, we use failure probability $\delta = T^{-2}$; for any known $\delta>0$, regret scales as $\log \nicefrac{1}{\delta}$. The covering dimension is denoted $d$, for some constant multiplier $\gamma_0>0$ (we omit the $\log(\gamma_0)$ dependence). The precise definition of an inclusively $\eps$-optimal arm in the definition of $\advzoomdim$ is that
\begin{align}\label{eq:eps-optmal-defn}
\gap_t(\cdot) < 30 \, \eps \, \ln (T) \, \sqrt{d \, \ln \left( \DblC \cdot T \right)}
\quad \text{for some end-time $t> \eps^{-2}/9$.}
\end{align}

\begin{theorem}\label{thm:main-regr}
Assume the sequences $\{\eta_t\}$ and $\{\beta_t\}$ are decreasing in $t$, and satisfy
\begin{align}\label{eq:param-assns}
 \eta_t \leq \beta_t \leq \gamma_t/|A_t|
 \quad\text{and}\quad
 \eta_t \rbr{ 1 + \beta_t(1 + 4\log T) } \leq \gamma_t/|A_t|.
 \end{align}
With probability at least $1 - T^{-2}$, \advzoom satisfies
\begin{align}
R(T) &\leq \calO(\ln T)\, \rbr{
    \sqrt{d\,T }
    + \frac{1}{\beta_T}
    + \frac{\ln\rbr{\DblC \cdot |A_T|}}{\eta_T}
    + \textstyle \sum_{t \in [T]} \beta_t + \gamma_t\, \ln T}
    \label{eq:thm-raw}
\\&\leq
 \calO\rbr{\sqrt{T\,|A_T|}}\cdot  \ln^2(T)\;
    \sqrt{d \,\ln \rbr{T\,|A_T| } \ln\rbr{ \DblC \, |A_T|}}
    \quad \EqComment{tuning the parameters}
    \label{eq:thm-tuned}
\\&\leq
 \calO\rbr{T^\frac{z+1}{z+2}}
    \cdot
    \rbr{ d^{1/2} \rbr{\gamma\,\DblC}^{1/(z+2)} \ln^5 T },
\label{eq:thm-ZoomDim}
 \end{align}
where $d = \covdim$ and $z = \advzoomdim$ with multiplier $\gamma>0$.
The parameters in \eqref{eq:thm-tuned} are:
\begin{align*}
\beta_t = \tbeta_t = \eta_t
&=
\sqrt{2\ln \rbr{|A_t| \cdot T^3} \ln\rbr{\DblC \cdot |A_t|}}
\;\;/\;\; \sqrt{t\;|A_t|\; d \cdot \ln^2 T},
\\
\gamma_t &=  (2 + 4 \log T)\; |A_t|\cdot \beta_t.
\numberthis{\label{eq:tunings}}
\end{align*}
\end{theorem}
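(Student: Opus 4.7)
My plan is to prove the three bounds sequentially, since \eqref{eq:thm-tuned} follows from \eqref{eq:thm-raw} by substituting the choices in \eqref{eq:tunings}, and \eqref{eq:thm-ZoomDim} follows from \eqref{eq:thm-tuned} by upper-bounding $|A_T|$ in terms of $\advzoomdim$. The bulk of the work lies in \eqref{eq:thm-raw}, which amounts to adapting the standard \ExpThreeP argument to accommodate (i) a time-varying active set produced by the zoom-in rule, (ii) time-varying parameters $\eta_t,\beta_t,\gamma_t$, and (iii) the equal-split step, which effectively endows each newborn child with a copy of the parent's history.

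First I would set up a clean event $\rewE$ via a Freedman/Azuma-style concentration inequality on the IPS martingale $\ips_t(u) - g_t(\repr(u))\cdot\pi_t(u)$ summed over $\tau\in[t]$ and over the active ancestors $\act_\tau(u)$. The $(1+4\log T)\beta_t/\pi_t(u)$ term in \eqref{eq:alg-ghat} is exactly the bonus needed so that on $\rewE$, which holds w.p.\ $\geq 1-T^{-2}$, the cumulative estimate $\sum_{\tau\in[t]} \hg_\tau(\act_\tau(u))$ serves as a UCB for $\sum_\tau g_\tau(\repr(\act_\tau(u)))$, and symmetric lower bounds hold for the played node. Then I would run the multiplicative-weights potential argument on $\Phi_{t,\eta} = \sum_{u\in A_t} w_{t,\eta}(u)$ at a fixed $\eta\in\ParamInt$: the equal-split rule guarantees $\Phi_{t,\eta}$ is preserved through zoom-ins, so only the update \eqref{eq:alg:MW} changes the potential, and the standard inequality $e^x \leq 1 + x + x^2$ (applicable because \eqref{eq:param-assns} forces $\eta_t\hg_t(\cdot)\leq 1$) combined with Lemma~\ref{lem:wu-rule} yields an EXP3-type bound relating $\sum_t \langle p_t,\hg_t\rangle$ to $\sum_t \hg_t(\act_t(u^\star))$ for any comparator leaf $u^\star$, up to an additive $\ln(|A_T|\,\childprod(u^\star))/\eta_T$. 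Choosing $u^\star$ to be a node containing $\opt$ and bounding $\childprod(u^\star)\leq \DblC^{O(\log T)}$ yields the $\ln(\DblC\cdot|A_T|)$ factor. The Lipschitz condition \eqref{eq:Lip} bridges from $\repr(\act_t(u^\star))$ to $\opt$ with an inherited-bias penalty
\begin{align}\label{eq:sketch-inherited-bias}
\textstyle\sum_{t\in[T]} L(\act_t(u^\star)) \leq \calO(\sum_t \beta_t),
\end{align}
since along the ancestor chain of $\opt$ each zoom-in was triggered only when the parent's diameter was charged to the confidence budget. Adding the uniform-exploration cost $\sum_t \gamma_t\log T$ gives \eqref{eq:thm-raw}.

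For the tuning step \eqref{eq:thm-tuned}, I would substitute \eqref{eq:tunings} and verify that \eqref{eq:param-assns} is satisfied for all $t$: monotonicity of the sequences follows because the $\sqrt{t\,|A_t|}$ factor in the denominator is non-decreasing, and the constraint \eqref{ass:tilde-beta} for $\tbeta_\tau$ is preserved by this choice. The four terms inside the parentheses of \eqref{eq:thm-raw} then become equalized, each of order $\sqrt{T\,|A_T|}$ up to the indicated logarithmic factors.

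For \eqref{eq:thm-ZoomDim}, the central observation is that the zoom-in rule \eqref{eq:zoom-in-rule} combined with the clean event forces every node $u$ that was ever zoomed-in on to satisfy $\gap_{t_u}(\bar x) < \calO(L(u)\log^{3/2}T)$ for some $t_u > \Omega(L(u)^{-2})$ and every $\bar x$ inside its parent: the aggregate condition $\CONFtot_{t_u}(u)\leq t_u\cdot L(u)$ and the UCB/LCB \eqref{eq:UCB-u} sandwich the gap by $L(u)$ up to $\log$ factors, matching \eqref{eq:eps-optmal-defn} for $\eps \asymp L(u)$. Consequently, the active nodes at each dyadic diameter scale $\eps = 2^{-i}$, together with $\DblC$ children each, form a cover of $\arms_\eps$ by sets of diameter $\eps$; by the definition of $\advzoomdim$ there are at most $\gamma\,\DblC\,\eps^{-z}$ such nodes. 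Summing geometrically, $|A_T| \leq \tildeO\bigl((\gamma\DblC)^{2/(z+2)} T^{z/(z+2)}\bigr)$ after balancing against \eqref{eq:thm-tuned}, which yields \eqref{eq:thm-ZoomDim}. The main obstacle, which I expect to dominate the work, is the three-way interaction between weight-splitting, time-varying $\eta_t$, and the need to formulate the instantaneous confidence term \eqref{eq:confInst} tightly enough that the "zoom-in implies small adversarial gap" implication holds crisply, yielding the $\advzoomdim$ rather than the weaker $\covdim$ dependence.
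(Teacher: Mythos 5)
Your high-level plan is in the right spirit, but there is a genuine gap in the argument for \eqref{eq:thm-raw} that would cause the derivation to fail, and two further places where the claimed step does not hold as stated.

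The central problem is the inherited-bias bound you write as $\sum_{t\in[T]} L(\act_t(u^\star)) \leq \calO(\sum_t \beta_t)$. This is not true in general. Consider the case where the optimal arm's node is never zoomed in, so $u^\star = \root$ and $\act_t(u^\star) = \root$ for all $t$; then the left-hand side is $T\cdot L(\root) = T$, whereas $\sum_t \beta_t$ scales like $\sqrt{T|A_T|}$, far smaller. The correct bound (Lemma~\ref{lem:inh-bias}) is $\sum_{\tau\in[t]} L(\act_\tau(u)) \leq 4t\,\log(T)\,L(u)$, and crucially, the mechanism by which this inherited bias is controlled in the regret bound is \emph{not} an additive charge of order $\sum_t\beta_t$. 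Rather, it is a \emph{cancellation}: the zooming invariant (\refeq{eq:sketch-zoom-inv}) gives the lower bound $\sum_\tau \beta_\tau/\pi_\tau(\act_\tau(u)) \geq (t-1)L(u) - 1/\beta_t$, and the extra $4\log T$ factor inside the confidence bonus in \eqref{eq:alg-ghat} is placed there precisely so that $\sum_\tau\frac{4\log(T)\,\beta_\tau}{\pi_\tau(\act_\tau(u))} \geq 4t\log(T)L(u) - \frac{4\log T}{\beta_t}$ dominates the inherited bias. Without this cancellation (Lemma~\ref{lem:lite-conf-lemma}, term $\Gamma$), you would be left with an additive $T\cdot L(\root)$ term that destroys the regret bound. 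This is not a presentational detail: the interplay between the zooming invariant and the $+4\log T$ boost in the estimator is the technical heart of why adaptive discretization works in the adversarial setting, and your sketch replaces it with a bound that does not hold.

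Two smaller issues. First, your potential $\Phi_{t,\eta} = \sum_{u\in A_t} w_{t,\eta}(u)$ is under-normalized: with time-varying $\eta_t$ you need the Bubeck-style potential $\Phi_t(\eta) = \bigl(\frac{1}{|A_t|}\sum_{u\in A_t} w_{t+1,\eta}(u)\bigr)^{1/\eta}$ so that the parameter-change term $\ln\bigl(\Phi_{t-1}(\eta_t)/\Phi_{t-1}(\eta_{t-1})\bigr)$ can be shown non-positive via a KL-divergence monotonicity argument; without the $1/|A_t|$ normalization and the $1/\eta$ exponent this telescoping breaks. Second, your concentration step will need more than a plain Azuma/Freedman argument: the bound \eqref{eq:sketch-ghat-overestimate} requires control of $\sum_\tau\sum_{u\in A_\tau}(\hg_\tau(u)-g_\tau(u))$ over all active nodes simultaneously, which the paper obtains by exploiting \emph{negative association} of the indicator random variables $\1\{u=U_\tau\}$ (Lemma~\ref{lem:like-bcb12}); a per-node union bound would pay an extra $|A_T|$ factor. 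Finally, for \eqref{eq:thm-ZoomDim}, ``summing geometrically'' does not by itself bound $|A_T|$: you also need the probability-mass invariant $\mass(u)\geq 1/(9L^2(u))$ (Lemma~\ref{lem:prob-mass}) to convert per-scale covering counts into a node-count bound against the total budget of $T$ rounds.
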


\begin{remark}\label{rem:relaxed-dim}
We can relax the definition of \advzoomdim so that \eqref{eq:dims-generic} needs to hold only for scales $\eps$ smaller than some threshold $\theta$. Then we obtain the regret bound in \eqref{eq:thm-ZoomDim} plus
    $\tildeO\rbr{\sqrt{T\,\theta^{-\covdim}}}$.
\end{remark}

\xhdr{Special cases.}
First, we argue that for stochastic rewards $\advzoomdim$ coincides with the zooming dimension $\zoomdim$ from prior work, up to a small change in the multiplier $\gamma$. (We specify the latter by putting it in the subscript.) The key is to relate each arm's stochastic gap to its adversarial gap.

\begin{restatable}{lem}{zoomdimTOadvzoomdim}\label{lm:zoomdim}
Consider an instance of Lipschitz bandits with stochastic rewards. For any $\gamma>0$, with probability at least $1 - \nicefrac{1}{T}$ it holds that:
\[     \zoomdim_{\gamma\cdot f} \leq \advzoomdim_{\gamma\cdot f}
    \leq \zoomdim_{\gamma},
    \quad\text{where $f = \left(O(\poly (d)\; \ln^3 T)\right)^{\log(\DblC)-\zoomdim_{\gamma}}$}.
\]
\end{restatable}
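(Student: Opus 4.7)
The plan is to exploit concentration of stochastic rewards to show that, on a high-probability event, the adversarial near-optimal sets $\arms_\eps^{\term{adv}}$ used by $\advzoomdim$ and the stochastic near-optimal sets $\arms_\eps^{\term{iid}}$ used by $\zoomdim$ coincide up to a polylogarithmic rescaling of $\eps$, and then to convert this scale mismatch into a change in the cover-size multiplier via the doubling property of $(\arms,\dist)$.

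First I would apply Hoeffding's inequality together with a union bound over $\reprset$ (of size $|\reprset|\leq O(T^{1+d})$) and over $t\in[T]$ to obtain, with probability at least $1-1/T$, the uniform deviation bound
\[
\left| \tfrac{1}{t}\sum_{\tau \in [t]} g_\tau(x) - \E[g_1(x)] \right| \leq O\!\bigl(\sqrt{d \log T / t}\bigr)
\quad \forall\, x \in \reprset,\, t \in [T],
\]
and take a supremum over $y\in\reprset$ on both sides to get $|\gap_t(x) - \gapIID(x)| \leq O(\sqrt{d\log T/t})$ on the same clean event. Set $c := \Theta(\sqrt{d}\,\ln^{3/2} T)$. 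If $x \in \reprset$ is inclusively $\eps$-optimal, i.e.\ $\gap_t(x) < O(\eps\ln^{3/2}T)$ for some $t>\eps^{-2}/9$, then the deviation bound at that $t$ forces $\gapIID(x) \leq O(c\eps)$, giving $\arms_\eps^{\term{adv}}\subseteq \arms_{c\eps}^{\term{iid}}$. In the other direction, any $x\in\arms_\eps^{\term{iid}}$ admits a representative $x'\in\reprset$ with $\dist(x,x')\leq 1/T$ (from the uniform-discretization component of $\reprset$); evaluating the deviation bound at $t=T$ then yields $\gap_T(x')\leq O(c\eps)$, certifying $x'$ as inclusively $c\eps$-optimal, so $\arms_\eps^{\term{iid}}$ lies in the $1/T$-neighborhood of $\arms_{c\eps}^{\term{adv}}$ (scales $\eps\lesssim 1/\sqrt{T}$ are handled via Remark~\ref{rem:relaxed-dim}).

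Finally, I convert these inclusions into dimension bounds via doubling: any cover by $N$ sets of diameter $c\eps$ refines into a cover by $N\cdot\DblC^{\lceil\log_2 c\rceil}$ sets of diameter $\eps$. Starting from the $\zoomdim_\gamma$-optimal cover of $\arms_{c\eps}^{\term{iid}}$ (of size $\gamma(c\eps)^{-\zoomdim_\gamma}$) and refining gives a cover of $\arms_\eps^{\term{adv}}$ of size $\gamma\cdot c^{\log\DblC-\zoomdim_\gamma}\cdot\eps^{-\zoomdim_\gamma}$, establishing $\advzoomdim_{\gamma\cdot f}\leq\zoomdim_\gamma$ with $f = c^{\log\DblC-\zoomdim_\gamma}=(O(\poly(d)\ln^3 T))^{\log\DblC-\zoomdim_\gamma}$, as claimed. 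The reverse inequality $\zoomdim_{\gamma\cdot f}\leq\advzoomdim_{\gamma\cdot f}$ follows from the symmetric argument: rescale an $\advzoomdim_{\gamma f}$-cover of $\arms_{c\eps}^{\term{adv}}$ down to diameter $\eps$, expand each set by $1/T$ to absorb the representative-set gap, and use the first inequality (already proved) to bound the new exponent $\log\DblC-\advzoomdim_{\gamma f}$ by $\log\DblC-\zoomdim_\gamma$, which ensures the same $f$ works on both sides. The main obstacle is precisely this bookkeeping: making sure the same polylog $f$ controls both directions while correctly accounting for the $1/T$-neighborhood gap induced by $\reprset$ and for the regime of scales $\eps$ small enough that the covering bound becomes vacuous.
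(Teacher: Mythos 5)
Your forward direction (proving $\advzoomdim_{\gamma f}\leq\zoomdim_\gamma$) matches the paper's approach: Azuma-Hoeffding plus a union bound over $\reprset$ gives $|\gap_t(x)-\gapIID(x)|\leq O(\sqrt{\ln(T|\reprset|)/t})$, an inclusively $\eps$-optimal arm is then shown to have $\gapIID(x)\leq c\eps$ with $c=O(\sqrt{d}\,\ln^{3/2}T)$, and the doubling property refines the $\zoomdim_\gamma$-cover at scale $c\eps$ into one at scale $\eps$, picking up exactly the factor $c^{\log\DblC-\zoomdim_\gamma}=f$. This part is fine.

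The reverse direction is where the proposal breaks, and the bug is exactly the sign of the exponent bound you invoke. You rescale an $\advzoomdim_{\gamma f}$-cover of $\arms^{\term{adv}}_{c\eps}$ from diameter $c\eps$ down to diameter $\eps$, picking up a factor $c^{\log\DblC-\advzoomdim_{\gamma f}}$, and then claim that the already-proved right inequality bounds $\log\DblC-\advzoomdim_{\gamma f}$ by $\log\DblC-\zoomdim_\gamma$. But $\advzoomdim_{\gamma f}\leq\zoomdim_\gamma$ implies the \emph{opposite}: $\log\DblC-\advzoomdim_{\gamma f}\geq\log\DblC-\zoomdim_\gamma$, so $c^{\log\DblC-\advzoomdim_{\gamma f}}\geq f$ and the multiplier blows up to $\gamma f^2$ rather than staying at $\gamma f$. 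The statement as written requires both inequalities to hold simultaneously with the \emph{same} multiplier $\gamma f$, so this does not close.

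The paper avoids this by exploiting a deliberate asymmetry in the definition of ``inclusively $\eps$-optimal'' (\refeq{eq:eps-optmal-defn}): the threshold $30\,\eps\,\ln(T)\sqrt{d\ln(\DblC T)}$ is inflated precisely enough to absorb the Azuma deviation. Consequently, any arm with $\gapIID(x)\leq\eps$ automatically satisfies the adversarial-gap threshold at time $t=\eps^{-2}/9$ and is therefore inclusively $\eps$-optimal \emph{at the same scale} $\eps$ --- no rescaling, no additional $c^{\cdot}$ factor. The $\advzoomdim_{\gamma f}$-cover at scale $\eps$ then covers $\arms^{\term{iid}}_\eps$ directly, giving $\zoomdim_{\gamma f}\leq\advzoomdim_{\gamma f}$ with the same multiplier $\gamma f$. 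The two directions are not symmetric, and treating them symmetrically (rescale in both) is precisely what forces the multiplier to grow. Replacing your reverse-direction argument with this one-step inclusion repairs the proof.
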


\noindent This lemma holds for any representative set $\reprset$. Then the base in factor $f$ scales with $\ln(|\reprset|)$.

Second, for problem instances with $K<\infty$ arms, we recover the standard $\tildeO(\sqrt{KT})$ regret bound by observing that any problem instance has $\advzoomdim=0$ with multiplier $\gamma = K$ and $\reprset = [K]$. \advzoom satisfies
    $R(T) \leq \calO(\; \sqrt{K T} \cdot \sqrt{\DblC }\cdot \ln^5 T \;)$
    w.h.p.

Third, we analyze the dependence on the Lipschitz constant. Fix a problem instance, and multiply the metric by some $L>1$. The Lipschitz condition \eqref{eq:Lip} still holds, and the definition of \advzoomdim implies that regret scales as $L^{z/(z+2)}$. This is optimal in the worst case by prior work.%
\footnote{For a formal statement, consider the unit cube with $\arms = [0,1]^d$ and metric $D(x,y) = L\cdot \|x-y\|_p$, for some constants $d\in\N$ and $p\geq 1$. Then the worst-case optimal regret rate is
    $\tilde{\calO}\rbr{ L^{d/(d+2)}\cdot T^{(d+1)/(d+2)}}$.
The proof for $d=1$ can be found, \eg in Ch. 4.1 of
\cite{slivkins-MABbook}; the proof for $d>1$ can be derived similarly.}

\begin{corollary}\label{cor:L}
Fix a problem instance and a multiplier $\gamma>0$, and let $R_\gamma(T)$ denote the right-hand side of \eqref{eq:thm-ZoomDim}. Consider a modified problem instance with metric $\dist' = L\cdot \dist$, for some $L\geq 1$. Then \advzoom satisfies
    $R(T)\leq L^{z/(z+2)}\cdot R_\gamma(T)$,
with probability at least $1 - T^{-2}$.
\end{corollary}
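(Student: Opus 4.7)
The plan is to apply Theorem~\ref{thm:main-regr} to the modified instance (the one with metric $\dist' = L\dist$) and to show that its adversarial zooming dimension is at most $z$ once we allow multiplier $\gamma\cdot L^z$ in place of $\gamma$. Given this bound, substituting $z'\leftarrow z$ and $\gamma'\leftarrow \gamma L^z$ directly into \eqref{eq:thm-ZoomDim} yields
\[
R(T)\;\leq\; \calO\!\left(T^{(z+1)/(z+2)}\right)\cdot d^{1/2}\,(\gamma L^z \DblC)^{1/(z+2)}\,\ln^5 T \;=\; L^{z/(z+2)}\cdot R_\gamma(T),
\]
which is exactly the claim. The high-probability guarantee $1-T^{-2}$ is inherited from Theorem~\ref{thm:main-regr}, and the constants $d=\covdim$ and $\DblC$ are invariant under rescaling of the metric, so they carry over unchanged to the modified instance.

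The core step is the reduction on the zooming dimension. A set of arms has diameter at most $\eps$ under $\dist'$ if and only if it has diameter at most $\eps/L$ under $\dist$, so a $\dist'$-covering at scale $\eps$ is the same object as a $\dist$-covering at scale $\eps/L$. Meanwhile, the adversarial gap $\gap_t(\cdot)$ depends only on the reward sequence, and the constants appearing in the threshold \eqref{eq:eps-optmal-defn} are metric-scale-invariant, so the ``inclusively $\eps$-optimal'' condition is itself metric-free. Invoking the original $\advzoomdim=z$ with multiplier $\gamma$ at scale $\eps/L$ therefore yields a cover of size $\gamma(\eps/L)^{-z}=\gamma L^z\eps^{-z}$, which is precisely the multiplier--exponent pair $(\gamma L^z,z)$ required at $\dist'$-scale $\eps$.

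The main obstacle is lining up which near-optimal set is being covered at each scale. The old definition at scale $\eps/L$ produces a cover of the inclusively-$(\eps/L)$-optimal arms, while the new definition at scale $\eps$ asks for a cover of the generally larger inclusively-$\eps$-optimal arms. Making the identification rigorous requires observing that on the modified instance the zoom-in rule \eqref{eq:zoom-in-rule} and the tree diameters $L(u)$ are all scaled by $L$, so the scale effectively probed by the zooming tree at any level is itself rescaled by $L$; consequently, the proper correspondence used by the analysis of Theorem~\ref{thm:main-regr} is between the new $\eps$-scale and the old $(\eps/L)$-scale, rather than between the two notions of inclusively-$\eps$-optimal at the same $\eps$. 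With that bookkeeping in place, the corollary follows by the one-line substitution above.
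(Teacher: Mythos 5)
Your high-level plan matches what the paper's one-sentence justification gestures at: rescale the metric, argue the multiplier inflates to $\gamma L^z$, and substitute into \eqref{eq:thm-ZoomDim} to extract $L^{z/(z+2)}$. You also correctly pinpoint the obstacle. The set $\arms_\eps$ is built purely from the adversarial gap \eqref{eq:AdvGap-defn} and the scale-invariant constants $d=\covdim$, $\DblC$ appearing in \eqref{eq:eps-optmal-defn}, so $\arms_\eps$ is literally the same set for $\dist$ and for $\dist'=L\dist$; a $\dist'$-cover at scale $\eps$ is a $\dist$-cover at scale $\eps/L$; and the original hypothesis only gives a cheap $\dist$-scale-$\eps/L$ cover of $\arms_{\eps/L}\subseteq\arms_\eps$, not of $\arms_\eps$ itself.

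The problem is that your proposed resolution of this mismatch does not hold up, and it is the crux, not bookkeeping. You claim the analysis "really" probes the old $(\eps/L)$-scale because the zoom-in rule and the diameters $L(u)$ are rescaled. But trace Lemma~\ref{lem:gap} on the modified instance: a node $u$ of $\dist'$-diameter $L'(u)=\eps$ that is zoomed-in at time $t$ satisfies $\gap_t(\repr(u))\leq\calO\rbr{\eps\,\ln T\sqrt{d\ln(\DblC t)}}$ with $\eps=L'(u)$ (the $L$-dependent pieces entering through the worst-case $|A_t|$ bound of Lemma~\ref{lem:upp-bound-num-nodes} cancel against the $L$-dependence of the minimal achievable diameter, leaving no residual $1/L$). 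Hence those representatives lie in $\arms_\eps$, not $\arms_{\eps/L}$, and the adversarial-activation count genuinely needs a cover of $\arms_\eps$ by sets of $\dist$-diameter $\eps/L$. What one can actually prove from the original $\advzoomdim_\gamma=z$ plus the doubling property is a cover of size $\gamma\eps^{-z}\DblC^{\log_2 L}=\gamma L^{\log_2\DblC}\eps^{-z}$, i.e., multiplier $\gamma L^{\log_2\DblC}$ (equal to $\gamma L^{\covdim}$ for the cube), which substituted into \eqref{eq:thm-ZoomDim} yields $L^{\covdim/(z+2)}$ rather than the claimed $L^{z/(z+2)}$. So your reduction, as written, establishes a weaker exponent than the corollary; the step you wave through is precisely where the improvement from $\covdim$ to $z$ would have to come from, and you have not supplied an argument for why the strictly larger set $\arms_\eps$ admits a $\gamma L^z\eps^{-z}$-size cover at $\dist$-scale $\eps/L$. (For what it is worth, the paper does not spell out a proof either — it only asserts "the definition of $\advzoomdim$ implies..." — so the gap you hit is a real one, but your fix does not close it.)
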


\section{Regret Analysis (Outline)}\label{sec:analysis}

We outline the key steps and the proof structure; the lengthy details are in the %
next section. For ease of presentation, we focus on the $d$-dimensional unit cube
$(\arms,\dist) = ([0,1]^d,\,\ell_\infty)$.

We start with some formalities. First, we posit a \emph{representative arm} $\repr_t(u)\in u$ for each tree node $u$ and each round $t$, so that $x_t = \repr_t(U_t)$. W.l.o.g., all representative arms are chosen before round $1$. Thus, we can endow $u$ with rewards
    $g_t(u):= g_t(\repr_t(u))$.
Second, let
    $\opt_S(u) \in \arg \max_{x \in u} \sum_{t\in S} g_t(x)$
be the best arm in $u$ over the set $S$ of rounds (ties broken arbitrarily). Let $\opt_S = \opt_S(\arms)$ be the best arm over $S$. Let $\ust_t$ be the active node at round $t$ which contains %
$\opt_{[t]}$.

The representative set $\reprset\subset\arms$ (used in the definition of $\advzoomdim$) consists of arms
    $\repr(u)$, $\xst_{[t]}(u)$
for all tree nodes of height at most $1+\log T$ and all rounds $t$. Only these arms are invoked by the algorithm or the analysis. This enables us to transition to deterministic rewards that satisfy a certain ``per-realization" Lipschitz property (\refeq{eq:lm:prob} in the appendix).

\xhdr{Part I: Properties of the Zoom-In Rule.}
This part depends on the zoom-in rule, but not on the selection rule, \ie it works no matter how distribution $\pi_t$ is chosen. First, the zoom-in rule ensures that all active nodes satisfy the following property, called the \emph{zooming invariant}:
\begin{align}\label{eq:sketch-zoom-inv}
\CONFtot_t(u) \geq (t-1)\cdot L(u)
\quad\text{if node $u$ is active in round $t$ }
\end{align}

It is proved by induction on $t$, using the fact that when a node does \emph{not} get zoomed-in, this is because either instantaneous or the aggregate zoom-in rule does not apply.

Let us characterize the \emph{lifespan} of node $u$: the time interval $[\tau_0(u),\,\tau_1(u)]$ during which the node is active. We lower-bound the deactivation time, using the instantaneous zoom-in rule:
\begin{align}\label{eq:sketch-deactivation-time}
\text{node $u$ is zoomed-in}
\quad\Rightarrow\quad
\tau_1(u)\geq 1/L(u).
\end{align}
It follows that only nodes of diameter $L(\cdot) \geq \nicefrac{1}{2T}$ can be activated. Next, we show that a node's deactivation time is (approx.) at least twice as the parent's:
\begin{align}\label{eq:sketch-life-span}
\text{node $u$ is zoomed-in}
\quad\Rightarrow\quad
\tau_1(u) \geq 2\,\tau_1(\term{parent}(u))-2.
\end{align}
We use this to argue that a node's own datapoints eventually drown out those inherited from the parent when the node was activated. Specifically:
\begin{align}\label{eq:sketch-inherited-bias}
\text{node $u$ is active at time $t$}
\quad\Rightarrow\quad
\textstyle \tfrac{1}{t}\;\sum_{\tau \in [t]}\; L(\act_\tau(u))
    \leq 4\, \log (T)\cdot L(u).
\end{align}

Next, we prove that the total probability mass spent on a zoomed-in node must be large:
\begin{align}\label{eq:sketch-prob-mass}
\text{node $u$ is zoomed-in}
\quad\Rightarrow\quad
\mass(u) := \textstyle \sum_{\tau=\tau_0(u)}^{\tau_1(u)}\pi_\tau(u) \geq  \frac{1}{9\,L^2(u)}
\end{align}
This statement is essential for bounding the number of active nodes in Part IV. To prove it, we apply both the zooming invariant \eqref{eq:sketch-zoom-inv} and the (aggregate) zooming rule. Finally, the instantaneous zoom-in rule implies that the zoomed-in node is chosen with large probability:
\begin{align}\label{eq:sketch-large-prob}
\text{node $u$ is zoomed-in at round $t$ }
\quad\Rightarrow\quad
\pi_t(u) / \pi_t(\ust_t) \geq \beta^2_t/e^{L(u)}.
\end{align}

\xhdr{Part II: Multiplicative Weights.}
This part depends on the selection rule, but not on the zooming rule: it works regardless of how $z_t(u)$ is defined.  We analyze the following potential function:
    $\Phi_t(\eta) = \rbr{ \frac{1}{|A_t|} \sum_{u \in A_t}
        w_{t+1,\eta}(u)}^{1/\eta}$,
where $w_{t+1,\eta}(u)$ is given by \eqref{eq:weight-characterization},
with $\Phi_0(\cdot) = 1$.

We upper- and the lower-bound the telescoping product
\begin{align*}%
Q := \ln \left( \frac{\Phi_{T}(\eta_T)}{\Phi_0(\eta_0)}\right)
= \ln \left( \prod_{t=1}^T
\frac{\Phi_t(\eta_t)}{\Phi_{t-1}(\eta_{t-1})}\right)
= \sum_{t\in[T]} Q_t,
\;\text{where}\;
Q_t = \ln \rbr{ \frac{\Phi_t(\eta_t)}{\Phi_{t-1}(\eta_{t-1})} }.
\end{align*}
We lower-bound $Q$ in terms of the ``best node'' $\ust_T$, accounting for the ancestors via $\childprod(\cdot)$:
\begin{align}\label{eq:sketch-pot-lb0}
Q \geq \textstyle
\sum_{t\in[T]} \; \hg_{t}(\act_t(\ust_T)) -
\ln \left(|A_T| \cdot \childprod(\ust_T)\right) / \eta_T.
\end{align}
For the upper bound, we focus on the $Q_t$ terms. We transition from potential $\Phi_{t-1}(\eta_t)$ to $\Phi_{t}(\eta_t)$ in two steps: first, the weights of all currently active nodes get updated, and then we zoom-in on the appropriate nodes. The former is handled using standard techniques, and the latter relies on the fact that the weights are preserved. We obtain:
\begin{align}\label{eq:sketch-upp-main}
Q \leq \textstyle
\sum_{t\in[T]} g_t(\vx_t) +
\sum_{t\in[T]} O(\ln T)\;
    \rbr{\gamma_t + \beta_t \sum_{u \in A_t} \hg_t(u)}.
\end{align}

\xhdr{Part III: from Estimated to Realized Rewards.}
We argue about realized rewards, with probability (sat) at least $1-\nicefrac{1}{T}$. We bring in two more pieces of the overall puzzle: a Lipschitz property and a concentration bound for IPS estimators. If node $u$ is active at time $t$, then
\begin{align}\label{eq:sketch-lipschitz-prop}
\textstyle \sum_{\tau \in [t]} g_\tau(\opt_{[t]}(u))  - \sum_{\tau \in [t]} L(\act_\tau(u)) - 4\sqrt{td} \, \ln T \leq \sum_{\tau \in [t]} g_\tau(\act_\tau(u)).
\end{align}
(We only use Lipschitzness through \eqref{eq:sketch-lipschitz-prop}.)
For any subsets $A_\tau' \subseteq A_\tau$, $\tau\in [T]$ it holds that:
\begin{align}\label{eq:sketch-high-prob}
\textstyle
\left|
    \sum_{\tau \in [t],\; u \in A'_\tau} g_\tau(u) - \ips_\tau(u)
\right|
\leq
    O(\ln T)/\beta_t +
    \sum_{\tau \in [t],\; u \in A'_\tau}
        \beta_{\tau}/\pi_\tau(u).
\end{align}
The analysis of \ExpThreeP derives a special case of \eqref{eq:sketch-high-prob} with $A'_\tau = \{u\}$ in all rounds $\tau$.  The stronger version relies on \emph{negative association} between random variables
$\hg_\tau(u), u \in A'_\tau$.

Putting these two properties together, we relate estimates $\hg_t(u)$ with the actual gains $g_t(u)$. First, we argue that we do not \emph{over}-estimate by too much: fixing round $t$,
\begin{align}\label{eq:sketch-ghat-overestimate}
\textstyle \sum_{\tau \in [t],\; u \in A'_\tau}
    \beta_{\tau} \rbr{\hg_\tau(u) - g_\tau(u)}
 \leq O(\ln T)\rbr{ 1 + \sum_{\tau\in [t]} \beta_{\tau}\,  |A'_\tau| }.
\end{align}
This holds for any subsets $A'_\tau\subset A_\tau$, $\tau\in[t]$ which only contain ancestors of the nodes in $A'_t$.

Second, we need a stronger version for a singleton node $u$, one with $L(u)$ on the right-hand side. If node $u$ is zoomed-in in round $t$, then for each arm $y \in u$ we have:
\begin{align}\label{eq:sketch-ghat-zoomed}
\textstyle
\sum_{\tau\in [t]}\; \hg_\tau(\act_\tau(u)) - g_\tau(y)
&\leq \textstyle
  O\rbr{ L(u)\cdot t\,\ln(T) +  \sqrt{t\, d}\, \ln T + (\ln T)/\beta_t}.
\end{align}
Third, we argue that the estimates $\hg_t(u)$ form an approximate upper bound. We only need this property for singleton nodes: for each node $u$ which is active at round $t$, we have
\begin{align}\label{eq:sketch-ghat-underestimate}
\textstyle
\sum_{\tau\in [t]}\; \hg_\tau(\act_\tau(u))
-
        g_\tau\left(\opt_{[t]}(u)\right)
    &\geq - O\rbr{ \sqrt{t \, d}\, \ln T + (\ln T)/\beta_t}. %
\end{align}
To prove \eqref{eq:sketch-ghat-underestimate}, we also use the zooming invariant \eqref{eq:sketch-zoom-inv} and the bound \eqref{eq:sketch-inherited-bias} on inherited diameters.

Using these lemmas in conjunction with the upper/lower bounds of $Q$ we can derive the ``raw'' regret bound \eqref{eq:thm-raw}, and subsequently the ``tuned'' version \eqref{eq:thm-tuned}.

\xhdr{Part IV: the Final Regret Bound.} We bound $|A_T|$ to derive the final regret bound in Theorem~\ref{thm:main-regr}. First, use the probability mass bound \eqref{eq:sketch-prob-mass} to bound $|A_T|$ in the worst case. We use an ``adversarial activation'' argument: given the rewards, what would an adversary do to activate as many nodes as possible, if it were only constrained by \eqref{eq:sketch-prob-mass}? The adversary would go through the nodes in the order of decreasing diameter $L(\cdot)$, and activate them until the total probability mass exceeds $T$. The number of active nodes with diameter $L(u)\in[\eps,2\eps]$, denoted  $\activeN(\eps)$, is bounded via $\covdim$.

Second, we bound $\gap_t(\cdot)$. Plugging probabilities $\pi_t$ into \eqref{eq:sketch-large-prob}, bound the ``estimated gap",
\begin{align}\label{eq:sketch-estimated-gap}
\textstyle
\sum_{\tau \in [t]} \hg_\tau(\act_\tau(\ust_t)) - \sum_{\tau \in [t]} \hg_\tau(\act_\tau(u)) \leq
\ln \left(\frac{9\childprod(\ust_t)}{\childprod(u)\cdot\beta_t^2}\right)/\eta_t.
\end{align}
for a node $u$ which is zoomed-in at round $t$. To translate this to the actual $\gap_t(\cdot)$, we bring in the machinery from Part III and the worst-case bound on $|A_T|$ derived above.
\begin{align}\label{eq:sketch-adv-gap}
\gap_t(\repr(u)) \leq L(u)
    \cdot \calO\rbr{\ln (T) \sqrt{d \ln \left( \DblC \cdot T \right)}}.
\end{align}
We can now upper-bound $\activeN(\eps)$ via \advzoomdim rather than $\covdim$. With this, we run another ``adversarial activation'' argument to upper-bound $|A_T|$ in terms of $\advzoomdim$.

\section{\advzoomdim under Stochastic Rewards}
\label{app:advzoomdim}
We study $\advzoomdim$ under stochastic rewards, and upper-bound it by $\zoomdim$, thus proving Lemma~\ref{lm:zoomdim}. We prove this lemma in a slightly more general formulation, with an explicit dependence on the representative set $\reprset\subset \arms$.

\begin{lemma}\label{lm:zoomdim-app}
Consider an instance of Lipschitz bandits with stochastic rewards.
Fix a representative set $\reprset\subset \arms$. 
For any $\gamma>0$, with probability at least $1 - \nicefrac{1}{T}$ it holds that:
\begin{align}\label{eq:lm:zoomdim-app}
 \zoomdim_{\gamma\cdot f} \leq \advzoomdim_{\gamma\cdot f}
    \leq \zoomdim_{\gamma},
\end{align}
where $f = \rbr{O \left(\sqrt{d} \cdot \ln^2(T) \cdot \ln \left(|\reprset|  \right)\right)}^{\log(\DblC)-\zoomdim_{\gamma}}$.
\end{lemma}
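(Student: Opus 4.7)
The plan is to show that on a high-probability concentration event, the adversarial gap $\gap_t(x)$ is within a $\polylog$ factor of the stochastic gap $\gapIID(x)$ uniformly over $\reprset$, which translates each inclusion between the ``near-optimal'' sets $\arms_\eps$ for the two dimensions into an inclusion at a rescaled $\eps$. The scale change is then absorbed into a multiplier blow-up via the doubling property.

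First I would define the clean event: for every $x \in \reprset$ and every $t \in [T]$,
\[
\bigl|\gap_t(x) - \gapIID(x)\bigr| \leq c_1\sqrt{\ln(|\reprset|\,T)/t}
\]
for an absolute constant $c_1$. Since $\gap_t(x) = \max_{y \in \reprset} \tfrac1t \sum_{\tau \le t}(g_\tau(y)-g_\tau(x))$ is a maximum of $|\reprset|$ empirical averages of iid bounded variables (the sup over $\arms$ is attained in $\reprset$ by construction of the representative set), a Hoeffding bound together with a union bound over the $|\reprset|^2 \cdot T$ pairs/times gives this uniformly with probability at least $1-1/T$.

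For the direction $\advzoomdim_{\gamma f} \le \zoomdim_\gamma$, let $x$ be inclusively $\eps$-optimal, so by \eqref{eq:eps-optmal-defn} there is $t > \eps^{-2}/9$ with $\gap_t(x) < 30\eps \ln(T)\sqrt{d\ln(\DblC T)}$. On the clean event,
\[
\gapIID(x) \le \gap_t(x) + c_1\sqrt{\ln(|\reprset|T)}\cdot 3\eps \;\le\; c\,\eps, \quad c = O\bigl(\sqrt d\,\ln(T)\sqrt{\ln(\DblC T)\,\ln(|\reprset|)}\bigr).
\]
Hence $\arms^{\mathrm{adv}}_\eps \subseteq \arms^{\mathrm{stoch}}_{c\eps}$. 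By the definition of $\zoomdim_\gamma$, the latter can be covered by $\gamma(c\eps)^{-\zoomdim_\gamma}$ sets of diameter $\le c\eps$. Subdividing each such set $\lceil \log_2 c\rceil$ times via the doubling property yields $\DblC^{\lceil\log_2 c\rceil}$ sub-sets of diameter $\le \eps$, producing a cover of $\arms^{\mathrm{adv}}_\eps$ of size
\[
\gamma \cdot c^{-\zoomdim_\gamma}\,\DblC^{\log_2 c}\;\eps^{-\zoomdim_\gamma} \;=\; \gamma\cdot c^{\,\log_2(\DblC)-\zoomdim_\gamma}\;\eps^{-\zoomdim_\gamma}.
\]
This matches the $\advzoomdim$ covering condition at dimension $\zoomdim_\gamma$ with multiplier $\gamma \cdot f$ for $f$ as claimed.

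For the reverse direction $\zoomdim_{\gamma f} \le \advzoomdim_{\gamma f}$, suppose $x \in \arms^{\mathrm{stoch}}_{\eps}$, i.e.\ $\gapIID(x) \le c_0\eps$. Taking $t = T$ and using the clean event, $\gap_T(x) \le c_0 \eps + c_1\sqrt{\ln(|\reprset|T)/T}$; for $\eps \ge T^{-1/2}$ this is $O(\eps\sqrt{\ln(|\reprset|T)})$, which is at most $30\eps'\ln T\sqrt{d\ln(\DblC T)}$ for some $\eps' = \Theta(\eps)$, while $T > (\eps')^{-2}/9$. Hence $x$ is inclusively $\eps'$-optimal with $\eps'/\eps$ a constant, so $\arms^{\mathrm{stoch}}_\eps \subseteq \arms^{\mathrm{adv}}_{\eps'}$; the same doubling-based rescaling as above converts a cover of the latter at scale $\eps$ into a cover of the former at scale $\eps$, with the same $f$ factor absorbing the constant rescaling. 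The regime $\eps < T^{-1/2}$ is vacuous once the bound is applied within the relaxed definition of Remark~\ref{rem:relaxed-dim} (or gives a negligible additive term).

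The main obstacle is managing the constants cleanly through the subdivision step, since the $c^{\log_2(\DblC)-\zoomdim_\gamma}$ blow-up must be bounded from above by $f$ regardless of the sign of the exponent (hence the explicit $\log(\DblC)-\zoomdim_\gamma$ in the statement); a second subtlety is that concentration must be applied to $\gap_t$, a maximum over $\reprset$, which is why the representative set size enters $f$ through a $\sqrt{\ln|\reprset|}$ factor.
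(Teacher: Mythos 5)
Your proposal is correct and follows essentially the same route as the paper's proof: establish high-probability concentration of $\gap_t(\cdot)$ around $\gapIID(\cdot)$ uniformly over $\reprset$ (the paper's Proposition~\ref{prop:application-azuma}), translate the resulting inclusion between the two near-optimal sets at a rescaled $\eps$, and absorb the rescaling factor into the multiplier via a $\DblC$-based refinement, yielding the stated exponent $\log(\DblC)-\zoomdim_\gamma$. The only cosmetic differences are that the paper phrases the concentration via Azuma--Hoeffding on a martingale rather than plain Hoeffding plus a union bound over arm pairs, and that you explicitly flag the need to union-bound over the time index and the vacuity of the regime $\eps\lesssim T^{-1/2}$---two subtleties the paper glosses over but which do not affect the result.
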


As before, we use $d$ to denote the covering dimension, for some constant multiplier $\gamma_0>0$, and we suppress the logarithmic dependence on $\gamma_0$. 

To prove Lemma~\ref{lm:zoomdim-app}, we relate the stochastic and adversarial gap of each arm.

\begin{proposition}\label{prop:application-azuma}
Consider an instance of Lipschitz bandits with stochastic rewards. Fix time $t$. For any arm $x \in \reprset$, with probability at least $1 - \nicefrac{1}{T}$ it holds that:
\begin{align*}
\left| \gap_t(x) - \gapIID(x) \right| \leq 3 \;\sqrt{\frac{2\ln \left(T \cdot \left| \reprset\right|\right)}{t}}.
\end{align*}
\end{proposition}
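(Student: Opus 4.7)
The plan is to control the fluctuation of the empirical averages $\tfrac{1}{t}\sum_{\tau\in[t]} g_\tau(\cdot)$ around their expectations, uniformly over the representative set $\reprset$, and then translate this into a bound relating $\gap_t$ and $\gapIID$. Starting from the decomposition
\begin{align*}
\gap_t(x) - \gapIID(x)
= \rbr{\tfrac{1}{t}\textstyle \max_{y\in\arms} \sum_{\tau\in[t]} g_\tau(y) - \max_{y\in\arms} \E[g_1(y)]}
- \rbr{\tfrac{1}{t}\sum_{\tau\in[t]} g_\tau(x) - \E[g_1(x)]},
\end{align*}
I would bound each parenthesized quantity separately using Hoeffding's inequality combined with a union bound over $\reprset$.

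For the second parenthesis, since $g_\tau(x)\in[0,1]$ are i.i.d., Hoeffding gives $\abs{\tfrac{1}{t}\sum_\tau g_\tau(x) - \E g_1(x)} \leq \sqrt{2\ln(T|\reprset|)/t}$ with probability at least $1-\nicefrac{1}{(T|\reprset|)}$, for each fixed $x\in\reprset$. For the first parenthesis, the main step is to replace $\max_{y\in\arms}$ by $\max_{y\in\reprset}$. As described in the preamble of Section~\ref{sec:analysis}, $\reprset$ contains a uniform $\nicefrac{1}{T}$-discretization of $\arms$, and the analysis transitions to deterministic rewards satisfying a per-realization Lipschitz property. Hence for any $y\in\arms$ there exists $y'\in\reprset$ with $|g_\tau(y)-g_\tau(y')|\leq \nicefrac{1}{T}$ for every $\tau$, and likewise for $\E[g_1(\cdot)]$ by \eqref{eq:Lip}, so both $\max_{y\in\arms}\tfrac{1}{t}\sum_\tau g_\tau(y)$ and $\max_{y\in\arms}\E g_1(y)$ change by at most $\nicefrac{1}{T}$ when the max is restricted to $\reprset$. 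Applying Hoeffding to each $y\in\reprset$ and union-bounding yields $\max_{y\in\reprset}\abs{\tfrac{1}{t}\sum_\tau g_\tau(y) - \E g_1(y)}\leq \sqrt{2\ln(T|\reprset|)/t}$ with probability at least $1-\nicefrac{1}{T}$; the elementary inequality $|\max_y a_y-\max_y b_y|\leq \max_y|a_y-b_y|$ then controls the first parenthesized quantity by this same bound plus $\nicefrac{2}{T}$.

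Combining the two parts via the triangle inequality, the total deviation is at most $2\sqrt{2\ln(T|\reprset|)/t}+\nicefrac{2}{T}$, which is absorbed into $3\sqrt{2\ln(T|\reprset|)/t}$ since $\nicefrac{1}{T}\leq \sqrt{2\ln(T|\reprset|)/t}$ for $t\leq T$ and $T\geq 2$. The main obstacle is packaging both Hoeffding events into a single high-probability event: this is handled automatically by folding $x$ into the union bound over $\reprset$, so that a single favorable event controls both the singleton term for $x$ and the max term. The per-realization Lipschitz property plays an important role here, since without it the discretization step would only control expectations, not the realized sums $\sum_\tau g_\tau(\cdot)$.
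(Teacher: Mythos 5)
Your decomposition into two fluctuation terms (a max term and a singleton term), each controlled by Hoeffding plus a union bound over $\reprset$, is a valid alternative to the paper's argument, which instead applies Azuma--Hoeffding to the single martingale
$Y_t = \sum_{\tau\leq t}\rbr{g_\tau(\xst) - g_\tau(x)} - t\cdot\gapIID(x)$
and introduces a correction term $\Delta$ comparing $\xst$ to the realized optimum $\xst_{[t]}$. Both routes rely on essentially the same ingredients (a union bound over $\reprset$, and the fact that $\xst_{[t]}$ lands in $\reprset$), so the overall approaches are close in spirit. Your version is arguably cleaner in that it avoids the paper's slightly informal handling of which concentration events cover the data-dependent $\xst_{[t]}$.

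There is, however, a genuine error in one step. You write that ``for any $y\in\arms$ there exists $y'\in\reprset$ with $|g_\tau(y)-g_\tau(y')|\leq 1/T$ for every $\tau$,'' attributing this to the per-realization Lipschitz property. This is not what that property says. The Lipschitz condition~\eqref{eq:Lip} controls only \emph{expected} rewards; the realized rewards $g_\tau(y)$ and $g_\tau(y')$ can differ arbitrarily on a single round (e.g., one equals $0$ and the other equals $1$) even for arbitrarily close arms. The per-realization statement, Lemma~\ref{lm:prob}, is an \emph{aggregate} bound over all $\tau\in[t]$ and carries a non-trivial error term $2\sqrt{2t\ln(2/\delta)}$, not a per-round guarantee. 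So the $1/T$ bound you use to pass from $\max_{y\in\arms}\tfrac{1}{t}\sum_\tau g_\tau(y)$ to $\max_{y\in\reprset}\tfrac{1}{t}\sum_\tau g_\tau(y)$ is not justified as stated.

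Fortunately the gap is easy to close, and in fact no discretization is needed at all for the realized max. By construction of $\reprset$ (see~\eqref{eq:repr}), the realized optimizer $\opt_{[t]} = \xst_{[t]}(\root)$ belongs to $\reprset$, so
$\max_{y\in\arms}\sum_{\tau\in[t]} g_\tau(y) = \max_{y\in\reprset}\sum_{\tau\in[t]} g_\tau(y)$
holds \emph{exactly}, with zero approximation error. The $1/T$-net argument is only needed for $\max_{y\in\arms}\E[g_1(y)]$, whose maximizer need not lie in $\reprset$; there the Lipschitz condition on expectations applies directly and legitimately gives the $\calO(1/T)$ slack. With this repair, your bookkeeping ($2\sqrt{2\ln(T|\reprset|)/t}$ plus a $\calO(1/T)$ term, absorbed into $3\sqrt{2\ln(T|\reprset|)/t}$) goes through.
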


\begin{proof}
Let us first fix an arm $x \in \reprset$. We apply the Azuma-Hoeffding inequality (Lemma~\ref{lem:azuma}) to the following martingale:
\begin{align*}
Y_t &= \sum_{\tau=1}^t \left( g_\tau(\vxst) - g_\tau(\vx) \right) - t \cdot \gapIID(x)
\end{align*}
where $\xst = \arg \max_{x \in \calA} \mu(x)$, and $\mu(x)$ is the mean reward for arm $x$ in the stochastic instance.

Noting that $|Y_{t+1} - Y_t| \leq 1$, and fixing $\delta>0$, we have
$\Pr \sbr{ Y_t \geq \sqrt{2t \cdot \ln (1/\delta)}} \leq \delta$.

Unwrapping the definition of $Y_t$, with probability at least $1-\delta$ we have:
\begin{align*}
\sum_{\tau=1}^t \left( g_\tau(\vxst) - g_\tau(\vx) \right) \leq \gapIID(x) + \sqrt{\frac{2\ln(1/\delta)}{t}} \numberthis{\label{eq:martingale-bound}}
\end{align*}
We next lower bound the left-hand side of \refeq{eq:martingale-bound}.
\begin{align*}
\frac{1}{t} \sum_{\tau=1}^t \left( g_\tau(\vxst) - g_\tau(\vx) \right) &= \frac{1}{t} \sum_{\tau=1}^t \left[ g_\tau(\vxst) - g_\tau(\vx) + g_\tau \left( \xst_{[t]} \right) - g_\tau \left( \xst_{[t]} \right) \right] \\
&= \gap_t(x) + \underbrace{\frac{1}{t} \sum_{\tau=1}^t g_\tau (\xst) - \frac{1}{t} \sum_{\tau=1}^t g_\tau \left( \xst_{[t]} \right)}_{\Delta}
\end{align*}
We move on to lower bounding quantity $\Delta$.
\begin{align*}
\Delta &\geq \sum_{\tau = 1}^t \E \left[g_\tau (\xst) \right] - \sqrt{\frac{2 \ln \left( \nicefrac{1}{\delta} \right)}{t}} - \sum_{\tau=1}^t \E\left[ g_\tau \left( \xst_{[t]} \right) \right] - \sqrt{\frac{2 \ln \left( \nicefrac{1}{\delta} \right)}{t}} \geq -2 \sqrt{\frac{2 \ln \left( \nicefrac{1}{\delta}\right)}{t}} \numberthis{\label{eq:Delta-bound}}
\end{align*}
where the last inequality comes from the fact that $\xst$ is the mean-optimal arm. Substituting~\refeq{eq:Delta-bound} in~\refeq{eq:martingale-bound} we obtain:
\begin{equation}\label{eq:martingale-bound2}
\gap_t(x) \leq \gapIID(x) + 3 \sqrt{\frac{2 \ln \left( \nicefrac{1}{\delta} \right)}{t}}
\end{equation}
Similarly, using the symmetric side of the Azuma-Hoeffding inequality (Lemma~\ref{lem:azuma}) for martingale $Y_t$, we obtain that with probability at least $1 - \delta$:
\begin{align*}
\gapIID(x)  &\leq \frac{1}{t} \sum_{\tau=1}^t \left( g_\tau(\vxst) - g_\tau(\vx) \right)  + \sqrt{\frac{2 \ln (\nicefrac{1}{\delta})}{t}} \\
            &\leq \frac{1}{t} \sum_{\tau=1}^t \left( g_\tau \left(\vxst_{[t]} \right) - g_\tau(\vx) \right)  + \sqrt{\frac{2 \ln (\nicefrac{1}{\delta})}{t}} \tag{$\xst_{[t]} = \arg \max_{x \in \calA} \sum_{\tau \in [t]} g_\tau (x)$}\\
            &= \gap_t(x) + \sqrt{\frac{2 \ln (\nicefrac{1}{\delta})}{t}} \numberthis{\label{eq:symmetric}}
\end{align*}
where the equality comes from the definition of $\gap_t(x)$. Putting~\refeq{eq:martingale-bound2} and~\refeq{eq:symmetric} together we get that \emph{for the fixed arm} $x \in \reprset$:
\begin{equation}\label{eq:both-sides-bef-ub}
\Pr \left[\left| \gap_t(x) - \gapIID(x) \right| \right] \leq 3 \sqrt{\frac{2 \ln (\nicefrac{1}{\delta})}{t}}
\end{equation}
In order to guarantee that~\refeq{eq:both-sides-bef-ub} applies \emph{for all} arms $y \in \reprset$, we apply the union bound; let $\delta'$ be the failure probability. Tuning the failure probability of the original event to be $\delta = \frac{1}{T \cdot |\reprset|}$ (\ie the failure probability of the final event is $\delta'=1/T$) we get the stated result.
\end{proof}

\proof{Lemma~\ref{lm:zoomdim-app}} We first prove the rightmost inequality in \eqref{eq:lm:zoomdim-app}. We fix an instance of the stochastic Lipschitz MAB, and we focus on an arm $x$ for which
    $\gap_t(x) \leq 30 \ln (T) \cdot \sqrt{d \ln \left( \DblC \cdot T \right)} \cdot \eps$, and $\eps = (3 \sqrt{t})^{-1}$. Then, from Proposition~\ref{prop:application-azuma} we obtain that with probability at least $1 -1/T$:
\begin{align*}
\gapIID(x) &\leq 30 \ln (T) \cdot \sqrt{d \ln \left( \DblC \cdot T \right)} \cdot \eps+  \eps \sqrt{18 \ln \left( T \cdot |\reprset|\right)} \\
&\leq 31\cdot \sqrt{d} \cdot \ln (T) \cdot \sqrt{\ln \left(\DblC \cdot T \right) \cdot \ln \left( T \cdot |\reprset| \right)} \cdot \eps
\end{align*}
From the definition of $\zoomdim$, the set of the aforementioned arms can be covered by \[
\gamma \cdot \underbrace{31\cdot \sqrt{d} \cdot \ln (T) \cdot \sqrt{\ln \left(\DblC \cdot T \right) \cdot \ln \left( T \cdot |\reprset| \right)} \cdot \eps
}_{\eps'}^{-\zoomdim_\gamma}
\] sets of diameter $\eps'$ for some constant $\gamma > 0$. Equivalently, the set of these arms can be covered with
\begin{align*}
&\gamma \cdot \left( \frac{\eps'}{\eps} \right)^{\log (\DblC)} \left( 31\cdot \sqrt{d} \cdot \ln (T) \cdot \sqrt{\ln \left(\DblC \cdot T \right) \cdot \ln \left( T \cdot |\reprset| \right)}\right)^{-\zoomdim_\gamma} \cdot \eps^{-\zoomdim_\gamma}\\
= &\gamma \cdot \underbrace{\left( 31\cdot \sqrt{d} \cdot \ln (T) \cdot \sqrt{\ln \left(\DblC \cdot T \right) \cdot \ln \left( T \cdot |\reprset| \right)}\right)^{\log (\DblC) -\zoomdim_\gamma}}_f \cdot \eps^{-\zoomdim_\gamma}
\end{align*}
sets of diameter $\eps$. Since we defined $\advzoomdim$ to be the infimum dimension for which the above holds, $\advzoomdim_{\gamma \cdot f} \leq \zoomdim_\gamma$. In order to conclude the proof and derive the stated expression for $f$, note that $\DblC \leq T$ and $\ln (T \cdot |\reprset|) \leq \ln (T) \cdot \ln \left( |\reprset|\right)$. This concludes the proof of this side of the inequality.

We proceed to prove the leftmost inequality
\asedit{in \eqref{eq:lm:zoomdim-app}.}
We fix an instance of the stochastic Lipschitz MAB, and we focus on an arm $x$ for which $\gapIID(x) \leq \eps$, where $\eps = (3 \sqrt{t})^{-1}$. Then, from Proposition~\ref{prop:application-azuma} we get that with probability at least $1 - \nicefrac{1}{T}$ it holds that:
\begin{equation}
\gap_t(x) \leq \eps + 3\eps\sqrt{18\ln T} \leq 30 \ln (T) \cdot \sqrt{d \cdot \ln (\DblC \cdot T)}\cdot \eps
\end{equation}
From the definition of $\advzoomdim_{\gamma \cdot f}$, the set of these arms can be covered by $\gamma \cdot f \cdot \eps^{-\advzoomdim_{\gamma \cdot f}}$ sets of diameter $\eps$ for some constant $\gamma \cdot f > 0$. Since we defined $\zoomdim$ to be the infimum dimension for which the above holds: $\zoomdim_{\gamma \cdot f} \leq \advzoomdim_{\gamma \cdot f}$.
\endproof

\noindent In order to get the result stated in Lemma~\ref{lm:zoomdim}, we note that $|\reprset| \leq T^{O(\poly (d))}$.

\section{$\advzoomdim$ Examples}
\label{app:examples}

We provide a flexible ``template" for examples with small $\advzoomdim$. We instantiate this template for some concrete examples, which apply generically to adversarial Lipschitz bandits as well as to a more specific problem of adversarial dynamic pricing.

\begin{theorem}\label{thm:example}
Fix action space $(\calA,\calD)$ and time horizon $T$. Let $d$ be the covering dimension.
\footnote{As before, the covering dimension is with some constant multiplier $\gamma_0>0$, and we suppress the logarithmic dependence on $\gamma_0$.}

Consider problem instances $\calI_1 \LDOTS \calI_M$ with stochastic rewards, for some $M$. Suppose each $\calI_i$ has a constant zooming dimension $z$, with some fixed multiplier  $\gamma>0$. Construct the \emph{combined instance}: an instance with adversarial rewards, where each round is assigned in advance (but otherwise arbitrarily) to one of these stochastic instances.

Then $\advzoomdim\leq z$ with probability at least $1-\nicefrac{1}{T}$, with multiplier
    \[\gamma' = \gamma\cdot \rbr{\calO \left( M \ln (T) \sqrt{d \ln \left( \DblC \cdot T \right) \cdot \ln \left( T \cdot |\reprset| \right)}\right)}^{\log (\DblC) - z}. \]
The representative set $\reprset\subset\arms$ (needed to specify $\advzoomdim$) can be arbitrary.

This holds under the following assumptions on problem instances $\calI_i$:

\begin{itemize}
\item There are disjoint subsets $S_1 \LDOTS S_M\subset \arms$ such that each stochastic instance $\calI_i$, $i\in [M]$ assigns the same ``baseline" mean reward $b_i$ to all arms in $\cup_{j\neq i} S_j$, mean rewards \emph{at least} $b_i$ to all arms inside $S_i$, and mean rewards \emph{at most} $b_i$ to all arms in
    $\arms\setminus \sup_{j\in [M]} S_j$.

\item For each stochastic instance $\calI_i$, $i\in [M]$,  the difference between the largest mean reward and $b_i$ (called the \emph{spread}) is at least $\nicefrac{1}{3}$.
\end{itemize}

\end{theorem}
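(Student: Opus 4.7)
The plan is to reduce control of the adversarial gap in the combined instance to control of the stochastic gap within each individual $\calI_i$, then apply each instance's zooming-dimension cover and union-bound over $i \in [M]$. Throughout, write $T_i \subseteq [T]$ for the rounds pre-assigned to $\calI_i$, set $t_i = |T_i \cap [t]|$, let $\mu_i(\cdot)$ denote the mean reward in $\calI_i$, write $\gapIID_k(x)$ for the stochastic gap of $x$ in $\calI_k$, and let $y_i^\star \in \arg\max_{y \in \arms}\mu_i(y)$. The baseline/spread assumptions imply $y_i^\star \in S_i$, $\mu_i(y_i^\star) - b_i \geq 1/3$, and $\mu_j(y_i^\star) = b_j$ for all $j \neq i$. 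Define the deterministic surrogate
\[
\alpha_t(x) := \max_{y \in \arms}\tfrac{1}{t}\sum_{i \in [M]} t_i\bigl(\mu_i(y) - \mu_i(x)\bigr).
\]
An Azuma--Hoeffding argument in the style of Proposition~\ref{prop:application-azuma}, applied over pairs in $\reprset^2$ and over $t \in [T]$, shows that with probability at least $1 - 1/T$ one has $|\gap_t(x) - \alpha_t(x)| \leq O(\sqrt{\ln(T|\reprset|)/t})$ uniformly.

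The core claim is a structure lemma: for any inclusively $\eps$-optimal $x \in \reprset$, provided $C\eps \leq 1/(6M)$ with $C := O\bigl(\ln(T)\sqrt{d\ln(\DblC T)\ln(T|\reprset|)}\bigr)$, the arm $x$ lies in some $S_k$ and satisfies $\gapIID_k(x) \leq 6MC\eps$. The hypothesis $\gap_t(x) < 30\eps\ln(T)\sqrt{d\ln(\DblC T)}$ for some $t > \eps^{-2}/9$ combines with concentration and $1/\sqrt{t} \leq 3\eps$ to give $\alpha_t(x) \leq C\eps$. Then: (a) if $x \notin \bigcup_k S_k$, comparing $x$ to $y_i^\star$ yields $\alpha_t(x) \geq (t_i/t)\,\mathrm{spread}_i \geq t_i/(3t)$, so maximizing over $i$ forces $\alpha_t(x) \geq 1/(3M)$, a contradiction; (b) for $x \in S_k$, since $\mu_j(x) = b_j$ for $j \neq k$, comparing $x$ to $y_i^\star$ for each $i \neq k$ gives $(t_i/t)\mathrm{spread}_i - (t_k/t)(\mu_k(x)-b_k) \leq \alpha_t(x) \leq C\eps$, hence $t_i \leq 3(tC\eps + t_k)$; summing over $i \neq k$ and using $\sum_i t_i = t$ yields $t_k \geq t(1 - 3(M-1)C\eps)/(3M-2) \geq t/(6M)$; (c) comparing $x$ to $y_k^\star$ gives $(t_k/t)\gapIID_k(x) \leq C\eps$, so $\gapIID_k(x) \leq (t/t_k)C\eps \leq 6MC\eps$.

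With the structure lemma in hand, for each $k \in [M]$ the zooming dimension $\leq z$ of $\calI_k$ covers $\{x : \gapIID_k(x) \leq 6MC\eps\}$ by $\gamma(6MC\eps)^{-z}$ sets of diameter $6MC\eps$; subdividing each such set into pieces of diameter $\eps$ via the doubling constant costs a factor $(6MC)^{\log\DblC}$. Summing over $k$ produces $M\gamma(6MC)^{\log\DblC - z}\eps^{-z}$ sets of diameter $\eps$ that cover $\arms_\eps$, delivering $\advzoomdim \leq z$ with the stated multiplier after absorbing the factor $M$ into the base. For the residual regime $C\eps > 1/(6M)$ (where the structure lemma fails), the worst-case covering-dimension cover gives $\gamma_0\eps^{-d} \leq \gamma_0(6MC)^{d-z}\eps^{-z}$ sets of diameter $\eps$, which is absorbed into the final $\gamma'$.

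The main obstacle is step (b) of the structure lemma, which must simultaneously exploit all $M-1$ cross-instance comparisons and depends critically on the uniform lower bound $\mathrm{spread}_i \geq 1/3$: without it, a single dominant $\calI_i$ with $t_i$ close to $t$ could force some $t_k$ to be tiny, decoupling the adversarial gap from $\gapIID_k$ and breaking the reduction to stochastic gaps.
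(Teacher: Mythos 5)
Your proof is correct and follows essentially the same route as the paper's. Both reduce the problem to showing that any arm with small ``mixture'' stochastic gap must lie in some $S_k$ whose empirical frequency $f_k = t_k/t$ is $\Omega(1/M)$, so that the mixture gap upper-bounds $f_k\cdot\gapIID_k(x)$; both then invoke the zooming-dimension cover of $\calI_k$ and subdivide to scale $\eps$ via $\DblC$, summing over $k$. Your $\alpha_t(x)$ is exactly the paper's $\gapIID(x)$ for the induced stochastic instance with weights $f_i$, and your concentration step matches the paper's Proposition~\ref{prop:application-azuma-examples}.

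There are a few small differences worth noting, none of which change the substance. The paper first characterizes the induced optimum $\bxst$ as one of the peaks $\xst_i$ (Claim~\ref{claim:mean-opt-arm}) and lower-bounds $f_{\ist}$ for that single $\ist$ (Claim~\ref{claim:freq-ist}), then handles the remaining cases via two separate claims. You instead compare $x$ directly against every $y_i^\star$: step~(a) maximizes over $i$ to rule out $x\notin\cup_k S_k$ without ever identifying $\bxst$, and step~(b) sums the $M-1$ cross-instance inequalities to pin down $f_k\geq 1/(6M)$ directly. This is slightly more self-contained than the paper's sequence of claims, though logically equivalent. You are also more careful than the paper on two small points that the paper glosses over: (i) you explicitly handle the regime $C\eps > 1/(6M)$ by falling back to the covering-dimension cover (the paper only treats ``$\eps < o(1)$''), and (ii) you union-bound the Azuma concentration over all rounds $t$, whereas the paper states Proposition~\ref{prop:application-azuma-examples} for a fixed $t$ only, which strictly speaking leaves a $\log T$ union-bound factor implicit. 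Both are minor and absorbable into the $\calO(\cdot)$ in $\gamma'$, but your write-up addresses them explicitly.
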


The proof of the theorem can be found in Appendix~\ref{app:examples-proof}.

We emphasize the generality of this theorem. First, the assignment of rounds in the combined instance to the stochastic instances $\calI_1 \LDOTS \calI_M$ can be completely arbitrary: \eg the stochastic instances can appear consecutively in ``phases" of arbitrary duration, or they can be interleaved in an arbitrary way. Second, the subsets $S_1 \LDOTS S_M\subset \arms$ can be arbitrary. Third, each stochastic instance $\calI_i$, $i\in [M]$ can behave arbitrarily on $S_i$, as long as
    $\must_i-b'_i\geq \nicefrac{1}{3}$,
where $\must_i$ and $b'_i$ are, resp., the largest and the smallest rewards on $S_i$. The baseline reward can be any $b_i \leq b'_i$, and outside
    $\cup_{j\in [M]} S_j$
one can have any mean rewards that are smaller than $b_i$.

Now we can take examples from stochastic Lipschitz bandits and convert them to (rather general) examples for adversarial Lipschitz bandits. Rather than attempt a compehensive survey of examples for the stochastic case, we focus on two concrete examples that we adapt from \citep{LipschitzMAB-JACM}: concave rewards and ``distance to the target". For both examples, we posit action space $\arms = [0,1]$ and distances $\dist(x,y) = |x-y|$. Note that the covering dimension is $d=1$. The expected reward of each arm $x$ in a given stochastic instance $i$ is denoted $\mu_i(x)$. In both examples, $\mu_i(\cdot)$ will have a single peak, denoted $\xst_i\in\arms$, and the baseline reward satisfies
    $\mu_i(x^*_i)-b_i\geq \nicefrac{1}{3}$.

\begin{itemize}
\item \emph{Concave rewards:} For each instance $i$, $\mu_i(x)$ is a strongly concave function on $S_i$, in the sense that $\mu_i''(x)$ exists and $\mu_i''(x)<\eps$ for some $\eps>0$. Then the zooming dimension is $z=\nicefrac{1}{2}<d=1$, with appropriately chosen multiplier $\gamma>0$.
    \footnote{A somewhat subtle point: an algorithm tailored to concave-rewards instances can achieve $\tilde{\calO}(\sqrt{T})$ regret, \eg via uniform discretization \citep{KleinbergL03}. However, this algorithm would not be optimal in the worst case: it would only achieve regret $\tilde{\calO}(T^{3/4})$ whereas the worst-case optimal regret rate is $\tilde{\calO}(T^{2/3})$.}

\item \emph{``Distance to target":} For each instance $i$,
    $\mu_i(x) = \min\rbr{0, \mu_i(\xst_i) - \dist(x,\xst_i)}$
    for all arms $x\in S_i$.
Then the zooming dimension is in fact $z=0$.

\end{itemize}

\newcommand{\bst}{v^{\star}}
\section{Applications: Dynamic Pricing and Auction Tuning}
\label{sec:applications}

We apply our machinery to four applications in dynamic pricing and auction tuning. We define these applications in a common framework, and proceed to address them one by one. While these applications do not inherently satisfy Lipschitz-continuity, we show that our algorithm and analysis carry over without additional Lipschitz assumptions.

\subsection{Problem Formulations and a Common Framing}
\label{sec:apps-defns}

We consider three applications under a common framing of an adversarial bandit problem with action set $[0,1]$. (Technically, they generalize one another, in the order of presentation.) Here, an algorithm is a seller with unlimited supply of identical items. In each round $t$, the algorithm chooses a price $x_t\in [0,1]$, offers one item for sale, and observes its reward: the revenue from the sale, if any. The algorithm faces some new customer(s), whose behavior is determined by some \emph{private value(s)}, \ie values not known to the algorithm, whose meaning is spelled out below. All private values are chosen in advance by an adversary, possibly using randomization. In the stochastic version of each application, the tuple of private values for each round is drawn independently from a fixed distribution which is not known to the algorithm.

\begin{enumerate}

\item The basic application is \emph{dynamic pricing}: a new customer arrives at each round $t$, with private value $v_t\in[0,1]$, and buys the item if and only if $x_t\leq v_t$. So, the algorithm's reward is $g_t(x_t) = x_t \cdot \1 \{x_t \leq v_t\}$. The stochastic version of the problem is well-studied (see Sec.~\ref{sec:rel-work}).

\item Next, we consider a repeated second-price auction with reserve price $x_t$ in each round $t$. Here, a fresh batch of customers arrives in each round $t$, each with a private value in $[0,1]$ that is fixed in advance.%
\footnote{Equivalently, customers may be long-lived, but their values are exogenously selected anew in each round (and determined in advance), and their behavior is myopic, \ie the only optimize for the current round.}
The customers bid truthfully, \ie report their values.
The algorithm's reward is
$g_t(x_t) = \max(x_t, \pi_t)\cdot \1 \cbr{x_t \leq \bst_t}$,
where $\bst_t, \pi_t$ are the largest and second-largest private values in this round respectively. We assume that the algorithm does not observe  individual bids; this is justified when the actual auction is conducted by an intermediary platform which provides minimal feedback to the seller. The stochastic variant of this application has been studied in \citet{RepeatedAuctions-soda13}.

\item Third, we consider an arbitrary truthful single-parameter auction for divisible goods (\eg divisible advertising opportunities), and ``tune it" with reserve price $x_t$ in each round $t$. In each round, each customer submits a bid to the auction, without seeing the other bids, and receives some amount of the good. We start with an arbitrary single-parameter allocation rule $\calR$, subject to a monotonicity condition, and create a new allocation rule $\calR_t$ by ignoring all bids less than $x_t$. The truth-inducing payments are then determined by a certain integral of $\calR_t$ as a function of the respective bid \citep{Myerson} (as spelled out in Section~\ref{sec:apps-divisible}). The rest of the setup is defined as before, including truthful bidding.
\end{enumerate}

The fourth application concerns dynamic pricing with $d$ products. The algorithm is a seller with infinite supply of each product. At each round $t$, the algorithm posts a separate price $x_{i,t}\in[0,1]$ for each product $i\in[d]$. We assume \emph{single-item demands}: each customer only buys one item of a single product (if any). A new customer arrives in each round $t$, with a private value $v_{i,t}\in[0,1]$ for each product $i$, and chooses a product $i$ that maximizes utility $v_{i,t}-x_{i,t}$, as long as this utility is non-negative. At the end of the round, the algorithm only learns the product purchased (if any).

We emphasize that in all four applications, the algorithm does not observe the customers' private values, neither before nor after the corresponding round.

Thus, in all four applications the action set is $[0,1]^d$, for some $d\in\N$. We take the metric to be $\dist = \ell_\infty$, like in Section~\ref{sec:algo}. So, \advzoom applies as specified.

We focus on a certain Lipschitz-like property which, as we show, is inherently satisfied in all four applications. To state this property, recall that a given tree node $v\in\nodes$ in \advzoom corresponds to a $d$-dimensional hypercube, $L(v)$ is its edge-length, $\repr(v)$ is its representative arm, and $\xst_t(v)$ is its best arm at time $t$. The property is as follows:
\begin{align}\label{eq:Lip-apps}
    g_t(\xst_t(v)) - g_t(\repr(v)) \leq c_0 \cdot L(v)
    \quad\text{for all rounds $t$, nodes $v\in\nodes$},
\end{align}
for some constant $c_0\geq 1$. This property is sufficient for our analysis, essentially because it implies \refeq{eq:sketch-lipschitz-prop} (which \emph{is} sufficient). The formal statement is as follows.

\begin{theorem}\label{thm:apps}
Consider an adversarial bandit problem with action set $[0,1]^d$, $d\in\N$ which satisfies condition \eqref{eq:Lip-apps} for some constant $c_0\geq 1$ and a particular choice of representative arms $\repr(\cdot)$. Consider \advzoom with these representative arms and parameters as per \refeq{eq:tunings}. This algorithm attains regret
\begin{align}\label{eq:regret-apps}
R(T) \leq \tcalO(T^{(z+1)/(z+2)}), \text{ where } z = \advzoomdim.
\end{align}
(Here, $\tcalO(\cdot)$ hides the dependence on $d$ and $c_0$, as well as a polylogarithmic dependence on $T$.)
\end{theorem}

\begin{proof}[Proof Sketch]
For $c_0=1$, we obtain \refeq{eq:sketch-lipschitz-prop} by summing up over rounds $\tau \leq t$ and applying \eqref{eq:Lip-apps} for each round $\tau$ and the corresponding active ancestor $v=\act_\tau(u)$ of node $u$.  Thus, Theorem~\ref{thm:main-regr} applies as stated, and in turn implies \refeq{eq:regret-apps} for the parameters in \refeq{eq:tunings}.

To handle $c_0>1$, our analysis carries through with $L'(v) = c_0\cdot L(v)$ instead $L(v)$, only affecting the constant in $\tcalO()$ in the final regret bound.
\end{proof}

\subsection{Adversarial Dynamic Pricing}
\label{sec:DP}

The algorithm's rewards for adversarial dynamic pricing satisfy the one-sided Lipschitz condition \eqref{eq:Lip-DP} (with $c_0=1$) because selling at a given price $x$ implies selling at any lower price $x'$. This suffices to guarantee \refeq{eq:Lip-apps} and consequently (by Theorem~\ref{thm:apps}) the regret bound \eqref{eq:regret-apps}. We present this point in a generic way, to make it suitable to two other applications:

\begin{corollary}\label{cor:CAB}
Consider an adversarial bandit problem with action set $[0,1]$ which satisfies the one-sided Lipschitz condition \eqref{eq:Lip-DP}. Consider \advzoom with parameters as per \refeq{eq:tunings}, where the representative arm $\repr_t(u)$ is the lowest point of the price interval that node $u$ corresponds to. This algorithm attains regret as in \refeq{eq:regret-apps}.
\end{corollary}

\begin{proof}
\refeq{eq:Lip-DP} implies \refeq{eq:Lip-apps} for this choice of representative arms.
\end{proof}

\begin{corollary}\label{cor:DP}
For adversarial dynamic pricing, the version of \advzoom from Corollary~\ref{cor:CAB} attains regret \refeq{eq:regret-apps}, without any additional assumptions.
\end{corollary}

\xhdr{Examples for small \advzoomdim.} Both examples from Section~\ref{app:examples} can be ``implemented" in dynamic pricing. It suffices to consider stochastic dynamic pricing with  appropriate distributions over the private values. In fact, the concave-rewards case is known as \emph{regular demands}, a very common assumption in theoretical economics. Below are some concrete instantiations for the two examples.

Note that by definition, the mean reward for dynamic pricing takes the following form:
\[
\mu_i(x) = x \cdot \Pr_{v_i} [x \leq v_i] = x \cdot (1 - \Pr_{v_i} [ v_i < x ])
\]
As a result, in order to provide concrete instantiations, one only needs to control the distribution of $v_i$, \ie term $\Pr_{v_i} [v_i < x]$. At a high level, the distribution of $v_i$'s expresses the preferences of the buyers' population.

For the concave-rewards instantiation, if $v_i \sim \text{Unif}[0,1]$, then $\mu_i(x) = x (1 - x) = x - x^2$ the mean-reward $\mu_i(x)$ satisfies the strong concavity requirements, and hence, the zooming dimension of instance $\calI_i$ is $z = \nicefrac{1}{2}$. In this example, $S_i$ can be any subset of $[0,1]$. In order to obtain disjoint sets, for the different instances, one needs to use different uniform distributions $\text{Unif}[a_j, b_j]$ for different instances $\{ j \in [M]\}$.

As for the distance-to-target instantiation, if $v_i$ is drawn from distribution with cdf
\begin{equation}\label{eq:cdf-target}
F(x) = 1 - \frac{1}{4x} + \frac{\left|x - \nicefrac{1}{2} \right|}{x}, x \in [0.3, 0.7] = S_i
\end{equation}
Then, $\mu_i(x) = \frac{1}{4} - \left|x - \frac{1}{2} \right| = \must_i - \calD(x, \xst_i)$. In order to construct disjoint sets $\{S_i\}_{i \in [M]}$ one needs to use the general form of~\refeq{eq:cdf-target} as follows:
\[
F(x) = 1 - \frac{a_i}{x} + \frac{\left|x - b_i \right|}{x}, x \in S_i
\]
where $a_i = \must_i$ and $b_i = \xst_i$.

\subsection{Repeated Second-Price Auction}

We consider a simplified problem formulation which is mathematically equivalent to the one in Section~\ref{sec:apps-defns}.  Specifically: we have a bandit problem with action set $[0,1]$ and round-$t$ reward functions given by
    $g_t(x) = \max(x, \pi_t)\cdot \1 \cbr{x \leq \bst_t}$,
where $0\leq \pi_t\leq \bst_t\leq 1$. The parameters $\bst_t,\pi_t$, $t\in [T]$ are chosen by an adversary before round $1$.

\begin{lemma}
Reward functions $g_t(x)$ satisfy one-sided Lipschitzness \eqref{eq:Lip-DP} (with $c_0=1$).
\end{lemma}

\begin{proof}
For ease of notation, remove $t$ from the notation: focus on function $g:[0,1]\to [0,1]$ given by
    $g(x) = \max(x, \pi)\cdot \1 \cbr{x \leq \bst}$,
for some fixed parameters $0\leq \pi\leq \bst\leq 1$.

Let $x > x'$ and both $x, x' \in [0,1]$. We distinguish two cases. For the first case, we assume that $\bst > x$ (i.e., a sale happens at price $x$). For the second case, we assume that $\bst < x$ (i.e., no sale happens at price $x$).

\xhdr{Case 1.} Since $x > x'$, this means that $\bst > x'$. Next, we need to distinguish two cases based on what the payment for the buyer is: (i) $\pi > x$ (i.e., the buyer pays $\pi$); and (ii) $\pi < x$, in which case the buyer pays $x$. For (i), since $x > x'$, then $\pi > x'$, and hence: $g(x) - g(x') = \pi - \pi = 0 < x - x'$. For (ii), if $\pi < x'$, then at price $x'$ the buyer needs to pay $x'$. Hence: $g(x) - g(x') = x - x'$. If $\pi > x'$, then at price $x'$ the buyer needs to pay $\pi$. As a result: $g(x) - g(x') = x - \pi < x - x'$.

So far, we have proven that $g(\cdot)$ is one-sided Lipschitz under case 1.

\xhdr{Case 2.} This case needs to also consider two subcases. In the first one, $\bst \leq x'$. If that is the case, then there is no sale either at price $x$ or price $x'$. As a result: $g(x) - g(x') = 0 < x - x'$, where the last inequality is due to the assumption that $x > x'$. For the second subcase: $\bst > x'$; this means that a purchase does happen at price $x'$. In this case we have to distinguish cases based on the payment for the buyer: if $\pi < x'$, then $g(x') = x'$ and $g(x) - g(x') = 0 - x' < 0 < x - x'$. If $\pi > x'$, then: $g(x') = \pi$ and $g(x) - g(x') = 0 - \pi < 0 < x - x'$.
\end{proof}

\begin{corollary}
Consider reserve price tuning in repeated second price auctions, as defined above. The version of \advzoom from Corollary~\ref{cor:CAB} attains regret \refeq{eq:regret-apps}.
\end{corollary}

\subsection{Repeated Truthful Single-Parameter Auction for Divisible Goods}
\label{sec:apps-divisible}

Let us introduce the necessary notation and define the payments. Let $n$ be the number of bidders. Let $\calR_{i}(b;x_t)$ be the bidder-$i$ allocation in round $t$ under bid vector $b$ and reserve price $x_t$, \ie the amount of good allocated to this bidder. Recall that it equals $\calR_i(b)$, the bidder-$i$ allocation under the original allocation rule $\calR$, if all bids less than $x_t$ are ignored (and the respective bidders receive $0$ allocation). We use a standard convention
    $b = (b_i,\, b_{-i})$,
where $b_i$ is the bid of agent $i$ and $b_{-i}$ is the bid tuple of everyone else.

The payments that induce truth-telling are defined as follows, as per \citet{Myerson}. For each bidder $i$ and round $t$, the payment under bid vector $b = (b_i,\, b_{-i})$ is
\begin{align}
\calP_{i}(b;x_t)
    = b_i\cdot \calR_{i}(b;x_t)
        - \int_0^{b_i} \calR_{i}(\rho,\, b_{-i};x_t)\; \mathrm{d}\rho.
\end{align}

Fix some round $t$. Let $v_{i,t}\in [0,1]$ be the private value of bidder $i$ in this round. Since we have defined a truthful auction, we assume truthful bidding, \ie that bidder $i$ submits bid $v_{i,t}$. Let $v_t$ be the value vector. The reward function is therefore given by
\[
g_t(x_t) = \sum_{i \in [n]} \calP_{i}(v_t;x_t).
\]

\begin{lemma}
Reward functions $g_t(x)$ satisfy one-sided Lipschitzness \eqref{eq:Lip-DP}, {with $c_0=n$}.
\end{lemma}

\begin{proof}
Let prices $x, x'$ such that $x > x'$. Then,
\begin{equation}\label{eq:main-diff}
    g_t(x) - g_t(x') = \sum_{i \in [n]}  v_i\cdot \left(\calR_{i}(v;x) -\calR_{i}(v;x')\right)
        - \int_0^{v_i}
        \rbr{\calR_{i}(\rho,\, v_{-i};x) - \calR_{i}(\rho,\, v_{-i};x')} \mathrm{d}\rho.
\end{equation}

Fix a bidder $i \in [n]$. We separate the following cases based on whether the bidder gets any allocation based on the different reserve prices.
\begin{itemize}
    \item If $v_i \geq x$ (\ie $\calR_i(v;x) = \calR_i(v)$) then it is also the case that $v_i > x'$ (\ie $\calR_i(v;x') = \calR_i(v)$); hence,
    \[\calP_i(v_i;x) - \calP_i(v_i;x')
    = - \int_0^{v_i} \rbr{\calR_{i}(\rho,\, v_{-i};x) - \calR_{i}(\rho,\, v_{-i};x')} \mathrm{d}\rho.
    \]
    Note that the curves $\calR_{i}(\rho,\, v_{-i};x)$ and $\calR_{i}(\rho,\, v_{-i};x')$ are identical for $\rho \geq x$. For $\rho \in [x', x]$ however $\calR_{i}(\rho,\, v_{-i};x) = 0$ (since the bidder's allocation will be $0$, as he does not clear the reserve price of $x$), while $\calR_{i}(\rho,\, v_{-i};x')$ is (potentially) greater than $0$ since $v_i > x'$. The difference in the integral is hence upper bounded by $1 \cdot (x - x')$.
    \item If $v_i \leq x' < x$, then $\calP_i(v_i;x) - \calP_i(v_i;x') = 0$ (\ie bidder $i$ does not get any allocation, and does not have to pay anything).
    \item If $x' < v_i < x$, then bidder $i$ gets (potentially) non-zero allocation under reserve price $x'$ and zero allocation (and hence, also payment) under reserve price $x$. As a result,
    \[
    \calP_i(v_i;x) - \calP_i(v_i; x') = -\calP_i(v_i; x') \leq 0
    \]
\end{itemize}
Every bidder $i$ falls in one of the three categories analyzed above. Putting everything together, it holds that for $x > x'$: $g_t(x) - g_t(x') \leq n \cdot (x - x')$, hence satisfying one-sided Lipschitzness.
\end{proof}

\begin{corollary}
Consider reserve price tuning in a repeated truthful single-parameter auction, as defined above. The version of \advzoom from Corollary~\ref{cor:CAB} attains regret \refeq{eq:regret-apps}.
\end{corollary}

\subsection{Multi-Product Dynamic Pricing with Single-Demand Buyers}
\label{sec:apps-multiDP}

Note that the round-$t$ reward function $g_t:[0,1]^d\to[0,1]$ is given by
\[ g_t(x) := x_{i,t} \cdot \1 \{ v_{i,t} \geq x_{i,t} \},
\text{ where }
i = i_t := \max_{j \in [d]} v_{j,t} - x_{j,t}.
\]

Contrary to the applications we have seen so far, here we do \emph{not} have one-sided Lipschitzness. However, we have a type of ``monotonicity'' which makes \refeq{eq:sketch-lipschitz-prop} hold, given an appropriate choice of a representative arm per node $u$. As mentioned previously, \refeq{eq:sketch-lipschitz-prop} suffices for our analysis.

The representative arm $\repr(u)$ for a given node $u$ {with edge-length $L(u)$} is defined as follows.
Order the prices corresponding to its corners in non-decreasing order, as $x_{(1)} \leq x_{(2)} \leq \dots \leq x_{(d)}$.
Now, define
\begin{align}\label{eq:multiDP-repr}
\repr(u) = (\repr_{(1)}(u), \dots, \repr_{(d)}(u)),
\text{ where }
\repr_{(i)}(u) = x_{(i)} - (i-1)\cdot L(u).
\end{align}

\begin{lemma}\label{lem:d-prod}
For each round $t$ and any node $u$ it holds that
 \begin{align}\label{eq:lem:d-prod}
    g_t(\xst(u)) - g_t(\repr(u)) \leq d \cdot L(u).
\end{align}
\end{lemma}

\begin{corollary}
Consider multi-product dynamic pricing with single-demand buyers. {By Theorem~\ref{thm:apps},}
\advzoom with representative arms defined in \refeq{eq:multiDP-repr} and parameters as per \refeq{eq:tunings} yields regret in \refeq{eq:regret-apps}.
\end{corollary}

For clarity of presentation, in the remainder of this section. we focus on a particular round $t$ and drop the ``$t$'' dependence.

Before we present the full proof of the lemma, we present a useful intermediate result the relates prices $x_{(i)}(u)$ and $x_i(u)$ for a node $u$ and product $i \in [d]$.

\begin{lemma}\label{lem:intermediate-d-prod}
    Fix a node $u$. Then, for all $i \in [d]$ it holds that: $x_{(i)}(u) - x_i (u) \leq (i - 1) \cdot L(u)$.
\end{lemma}

\begin{proof}
    We will prove this claim via induction. For the base case, note that $x_{(1)}(u) - x_1(u) \leq 0$ holds because $x_{(1)}(u)$ is the minimum of all $x_{i}(u), i \in [d]$. For the inductive hypothesis, assume that for $i = n$: $x_{(n)}(u) - x_n(u) \leq (n-1)L(u)$. For the inductive step, we need to show that $x_{(n+1)}(u) - x_{n+1}(u) \leq n L(u)$.
    \begin{align*}
        x_{(n+1)}(u) - x_{n+1}(u) &= x_{(n+1)}(u) - x_{(n)}(u) + x_{(n)}(u)- x_{n+1}(u) \tag{$\pm x_{(n)}(u)$} \\
        &\leq x_{(n)}(u) - x_{n+1}(u) \tag{$x_{(n+1)}(u) \leq x_{(n)}(u)$ by definition} \\
        &= x_{(n)}(u) - x_n(u) + x_n(u) - x_{n+1}(u) \\
        &\leq (n-1)\cdot L(u) + L(u) = n L(u)
    \end{align*}
    where the last inequality uses the inductive hypothesis and the fact that $x_i(u) - x_j(u) \leq L(u)$ for any two $i,j \leq d$. This concludes the proof of the lemma.
\end{proof}

We are now ready to present the proof of Lemma~\ref{lem:d-prod}.
\begin{proof}[Proof of Lemma~\ref{lem:d-prod}]
    Our proof is split in 2 cases. In the first case, assume that there is no sale at price vector $\repr(u)$. We are going to show that this means that there is also no sale at price vector $\xst(u)$, and hence the lemma statement is satisfied by default. Indeed, if there is no sale at $\repr(u)$, then it means that $v_i < \repr_{(i)}(u)$ for all items $i \in [d]$. Since $\repr{(i)}(u) = x_{(i)}(u) - (i-1) L(u)$ then $v_i < x_{(i)}(u) - (i-1) L(u)$ for all $i \in [d]$. Using Lemma~\ref{lem:intermediate-d-prod}: $v_i < x_i$ for all $i \in [d]$. Hence, there is no sale at $\xst(u)$.

    In the second case, assume that price vector $\repr(u)$ resulted in the buyer buying item $\jst$, i.e., $v_{\jst} > \repr_{(\jst)}(u)$ and $\arg \max_{i \in [d]} v_i - \repr_{(i)}(u) = \jst$. Let $\ist$ be the item with $\max v_i - x_i$. Note that since there was a sale under price vector $\repr(u)$, then it must be the case that there would be a sale under price vector $\xst(u)$. To see this, assume the contrary, i.e., that $0 > v_{\ist} - x_{\ist}$ and that $\ist \neq \jst$. Then:
    \begin{align*}
        0 > v_{\ist} - x_{\ist} \geq v_{\jst} - x_{\jst} \stackrel{\text{Lem.}~\ref{lem:intermediate-d-prod}}{\geq} v_{\jst} - \underbrace{\left[x_{(\jst)} - (\jst - 1)L(u)\right]}_{\repr_{\jst}(u)}
    \end{align*}
    which is contradicting the fact that $v_{\jst} > \repr_{(\jst)}(u)$.

    Hence, $\ist$ also led to a sale. Now, assume that $\jst \neq \ist$, since if $\jst = \ist$, then the statement of the lemma is satisfied by default. It holds that:
    \begin{align*}
        g(\xst(u)) - g(\repr(u)) &= v_{\ist} - \xst_{\ist} - v_{\jst} + \repr_{\jst}(u) \\
        &\leq v_{\ist} - \xst_{\ist} - v_{\ist} + \repr_{\ist}(u) \tag{by definition of $\jst$} \\
        &\leq (\ist - 1) \cdot L(u) \tag{Lemma~\ref{lem:intermediate-d-prod}}\\
        &< d L(u)
    \end{align*}
    This concludes our proof.
\end{proof}

\section{Conclusion}\label{sec:conclusion}

In this paper we have introduced \advzoom, an algorithm for adaptive discretization in adversarial Lipschitz bandits and proved its instance-dependent regret bounds. Prior to our work, the literature on adaptive discretization for Lipschitz bandits had only tackled the \emph{stochastic} reward settings. Importantly, our algorithm recovers the worst-case optimal regret bound for the adversarial version, and the instance-dependent bound for the stochastic version while being agnostic to the type of rewards it is facing. We have also shown how our algorithm has direct applications in many settings of interest where exact Lipschitzness is violated: e.g., dynamic pricing, dynamic pricing with reserve or posted prices, and multi-item pricing for single-demand buyers. 

There are several exciting research directions stemming from our work; we highlight two in this discussion. The first important direction is to extend the \advzoom framework to \emph{contextual} bandit settings, where the learner observes side information or context before choosing an action. Many practical problems (e.g., personalized pricing or recommendations) require algorithms that can leverage contextual data to make more informed decisions. In the adversarial setting, this extension is particularly challenging, as the reward functions may vary arbitrarily across rounds and contexts. The goal here would be to develop a contextual version of \advzoom that performs adaptive partitioning over the joint action-context space and exploits Lipschitz continuity with respect to a suitable metric defined on this product space. The second most interesting direction for future work lies in improving the practical applicability of \advzoom. While \advzoom enjoys strong theoretical guarantees, its computational complexity in large or continuous action spaces remains a challenge; indeed, in the worst-case, it matches the complexity of uniform discretization in $d$-dimensional spaces. One approach here would be to develop efficient heuristics that (in the general case) do not lose much of the regret performance that \advzoom enjoys. Another approach would be to adapt techniques from~\citet{zhulipschitz} to optimally balance space and regret considerations. Both of these directions require significantly new techniques and are thus beyond the scope of our current work.

More broadly, we consider our work as a stepping stone for learning in settings that lie between stochastic and adversarial assumptions.

\newpage
\addcontentsline{toc}{section}{References}
\bibliography{refs,bib-abbrv,bib-slivkins,bib-bandits,bib-AGT,bib-nodeLabeling,bib-embedding,bib-RL,bib-contextual_pricing}

\newpage

\appendix

\section{Probability Tools}
\label{sec:prelims}
We state the Markov Inequality and the Azuma-Hoeffding Inequality, standard tools that we use.

\begin{lemma}[Markov Inequality]
If $\varphi$ is a monotonically increasing non-negative function for the non-negative reals, $X$ is a random variable, $a \geq 0$ and $\phi(a) > 0$, then:
\begin{align*}
\Pr \left[|X| \geq a \right] \leq \frac{\E \left[\varphi(X) \right]}{\varphi(a)}
\end{align*}
\end{lemma}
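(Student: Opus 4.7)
The plan is to execute the standard one-line Markov argument: leverage the monotonicity of $\varphi$ to sandwich the indicator of the event $\{|X|\geq a\}$ between $0$ and $\varphi(X)/\varphi(a)$, then take expectations. No probabilistic machinery beyond linearity of expectation is required, so the proof should be essentially self-contained.

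Concretely, first I would observe that because $\varphi$ is monotonically increasing and nonnegative on $[0,\infty)$, the event $\{|X|\geq a\}$ implies $\varphi(|X|)\geq \varphi(a)$. Since $\varphi(a)>0$ by hypothesis, this can be rewritten as the pointwise inequality
\begin{align*}
\mathbbm{1}\{|X|\geq a\} \;\leq\; \frac{\varphi(|X|)}{\varphi(a)},
\end{align*}
valid on the event $\{|X|\geq a\}$ trivially (both sides equal to something $\leq \varphi(|X|)/\varphi(a)$) and on its complement because the left-hand side is $0$ while the right-hand side is nonnegative. Taking expectations on both sides and using linearity then yields
\begin{align*}
\Pr[|X|\geq a] \;=\; \E[\mathbbm{1}\{|X|\geq a\}] \;\leq\; \frac{\E[\varphi(|X|)]}{\varphi(a)}.
\end{align*}
Under the convention that $\varphi(X)$ means $\varphi(|X|)$ (or equivalently, that $X$ is taken to be nonnegative as is standard in Markov's inequality), this is exactly the stated bound, completing the proof.

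There is essentially no obstacle here; the only minor subtlety is the interpretation of $\varphi(X)$ when $X$ can be negative, which is handled by the $|X|$ reading above and matches the universal convention in the literature. Since this is a standard textbook inequality being recorded as a reference tool in the appendix, the full proof is a two-line calculation and nothing deeper is needed.
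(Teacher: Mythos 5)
Your proof is correct and is the standard textbook argument for Markov's inequality; the paper simply records this lemma as a reference tool in Appendix~\ref{sec:prelims} without supplying a proof, so there is nothing to compare against. Your observation about the $\varphi(X)$ versus $\varphi(|X|)$ reading is a fair point of precision, and the interpretation you adopt is the intended one.
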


\begin{lemma}[Azuma-Hoeffding Inequality]\label{lem:azuma}
Suppose $\{X_k: k= 0,1,2,\dots\}$ is a martingale and $|X_k - X_{k-1}| \leq c_k$. Then, for all positive integers $N$ and all $\eps>0$ we have that: \[ \Pr\left[\left|X_N - X_0 \right| \geq \eps \right] \leq 2\exp\left(-\frac{\eps^2}{2\sum_{k=1}^N c_k^2}\right). \]
\end{lemma}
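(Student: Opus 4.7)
The plan is to prove the Azuma--Hoeffding inequality via the standard Chernoff--Hoeffding exponential moment method. First, I would reduce to the one-sided tail: since $\{-X_k\}$ is also a martingale with the same bounded-increment property, establishing
\[
\Pr[X_N - X_0 \geq \eps] \leq \exp\!\left(-\frac{\eps^2}{2\sum_{k=1}^N c_k^2}\right)
\]
would suffice, after which a union bound over the two tails yields the claimed factor of $2$. By a translation, one may assume $X_0 = 0$. The high-level idea is then to apply Markov's inequality (the first lemma in the appendix) to $e^{\lambda X_N}$ for a free parameter $\lambda > 0$, giving
\[
\Pr[X_N \geq \eps] \leq e^{-\lambda \eps} \cdot \E[e^{\lambda X_N}],
\]
and control $\E[e^{\lambda X_N}]$ by iterated conditioning.

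Writing $X_N = \sum_{k=1}^N D_k$ with martingale differences $D_k = X_k - X_{k-1}$, the next step is to peel off the last increment using the tower property:
\[
\E[e^{\lambda X_N}] = \E\!\left[e^{\lambda X_{N-1}} \cdot \E[e^{\lambda D_N} \mid \calF_{N-1}]\right],
\]
where $\calF_{N-1}$ denotes the natural filtration. The key ingredient here is Hoeffding's lemma: conditional on $\calF_{k-1}$, $D_k$ has mean zero and is almost surely in an interval of length at most $2c_k$, so by convexity of $\lambda \mapsto e^{\lambda x}$ on a bounded interval,
\[
\E[e^{\lambda D_k} \mid \calF_{k-1}] \leq \exp\!\left(\tfrac{\lambda^2 c_k^2}{2}\right).
\]
The proof of Hoeffding's lemma itself is a short calculation: bound $e^{\lambda x}$ on $[-c,c]$ by the chord through the endpoints, take expectations, and then bound the resulting function of $\lambda$ by $e^{\lambda^2 c^2/2}$ via a Taylor expansion of its logarithm (this is the one technical step where some care is needed, and is the main ``obstacle'' in an otherwise mechanical proof).

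Iterating the tower-property bound $N$ times yields
\[
\E[e^{\lambda X_N}] \leq \exp\!\left(\tfrac{\lambda^2}{2}\sum_{k=1}^N c_k^2\right),
\]
so plugging back into Markov gives
\[
\Pr[X_N \geq \eps] \leq \exp\!\left(-\lambda \eps + \tfrac{\lambda^2}{2}\sum_{k=1}^N c_k^2\right).
\]
Optimizing the right-hand side over $\lambda > 0$ (the minimizer is $\lambda^{\star} = \eps/\sum_{k=1}^N c_k^2$) produces the one-sided bound, and combining with the symmetric argument for $-X_N$ completes the proof. The anticipated difficulty is purely in Hoeffding's lemma --- everything else is bookkeeping with conditional expectations and a one-dimensional optimization.
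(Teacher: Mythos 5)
The paper states Azuma--Hoeffding as a standard tool in its probability-tools appendix and gives no proof, so there is no in-paper argument to compare against. Your proposal is the canonical Chernoff exponential-moment proof: reduce to one tail by symmetry, bound $\E[e^{\lambda(X_N - X_0)}]$ by peeling off increments via the tower property and Hoeffding's lemma, then optimize $\lambda$, and finally union-bound the two tails. This is correct. The one place that needs to be written out carefully is exactly the spot you flag: Hoeffding's lemma requires that, conditional on $\calF_{k-1}$, the increment $D_k$ has mean zero (the martingale property) and lies a.s.\ in an interval of length at most $2c_k$ (from $|D_k|\leq c_k$), which gives $\E[e^{\lambda D_k}\mid\calF_{k-1}]\leq \exp\bigl(\lambda^2(2c_k)^2/8\bigr)=\exp\bigl(\lambda^2 c_k^2/2\bigr)$, matching the bound you use. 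Iterating and optimizing $\lambda=\eps/\sum_k c_k^2$ gives the stated constant, so the argument goes through.
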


Also, we use the in-expectation Lipschitz condition~(\refeq{eq:Lip}) to derive a high-probability, per-realization version, which is the version directly used by our analysis. It is a simple corollary of the Azuma-Hoeffding inequality, independent of the rest of the analysis.

\begin{lemma}[Per-realization Lipschitz property]
\label{lm:prob}
Fix round $t$, two sequences of arms
    $(y_1 \LDOTS y_t)$ and $(y'_1 \LDOTS y'_t)$,
and failure probability $\delta>0$. 
Then with probability at least $1 - \delta$ we have:
\begin{align}\label{eq:lm:prob}
\sum_{\tau \in [t]} g_\tau(y_\tau) - \sum_{\tau \in [t]} g_\tau(y_\tau')
    \leq 2\sqrt{2t \ln (2/\delta)}+
        \sum_{\tau \in [t]} \dist(y_\tau,y_\tau').
\end{align}

\end{lemma}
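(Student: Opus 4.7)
The plan is to apply the Azuma--Hoeffding inequality (Lemma~\ref{lem:azuma}) directly to a centered version of the round-wise difference between the two sequences. Concretely, define
\begin{align*}
W_\tau \;:=\; g_\tau(y_\tau) - g_\tau(y'_\tau) \;-\; \E\sbr{g_\tau(y_\tau) - g_\tau(y'_\tau)},
\qquad \tau \in [t].
\end{align*}
Since each $g_\tau$ is valued in $[0,1]$, both the raw difference and its expectation lie in $[-1,1]$, hence $|W_\tau|\leq 2$.

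Next, set up the martingale. The sequences $(y_\tau)$ and $(y'_\tau)$ are fixed in advance of round $1$, so under the natural filtration $\calF_\tau = \sigma(g_1,\ldots,g_\tau)$ generated by the oblivious reward functions, the partial sums $S_n := \sum_{\tau \leq n} W_\tau$ (with $S_0 = 0$) form a martingale with bounded increments $c_\tau = 2$. Azuma--Hoeffding then gives
\begin{align*}
\Pr\sbr{\abs{S_t} \geq \eps} \;\leq\; 2\exp\rbr{-\eps^{2}/(8t)}.
\end{align*}
Setting the right-hand side equal to $\delta$ and solving yields the threshold $\eps = 2\sqrt{2t\,\ln(2/\delta)}$.

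On the complementary event (of probability at least $1-\delta$), I would rewrite
\begin{align*}
\textstyle \sum_{\tau\in[t]} g_\tau(y_\tau) - \sum_{\tau\in[t]} g_\tau(y'_\tau)
= S_t + \sum_{\tau\in[t]} \E\sbr{g_\tau(y_\tau) - g_\tau(y'_\tau)}
\end{align*}
and upper-bound each expectation by $\dist(y_\tau,y'_\tau)$ using the in-expectation Lipschitz condition \eqref{eq:Lip}. Combined with the Azuma bound on $S_t$, this yields exactly \eqref{eq:lm:prob}.

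The only delicate point is justifying the martingale structure: it relies on the fact that the two arm sequences do not depend on the realized reward functions, which is exactly how the lemma is stated (``fixed sequences''). If one later wants to apply the inequality to algorithmically-chosen sequences that are merely predictable with respect to a richer filtration, one would need to augment this with an independence-across-rounds assumption on $(g_\tau)_{\tau\in[T]}$ (or to invoke a conditional version of the Lipschitz hypothesis). Beyond this bookkeeping, the argument is a single application of Azuma--Hoeffding with $c_\tau = 2$ and no further obstacles arise.
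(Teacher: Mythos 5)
Your proof is correct and reaches exactly the stated bound. It differs from the paper's route in a small but real way: the paper applies Azuma--Hoeffding separately to the two martingales $\sum_{\tau\leq n} \bigl(g_\tau(y_\tau)-\E[g_\tau(y_\tau)]\bigr)$ and $\sum_{\tau\leq n} \bigl(g_\tau(y'_\tau)-\E[g_\tau(y'_\tau)]\bigr)$, each with increment bound $1$, and then combines the two deviations with the in-expectation Lipschitz condition \eqref{eq:Lip}; you instead center the round-wise \emph{difference} $g_\tau(y_\tau)-g_\tau(y'_\tau)$ to build a single martingale with increment bound $2$ and apply Azuma once. Both give $2\sqrt{2t\ln(2/\delta)}$, but your single-martingale version is slightly cleaner: the paper's two-application version, as written, needs a union bound over two failure events (so strictly it yields $1-2\delta$ unless one sets the per-event failure probability to $\delta/2$, or uses one-sided Azuma for each piece), whereas yours gives $1-\delta$ directly. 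Your arithmetic with $c_\tau=2$ is right ($\sum c_\tau^2 = 4t$, so the exponent is $\eps^2/(8t)$), and your remark about obliviousness is apt — the paper indeed only ever invokes this lemma on ``canonical'' arm sequences fixed before round 1, taking a union bound over a finite collection of them, so the martingale structure is exactly as you describe.
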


\begin{proof}
We apply Azuma-Hoeffding inequality to martingale
    $Y_t = \sum_{\tau \in [t]} g_\tau(y_\tau) - \E[ g_\tau(y_\tau) ]$.
We conclude that
$\Pr\sbr{ | Y_t | \geq \sqrt{2t \ln (2/\delta)} }\leq \delta$.
In other words, with probability at least $1 - \delta$,
\begin{equation*}%
\left| \sum_{\tau=1}^t g_\tau(y_\tau) - \sum_{\tau=1}^t \E \left[ g_\tau(y_\tau) \right] \right| \leq \sqrt{2t \ln (2/\delta)}.
\end{equation*}
A similar inequality holds for sequence $(y'_1 \LDOTS y'_t)$. Combining
both inequalities with \refeq{eq:Lip} gives the stated result.
\end{proof}

\newpage

\section{Implementation: Multiplicative Update (Proof of Lemma~\ref{lem:wu-rule})}
\label{app:MWU}

We prove Lemma~\ref{lem:wu-rule}, which is required for a computationally efficient implementation of the algorithm, and is used heavily throughout the analysis. We restate this lemma for completeness.

\weightupdate*

\begin{proof}
We prove the lemma by using induction on the zooming tree at round $t+1$, specifically on the path from node $\root$ from $u$. For the base case, if the set of active nodes at round $t+1$ included only node $\root$ then from \refeq{eq:alg:MW}:
\begin{align*}
w_{t+1, \eta}( \root) = w_{t, \eta}(\root) \exp\left( \eta \hg_t(\root) \right) = \exp\left(\eta \sum_{\tau \in [t]} \hg_\tau(\root) \right).
\end{align*}
So \refeq{eq:weight-characterization} holds, since $\childprod(\root) = 1$, as $\root$ is the root node of the tree (i.e., it has no ancestors). We next assume that the active node at round $t+1$ is $\xi$ and that \refeq{eq:weight-characterization} holds for the zooming tree until node $\xi$, such that $\xi = \parent(u)$, i.e.,:
\begin{equation}\label{eq:parent-u}
w_{t+1,\eta}(\xi) = \frac{1}{\childprod(\xi)} \cdot \exp \left(\eta \sum_{\tau \in [t]} \hg_\tau(\act_\tau(\xi)) \right).
\end{equation}
Next assume that the active node at round $t+1$ is node $u$. Then, from \refeq{eq:weight-characterization}
\begin{align*}
w_{t+1, \eta}(u) = w_{t,\eta}(u) \exp \left(\eta \hg_t(u)\right) = w_{t,\eta}(\act_t(u)) \exp \left(\eta \hg_t(\act_t(u))\right)
\end{align*}
since by definition for all the rounds $\tau \leq t+1$ during which $u$ is active $\act_\tau(u) = u$. By definition, from round $\tau_0(u)$ until round $t+1$ node $u$ has not been zoomed-in. Hence, by the weight-update rule for nodes that are not further zoomed-in we have that:
\begin{align*}
w_{t+1,\eta}(u) &= w_{\tau_0(u)-1,\eta}(u) \cdot \prod_{\tau = \tau_0(u)}^t \exp \left(\eta \hg_\tau(\act_\tau(u))\right) \\
& = w_{\tau_0(u)-1,\eta}(u) \cdot \exp \left(\eta \sum_{\tau = \tau_0(u)}^{t}\hg_\tau(\act_\tau(u))\right). \numberthis{\label{eq:weight-update}}
\end{align*}
Since $\tau_0(u)-1$ is the last round of $\xi$'s lifetime, it is the round that $\xi$ got zoomed-in, and the weight of $u$ was initialized to be $1/|\c(\xi)|$ the weight of $\xi$. Hence, \refeq{eq:weight-update} becomes:
\begin{align*}
w_{t+1,\eta}(u) &= \frac{1}{|\c(\xi)|} w_{\tau_1(\xi), \eta}(\xi) \cdot \exp \left(\eta \sum_{\tau = \tau_0(u)}^{t}\hg_\tau(\act_\tau(u))\right)
\end{align*}
But for the rounds where node $\xi$ was active, \refeq{eq:parent-u} was true for node $\xi$. Since $\xi = \parent(u)$, then $\act_\tau(\xi) = \act_\tau(u), \forall \tau \leq \tau_0(u)$. Hence, using \refeq{eq:parent-u} in the latter, we obtain:
\begin{align*}
w_{t+1,\eta}(u) &= \frac{1}{|\c(\xi)| \cdot \childprod(\xi)} \cdot \exp \left(\eta \sum_{\tau \in [\tau_1(\xi)]} \hg_\tau(\act_\tau(u)) \right) \cdot \exp \left(\eta \sum_{\tau = \tau_0(u)}^{t} \hg_\tau(\act_\tau(u)) \right) \\
&= \frac{1}{\childprod(u)} \cdot \exp \left(\eta \sum_{\tau \in [t]} \hg_\tau(\act_\tau(u)) \right).
\end{align*}
where for the penultimate equation, we used the definition of $\childprod(u)$.
\end{proof}

\section{Regret Analysis (Details)}
\label{sec:analysis-details}

We analyze \advzoom and prove the main theorem (Theorem~\ref{thm:main-regr}). We follow the proof sketch in Section~\ref{sec:analysis}, with subsections corresponding to the ``parts'' of the proof sketch. While Theorem~\ref{thm:main-regr} makes one blanket assumption on the parameters, we explicitly spell out which assumptions are needed for which lemma.

In what follows, we use $d$ to denote the covering dimension, for some constant multiplier $\gamma_0>0$. The dependence on $\gamma_0$ is only logarithmic, we suppress it for clarity. 

\xhdr{From randomized to deterministic rewards.}
Define the \emph{representative set} of arms as
\begin{align}\label{eq:repr}
    \reprset := \cbr{ \repr(u),\, \xst_{[t]}(u):\;
        \text{tree nodes $u$ with $h(u)\leq 1+\log T$, rounds $t\in[T]$}}.
\end{align}
Only tree nodes of height at most $1+\log(T)$ can be activated by the algorithm (as per Lemma~\ref{lem:height-node}), so $\reprset$ contains all arms that can possibly be pulled. Note that $|\reprset| \leq (T+1)\,T^d$.

A \emph{canonical arm-sequence} is a sequence of arms $y_1 \LDOTS y_t \in \reprset$, for some round $t$, which contains at most $\log T$ switches. As it happens, we will only invoke Lemma~\ref{lm:prob} on canonical arm-sequences. Since there are at most  $|\reprset|^{\log T}$ such sequences, we can take a Union Bound over all of them. Formally, we define the \emph{clean event} for rewards, denoted $\rewE$, which asserts that \refeq{eq:lm:prob} in Lemma~\ref{lm:prob} holds with
    $\delta = T^{-2-d\log T}$
for all rounds $t$ and all canonical arm-sequences
    $(y_1 \LDOTS y_t)$, $(y'_1 \LDOTS y'_t)$.
Lemma~\ref{lm:prob} implies that $\Pr\sbr{\rewE}\geq 1-\nicefrac{1}{T}$.

From here on, we condition on $\rewE$ without further mention. Put differently, we assume that rewards are determinsitic and satisfy $\rewE$. Any remaining randomness is due to the algorithm's random seed.

\subsection{Properties of the Zoom-In Rule}
\label{app:zoom-rule}

{This part of the analysis depends on the zoom-in rule, but not on the selection rule, \ie it works no matter how distribution $\pi_t$ is chosen.}

We start by defining the \emph{zooming invariant} which holds for \emph{all} active nodes. The zooming invariant is a property of the confidence that we have on the currently active nodes, and it is proved inductively using the fact that when a node does not get zoomed-in, then either the instantaneous or the aggregate rules are not satisfied (\refeq{eq:zoom-in-rule} in Section~\ref{sec:algo}).

\begin{restatable}[Zooming Invariant]{lemma}{zoominginvariant}\label{lem:zooming-invariant}
If node $u$ is active at round $t$, then: $\CONFtot_t(u) \geq (t-1) \cdot L(u)$.
\end{restatable}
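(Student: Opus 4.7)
The plan is induction on $t$, using the contrapositive of the zoom-in rule \refeq{eq:zoom-in-rule}: whenever an active node $u$ is not zoomed-in at a past round $\tau$, either the aggregate or the instantaneous sub-condition must have failed at $\tau$. The base case $t=1$ is immediate since only $\root$ is active and $\CONFtot_1(\root)>0$. For the inductive step I would fix an active node $u$ at round $t+1$ and split on whether $u$ was continued from round $t$ or newly activated at $t+1$.

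In the continued case, $u$ was active but not zoomed-in at $t$. The easy sub-case is that the aggregate rule failed at $t$, so $\CONFtot_t(u) > t\,L(u)$; since $\CONFtot_\tau(u)$ is non-decreasing in $\tau$ (both $1/\beta_\tau$ and the new summand $\beta_\tau/\pi_\tau(\act_\tau(u))$ are non-negative, and $1/\beta_\tau$ is non-decreasing because $\beta_\tau$ is), this immediately lifts to $\CONFtot_{t+1}(u) \geq t\,L(u)$. The harder sub-case is when only the instantaneous rule failed at $t$, giving $\beta_t/\pi_t(u) > L(u) - \tbeta_t$ via $e^{L(u)}-1 \geq L(u)$; a single such inequality does not by itself increment $\CONFtot$ by the needed $L(u)$, so I would backtrack to the most recent round $\tau^\star \in [\tau_0(u), t]$ at which the aggregate rule did fail (and, if none exists, set $\tau^\star = \tau_0(u)-1$ and bootstrap from the IH applied to the parent of $u$). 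By construction the instantaneous rule failed at every $\tau \in (\tau^\star, t]$, so summing $\beta_\tau/\pi_\tau(u) > L(u) - \tbeta_\tau$ across this interval and absorbing $\sum_\tau \tbeta_\tau$ via condition \refeq{ass:tilde-beta} --- whose integral form, combined with monotonicity of $\tbeta$, yields $\sum_{\tau=\tau^\star+1}^{t} \tbeta_\tau \leq 1/\beta_t - 1/\beta_{\tau^\star}$ --- exactly cancels the $1/\beta$ telescope and produces $\CONFtot_{t+1}(u) \geq \tau^\star L(u) + (t-\tau^\star)\,L(u) = t\,L(u)$.

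In the newly activated case, the parent $v$ was active at round $t$ and was zoomed-in; the IH at $v$ gives $\CONFtot_t(v) \geq (t-1)\,L(v) = 2(t-1)\,L(u)$, using $L(v) = 2\,L(u)$ in the $d$-cube setting. Since $\CONFtot_{t+1}(u)$ differs from $\CONFtot_t(v)$ only by non-negative increments (the new $\beta_{t+1}/\pi_{t+1}(u)$ and the $1/\beta$ jump), this yields $\CONFtot_{t+1}(u) \geq 2(t-1)\,L(u) \geq t\,L(u)$ for all $t \geq 2$. The case $t=1$ does not actually arise: the aggregate zoom-in rule applied to the root at round $1$ requires $\CONFtot_1(\root) \leq L(\root) \leq 1$, but the initialization $\beta_1 = \nicefrac{1}{2}$ forces $\CONFtot_1(\root) = 1/\beta_1 + \beta_1 = \nicefrac{5}{2}$, so the root cannot be zoomed-in at round $1$.

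I expect the instantaneous-only sub-case of the continued case to be the main technical obstacle: a single-round failure does not suffice, and one has to amortize several consecutive instantaneous failures together with the integral-form constraint in \refeq{ass:tilde-beta} so that the accumulated $\tbeta$-deficit cancels precisely against the $1/\beta$ telescope. Secondary bookkeeping is needed in passing between the continuous form of \refeq{ass:tilde-beta} and the discrete sums appearing in $\CONFtot_\tau(u)$, which is exactly why the monotonicity of $\tbeta$ is baked into the assumption.
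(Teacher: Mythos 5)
Your proof is correct and takes essentially the same route as the paper's: both argue via the contrapositive of the zoom-in rule, summing the instantaneous-failure inequality $\beta_\tau/\pi_\tau(u)+\tbeta_\tau > e^{L(u)}-1 \geq L(u)$ over consecutive rounds and absorbing $\sum_\tau \tbeta_\tau$ through the integral condition \eqref{ass:tilde-beta} so that the $1/\beta$ telescope cancels against the aggregate-failure bound. Your explicit induction on $t$ with the parent-bootstrap for newly-activated nodes makes rigorous a corner the paper treats implicitly (it tacitly starts from an aggregate-failure interval at $\tau_0(u)$, which does in fact always hold, but establishing this requires the parent's invariant in precisely the way you do).
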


\begin{proof}%
Since $u$ is active at round $t$ then $t \in [\tau_0(u), \tau_1(u)]$. We first focus on rounds where $z_\tau(\cdot) = 0$. Since for all rounds $\tau \in [\tau_0(u), \tau_1(u))$ node $u$ was \emph{not} zoomed-in, it must have been because either the instantaneous or the aggregate rules (\refeq{eq:zoom-in-rule}) were not true. Assume first that the aggregate rule was not satisfied for rounds $\tau \in [\tau_0(u), t_1]$ such that $t_1 \leq t$. In other words, from \refeq{eq:zoom-in-rule} we have that:
\begin{equation}\label{eq:cumulative-false}
\sum_{\tau=1}^{t_1} \frac{\beta_\tau}{\pi_{\tau}(\act_\tau(u))} + \frac{1}{\beta_{t_1}} \geq  t_1 \cdot L(u)
\end{equation}
Note that if $t_1 = t$, then the lemma follows directly. Let $t_2 \geq t_1 + 1$ be the round such that for all rounds $\tau \in [t_1 + 1, t_2]$ the aggregate rule \emph{does} hold, but the instantaneous \emph{does not}. Hence, for all such rounds it holds that:
\begin{equation}\label{eq:insta-false}
\frac{\beta_\tau}{\pi_{\tau}(u)} + \tbeta_\tau \geq e^{L(u)} - 1 \geq L(u)
\end{equation}
where the last inequality is due to the property that $e^x - 1 \geq x$. Summing up both sides of \refeq{eq:insta-false} for all rounds $\tau \in [t_1 +1, t_2]$ we get that:
\begin{equation}\label{eq:insta-summed-up}
\sum_{\tau = t_1 + 1}^{t_2} \frac{\beta_\tau}{\pi_{\tau}(u)} + \sum_{\tau = t_1+1}^{t_2} \tbeta_\tau \geq \sum_{\tau = t_1 + 1}^{t_2} L(u)
\end{equation}
Since $\tbeta_t$ is positive: $\sum_{\tau = t_1 + 1}^{t_2} \tbeta_t \leq \sum_{\tau = t_1}^{t_2} \tbeta_t$, and by the assumption on $\tbeta_t$ (\refeq{ass:tilde-beta} in Section~\ref{sec:algo}) we can relax the left hand side of \refeq{eq:insta-summed-up} and obtain:
\begin{equation}\label{eq:insta-summed-up-final}
\sum_{\tau = t_1 + 1}^{t_2} \frac{\beta_\tau}{\pi_{\tau}(u)} + \frac{1}{\beta_{t_2}} - \frac{1}{\beta_{t_1}} \geq (t_2 - t_1) L(u)
\end{equation}
Note that for all rounds $\tau \in [\tau_0(u), \tau_1(u)]$: $\act_\tau(u) = u$. As a result, summing up \refeq{eq:cumulative-false} and \refeq{eq:insta-summed-up-final} we have that:
\begin{equation*}
\sum_{\tau = 1}^{t_2} \frac{\beta_\tau}{\pi_{\tau}(\act_\tau(u))} + \frac{1}{\beta_{t_2}} \geq t_2 \cdot L(u)
\end{equation*}
Applying the same arguments for all rounds $\tau \in [1, t]$ we have that:
\begin{equation}\label{eq:inv-until-last-step}
\sum_{\tau = 1}^{t} \frac{\beta_\tau}{\pi_{\tau}(\act_\tau(u))} + \frac{1}{\beta_{\tau_1(u) - 1}} \geq  tL(u)
\end{equation}
which completes our proof if $t \leq \tau_1(u)-1$. In order to complete the proof we show what happens for the case that $t = \tau_1(u)$.
Note that when node $u$ gets zoomed-in at round $\tau_1(u)$, both the instantaneous and the aggregate rules hold (and $z_\tau(u) = 1$). So we have that:
\begin{align*}
\sum_{\tau = 1}^{\tau_1(u)} \frac{\beta_{\tau}}{\pi_{\tau}(u)} + \frac{1}{\beta_{\tau_1(u)}} &\geq \sum_{\tau = 1}^{\tau_1(u)-1} \frac{\beta_{\tau}}{\pi_{\tau}(\act_\tau(u))} + \frac{1}{\beta_{\tau_1(u)}} &\tag{$\beta_\tau, \pi_\tau(\cdot) > 0$}\\
&\geq \sum_{\tau = 1}^{\tau_1(u)-1} \frac{\beta_{\tau}}{\pi_{\tau}(\act_\tau(u))} + \frac{1}{\beta_{\tau_1(u)-1}} &\tag{$\beta_\tau \leq \beta_{\tau-1}$}\\
&\geq \left(\tau_1(u) - 1\right) L(u) &\tag{\refeq{eq:inv-until-last-step}} %
\end{align*}
This concludes our proof.
\end{proof}

We define the \inhbias as: $\b(u) = \tau_1(u)\cdot L(u)$, i.e., \inhbias is the upper bound on the ``total bias" suffered when we are at node $u$. The next lemma shows that the zooming tree cannot grow arbitrarily large as time goes by, by bounding the de-activation time of a node $u$ with the inverse of its diameter {(\refeq{eq:sketch-deactivation-time}%
)}; in fact, we show that the height of the zooming tree at any node $u$, denoted by $h(u)$, cannot be larger than $\log T$.

\begin{restatable}[Bound on Height of Node]{lemma}{nodeheight}\label{lem:height-node}
Assume that $\tbeta_\tau \geq \beta_\tau, \forall \tau$ and $\beta_\tau \geq 1/\tau$. Then, if node $u$ gets de-activated at round $\tau_1(u)$, it holds that $b(u)\geq 1$ and as a result, $h(u) \leq \log (\tau_1(u)) \leq \log T$.
\end{restatable}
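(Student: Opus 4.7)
The plan is to prove the two claims separately, beginning with $b(u)\ge 1$ and deducing the height bound from it using the geometry of the zooming tree.

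First, I would exploit the fact that at the deactivation round $\tau=\tau_1(u)$ the node $u$ itself is active, so $\act_{\tau_1(u)}(u)=u$, and the indicator in \eqref{eq:zoom-in-rule} fires. In particular, the \emph{instantaneous} part of the zoom-in rule must hold:
\[
\tbeta_{\tau_1(u)} + \frac{\beta_{\tau_1(u)}}{\pi_{\tau_1(u)}(u)} \;\le\; e^{L(u)}-1.
\]
Since $\pi_{\tau_1(u)}(u)\le 1$ and $\tbeta_\tau\ge\beta_\tau\ge 1/\tau$ by hypothesis, the left-hand side is at least $\beta_{\tau_1(u)}\ge 1/\tau_1(u)$. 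Combined with the elementary bound $e^{x}-1\le C\cdot x$ valid on $x\in[0,1]$ (using $L(\cdot)\le 1$ w.l.o.g.), this yields $\tau_1(u)\cdot L(u)\ge 1/C$, i.e.\ $b(u)\ge 1$ up to an absolute constant absorbed into the normalization of $L$.

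Second, the height bound will follow from the diameter-halving structure of the zooming tree together with $b(u)\ge 1$. In the $d$-dimensional unit cube under $\ell_\infty$, each child quadrant has diameter exactly half that of its parent, and the root has $L(\root)=1$; by induction on the tree, any node $u$ at height $h(u)$ satisfies $L(u)=2^{-h(u)}$. Substituting this into $\tau_1(u)\ge 1/L(u)$ (the content of $b(u)\ge 1$), I get $\tau_1(u)\ge 2^{h(u)}$, and taking logarithms yields $h(u)\le\log\tau_1(u)\le\log T$, as claimed.

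The main obstacle is really in the first step: the instantaneous rule is stated with threshold $e^{L(u)}-1$ rather than $L(u)$, so a naive estimate only produces $b(u)\ge 1/C$ for some absolute $C>1$ (since $e^x-1>x$). This then costs at most an additive constant in the height bound, matching the paper's earlier assertion that active nodes have height at most $1+\log T$; this constant needs to be tracked (or absorbed into the normalization of $L$) to get the lemma as stated. Everything else is routine: the second step is purely combinatorial and uses no probabilistic input, and the zoom-in rule is applied only at the single round $\tau_1(u)$, so there is no induction on rounds needed here (unlike the more delicate Lemma~\ref{lem:zooming-invariant}).
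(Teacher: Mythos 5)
Your plan is structurally the same as the paper's: invoke the instantaneous zoom-in rule at the single deactivation round, use $\pi_{\tau_1(u)}(u)\le 1$ to extract a lower bound on $L(u)$, then convert to a height bound via $L(u)=2^{-h(u)}$. The second step is fine. However, as you yourself flag, the first step as written only yields $b(u)\ge 1/C$ for some $C>1$, which is weaker than the stated $b(u)\ge 1$; it does not reduce to ``tracking a constant'' or ``renormalizing $L$'' — the lemma literally asserts the constant is $1$, and that exact value is used downstream (e.g.\ in Lemma~\ref{lem:inh-bias} via $h(u)\le\log T$, and to keep $\childprod(u)\le\DblC^{\log T}$).

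The gap is closable with two adjustments you omitted. First, you discarded $\tbeta_{\tau_1(u)}$ when lower-bounding the left-hand side: since $\tbeta_\tau\ge\beta_\tau$ and $\pi\le 1$, the full left-hand side satisfies $\tbeta_{\tau_1(u)}+\beta_{\tau_1(u)}/\pi_{\tau_1(u)}(u)\ge 2\beta_{\tau_1(u)}\ge 2/\tau_1(u)$, not merely $\ge 1/\tau_1(u)$. Second, if you then use the optimal constant in $e^x-1\le(e-1)x$ on $[0,1]$, you get $2/\tau_1(u)\le(e-1)L(u)$, i.e.\ $b(u)\ge 2/(e-1)>1$, which is even a little better than required. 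Alternatively, the paper avoids the linearization entirely: rearranging the instantaneous rule plus $\pi\le 1$ gives $L(u)\ge\ln\bigl(1+\beta_{\tau_1(u)}+\tbeta_{\tau_1(u)}\bigr)\ge\ln\bigl(1+2\beta_{\tau_1(u)}\bigr)$, and then the elementary inequality $\ln(1+x)\ge x/(1+x)$ together with $\beta_\tau\ge 1/\tau$ gives $\tau_1(u)L(u)\ge 1$ directly. Either route gives the lemma as stated; your write-up, which drops $\tbeta$ and then settles for an unspecified $C$, does not.
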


\begin{proof}%
From the instantaneous rule in \refeq{eq:zoom-in-rule} for $z_t(u)$ we have that if we zoom-in at a node $u$ at round $\tau_1(u)$ the following must be true:
\begin{equation}\label{eq:zoom-lb-pi}
\frac{\beta_{\tau_1(u)}}{\pi_t(u)} + \tbeta_{{\tau_1(u)}} \leq e^{L(u)}-1 \Leftrightarrow \pi_{\tau_1(u)}(u) \geq \frac{\beta_{\tau_1(u)}}{e^{L(u)}-1 - \tbeta_{\tau_1(u)}}
\end{equation}
Because of the fact that $\pi_\tau(\cdot)$ is a valid probability distribution: $\pi_\tau(v) \leq 1, \forall v \in A_t$. This imposes the following restriction on the right hand side of \refeq{eq:zoom-lb-pi}:
\begin{align*}%
L(u) &\geq \ln \left(\beta_{\tau_1(u)} + \tbeta_{\tau_1(u)} + 1 \right) \geq \ln \left( 2\beta_{\tau_1(u)} + 1\right) \\
&\geq \frac{2\beta_{\tau_1(u)}}{2\beta_{\tau_1(u)} + 1} = \frac{1}{1 + \frac{1}{2\beta_{\tau_1(u)}}} &\tag{$\ln(1 + x) \geq \frac{x}{x+1}, x \geq -1$} \\
&\geq \frac{1}{\tau_1(u)} \numberthis{\label{eq:height}}
\end{align*}
where the last inequality is due to the fact that $\beta_\tau \geq 1/\tau, \forall \tau$ according to the assumptions of the lemma. Thus, $b(u) \geq 1$. Since every time that a node gets zoomed-in its diameter gets halved, we have that if node $u$ is found at height $h(u)$, then: $L(u) = L(u_0) 2^{-h(u)} = 2^{-h(u)}$ because $L(u_0) = 1$. Substituting this to \refeq{eq:height} and taking logarithms we get that $h(u) \leq \log (\tau_1(u)) \leq \log T$.
\end{proof}

Adding up to the point made earlier (i.e., that the action tree cannot grow arbitrarily large), in the next lemma we show that the lifespan of any node is \emph{strongly} correlated with the lifespan of its ancestors. To be more precise, we show that the de-activation time of a node $u$ is (approximately) at least \emph{twice} larger than its activation {(\refeq{eq:sketch-life-span}%
)}. An important implication of this is that once a node $u$ gets zoomed in at a round $t$ then it will not be possible to immediately zoom-in on any of its children, as we remarked on Section~\ref{sec:algo}.

\begin{restatable}[Lifespan of Node Compared to Ancestors]{lemma}{lifespan}\label{lem:lifespan}
Let $\{\beta_t\}_{t=1}^T$ be a non-increasing sequence. Then, if $u$ gets de-activated at round $\tau_1(u)$, it holds that $\tau_1(u) \geq 2 \tau_1(\xi) - 2$, where $\xi$ is $u$'s parent.
\end{restatable}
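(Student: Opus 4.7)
The plan is to chain together three facts that the zoom-in rule already guarantees: (i) the aggregate rule for $u$ at its deactivation round, (ii) a lower bound on $\CONFtot_{\tau_1(u)}(u)$ obtained by restricting the sum to $\tau \leq \tau_1(\xi)$, and (iii) the zooming invariant (Lemma~\ref{lem:zooming-invariant}) applied to the parent $\xi$ at its own deactivation round $\tau_1(\xi)$. Combined with the fact that in the $d$-dimensional unit cube with $\ell_\infty$ metric the diameter exactly halves upon zooming in, i.e., $L(\xi) = 2L(u)$, this directly yields $\tau_1(u) \geq 2\tau_1(\xi)-2$.

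More precisely, I would first note that since $u$ is zoomed in at round $\tau_1(u)$, the aggregate part of the zoom-in rule~\refeq{eq:zoom-in-rule} fires, so
\[ \CONFtot_{\tau_1(u)}(u) \;\leq\; \tau_1(u)\cdot L(u). \]
Next, I split the sum defining $\CONFtot_{\tau_1(u)}(u)$ at $\tau_1(\xi)$. For every $\tau\leq \tau_1(\xi)$ the active ancestor of $u$ is exactly the active ancestor of $\xi$ (they share the same ancestor chain above $\xi$, and for $\tau\in[\tau_0(\xi),\tau_1(\xi)]$ both equal $\xi$), so the first piece of the sum coincides with the corresponding piece for $\xi$. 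The remaining terms for $\tau\in[\tau_0(u),\tau_1(u)]$ are nonnegative and can be dropped. Finally, since $\{\beta_t\}$ is non-increasing, $1/\beta_{\tau_1(u)} \geq 1/\beta_{\tau_1(\xi)}$. Putting these three observations together,
\[ \CONFtot_{\tau_1(u)}(u) \;\geq\; \tfrac{1}{\beta_{\tau_1(\xi)}} + \sum_{\tau=1}^{\tau_1(\xi)} \tfrac{\beta_\tau}{\pi_\tau(\act_\tau(\xi))} \;=\; \CONFtot_{\tau_1(\xi)}(\xi). \]

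To close the argument I would apply the zooming invariant to $\xi$, which is active at round $\tau_1(\xi)$: $\CONFtot_{\tau_1(\xi)}(\xi)\geq (\tau_1(\xi)-1)\,L(\xi) = 2(\tau_1(\xi)-1)\,L(u)$. Chaining inequalities yields $\tau_1(u)\,L(u) \geq 2(\tau_1(\xi)-1)\,L(u)$, and dividing by the strictly positive $L(u)$ gives the claim.

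The only step I expect to require a bit of care is the identification $\act_\tau(u)=\act_\tau(\xi)$ for $\tau\leq \tau_1(\xi)$; this is a direct consequence of the tree structure, but needs to be stated since $u$ does not yet exist in the active set during those rounds. Everything else is an immediate consequence of the zoom-in rule and the monotonicity of $\beta_t$; no new probabilistic or combinatorial input is needed.
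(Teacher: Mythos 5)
Your proof is correct and follows the same strategy as the paper's: combine the aggregate zoom-in rule at $\tau_1(u)$, the zooming invariant for the parent $\xi$ at $\tau_1(\xi)$, the identification $\act_\tau(u)=\act_\tau(\xi)$ for $\tau\leq\tau_1(\xi)$, and the relation $L(\xi)=2L(u)$. The one place you diverge is in how the cross term $\tfrac{1}{\beta_{\tau_1(u)}}-\tfrac{1}{\beta_{\tau_1(\xi)}}$ is handled: the paper routes through the assumption \eqref{ass:tilde-beta} to express this quantity as $\sum_{\tau=\tau_0(u)}^{\tau_1(u)}\tbeta_\tau$ before discarding it as nonnegative, whereas you simply observe that it is nonnegative directly from the monotonicity of $\beta_t$ and drop it (together with the $\sum_{\tau\geq\tau_0(u)}\beta_\tau/\pi_\tau$ tail). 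Your version is cleaner and, notably, only uses the hypothesis actually stated in the lemma (monotonicity of $\beta_t$), rather than invoking the separate $\tbeta_\tau$ condition; the paper's detour is harmless but not needed for this particular conclusion. Both arguments are correct.
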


\begin{proof}%
Since $\xi$ is $u$'s parent, then the two nodes share the same ancestry tree. Hence, $\forall \tau \in [1, \tau_1(\xi)]: \act_\tau(\xi) = \act_\tau(u)$ and $\forall \tau \in [\tau_0(u), \tau_1(u)]: \act_\tau(u) = u$. For all rounds $\tau \in [\tau_0(\xi), \tau_1(\xi)]$ the zooming invariant holds for node $\xi$. Hence, for $t = \tau_1(\xi)$ and using the notation $b(\xi) = \tau_1(\xi) L(\xi)$:
\begin{equation*}
\sum_{\tau=1}^{\tau_1(\xi)} \frac{\beta_\tau}{\pi_{\tau}(\act_\tau(\xi))} + \frac{1}{\beta_{\tau_1(\xi)}} \geq \b(\xi) - L(\xi)
\end{equation*}
Since the sequence $\{\beta_t \}_{t=1}^T$ is (by assumption) non-increasing and $\tau_1(u) \geq \tau_1(\xi)$ we can relax the left hand side of the above inequality and get:
\begin{equation}\label{eq:invariant-v}
\sum_{\tau=1}^{\tau_1(\xi)} \frac{\beta_\tau}{\pi_{\tau}(\act_\tau(\xi))} + \frac{1}{\beta_{\tau_1(u)}} \geq \b(\xi) - L(\xi)
\end{equation}
By definition, on round $\tau_1(u)$ we decided to zoom-in, hence, the aggregate component of the zoom-in rule was true:
\begin{align*}
&\b(u) \geq \sum_{\tau=1}^{\tau_1(u)}\frac{\beta_\tau}{\pi_{\tau}(\act_\tau(u))} + \frac{1}{\beta_{\tau_1(u)}} \\
&= \sum_{\tau=1}^{\tau_1(\xi)}\frac{\beta_\tau}{\pi_{\tau}(\act_\tau(u))} + \sum_{\tau=\tau_0(u)}^{\tau_1(u)}\frac{\beta_\tau}{\pi_{\tau}(\act_\tau(u))}  + \frac{1}{\beta_{\tau_1(u)}} \\
&= \sum_{\tau=1}^{\tau_1(\xi)}\frac{\beta_\tau}{\pi_{\tau}(\act_\tau(\xi))} + \sum_{\tau=\tau_0(u)}^{\tau_1(u)}\frac{\beta_\tau}{\pi_{\tau}(\act_\tau(u))}  + \frac{1}{\beta_{\tau_1(u)}} &\tag{$\act_\tau(\xi) = \act_\tau(u), \forall \tau \leq \tau_1(\xi)$} \\
&\geq \b(\xi) - L(\xi) - \frac{1}{\beta_{\tau_1(\xi)}} + \sum_{\tau=\tau_0(u)}^{\tau_1(u)}\frac{\beta_\tau}{\pi_{\tau}(\act_\tau(u))}  + \frac{1}{\beta_{\tau_1(u)}} &\tag{zooming invariant for node $\xi$}\\
&\geq \b(\xi) - L(\xi) +\sum_{\tau=\tau_0(u)}^{\tau_1(u)} \tbeta_\tau + \sum_{\tau=\tau_0(u)}^{\tau_1(u)}\frac{\beta_\tau}{\pi_{\tau}(\act_\tau(u))}  &\tag{assumption on $\tbeta_t$}\\
&\geq \b(\xi) - L(\xi) + \sum_{\tau=\tau_0(u)}^{\tau_1(u)}\tbeta_\tau + \sum_{\tau=\tau_0(u)}^{\tau_1(u)}\beta_\tau  &\tag{$\pi_t(\cdot) \leq 1$}\\
&\geq \b(\xi) - L(\xi) \numberthis{\label{eq:inhbias}}
\end{align*}
where the last inequality is due to the fact that $\beta_\tau >0, \forall \tau$. To complete the proof, we use the definition of $\b(\cdot)$ along with the fact that $L(\xi) = 2 L(u)$ (since every time that a node gets de-activated its diameter gets halved) in \refeq{eq:inhbias}.
\end{proof}

{Next, we analyze the \emph{total inherited diameter} of a node $u$ in round $t$, defined as
    $\sum_{\tau \in [t]} L(\act_\tau(u))$.
We relate it to the ``total bias", as expressed by the node's diameter,  that the Lipschitz condition would have imposed on this node had it been active from round $1$ (this is \refeq{eq:sketch-inherited-bias} in Section~\ref{sec:analysis}).}

\begin{restatable}[Inherited Bias Bound]{lemma}{inhbias}\label{lem:inh-bias}
For a node $u$ that is active at round $t$, the total inherited diameter that it has suffered is upper bounded by: $\sum_{\tau \in [t]} L(\act_\tau(u)) \leq 4t \log (T) L(u)$.
\end{restatable}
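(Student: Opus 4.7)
Let $u_0 = \term{root}, u_1, \ldots, u_h = u$ denote the ancestor chain of $u$ in the zooming tree, with $h = h(u)$. Each zoom-in halves the diameter, so $L(u_i) = 2^{-i} = 2^{h-i}L(u)$. The active ancestor $\act_\tau(u)$ equals $u_i$ precisely when $\tau$ lies in the lifespan $[\tau_0(u_i), \tau_1(u_i)]$, with $\tau_0(u_i) = \tau_1(u_{i-1})+1$ for $i\geq 1$ and $\tau_0(u_0)=1$. Since $u$ itself is active at round $t$, its contribution to the sum spans $[\tau_0(u), t]$, and in particular $\tau_1(u_{h-1}) \leq t-1$. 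The first step is to split the sum accordingly:
\begin{align*}
\sum_{\tau \in [t]} L(\act_\tau(u)) = \sum_{i=0}^{h-1} L(u_i)\bigl(\tau_1(u_i) - \tau_0(u_i) + 1\bigr) + L(u)\bigl(t - \tau_0(u) + 1\bigr).
\end{align*}

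The second step uses Lemma~\ref{lem:lifespan}: rearranging $\tau_1(u_{i+1}) \geq 2\tau_1(u_i) - 2$ gives $\tau_1(u_i) \leq \tau_1(u_{i+1})/2 + 1$, and iterating from index $i$ up to $h-1$ (starting from $\tau_1(u_{h-1})\leq t-1$) yields $\tau_1(u_i) \leq t/2^{h-1-i} + 2$ for every $i \leq h-1$. Substituting this bound along with $L(u_i) = 2^{-i}$ shows that ancestor $u_i$ contributes at most $2^{-i}\bigl(t/2^{h-1-i}+2\bigr) = 2tL(u) + 2^{1-i}$; summing over $i=0,\ldots,h-1$ produces $2htL(u)$ plus a geometric remainder bounded by an absolute constant. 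Adding the currently-active-node term $L(u)(t-\tau_0(u)+1) \leq tL(u)$ gives
\begin{align*}
\sum_{\tau \in [t]} L(\act_\tau(u)) \leq (2h+1)\,tL(u) + O(1).
\end{align*}

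The final step invokes Lemma~\ref{lem:height-node} to bound $h \leq \log T$, and absorbs the additive $O(1)$ into the main term by noting that $tL(u) \geq \tfrac{1}{2}$. Indeed, if $h \geq 1$, Lemma~\ref{lem:height-node} applied to the (zoomed-in) parent $u_{h-1}$ gives $\tau_1(u_{h-1}) \geq 1/L(u_{h-1}) = 2^{h-1}$, hence $t \geq 2^{h-1}+1$ and $tL(u) \geq (2^{h-1}+1)\cdot 2^{-h} \geq \tfrac{1}{2}$; if $h=0$ then $L(u)=1$ and $tL(u) \geq 1$. Combining yields the claimed bound $4t\log(T)L(u)$. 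The only delicate piece is step two---iterating the lifespan inequality while propagating the additive $-2$ so that the residual slack stays $O(1)$ rather than growing with $h$---but the argument is otherwise routine, driven entirely by the observation that both $L(u_i)^{-1}$ and (approximately) $\tau_1(u_i)$ double as $i$ increases, so each of the $h \leq \log T$ ancestors contributes essentially $tL(u)$ to the total inherited diameter.
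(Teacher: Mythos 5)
Your proof takes essentially the same route as the paper's: decompose the sum over the ancestor chain $v_0, \ldots, v_h = u$, iterate Lemma~\ref{lem:lifespan} to obtain a geometric bound on $\tau_1(v_j)$, use $L(v_j) = 2^{h-j}L(u)$, and close with $h \leq \log T$ from Lemma~\ref{lem:height-node}. The one presentational difference is that you bypass the paper's Abel-summation rearrangement (which makes the telescoping $L(v_j) - L(v_{j+1}) = L(v_j)/2$ explicit before bounding $\tau_1$) by substituting your iterated bound $\tau_1(u_i) \leq t/2^{h-1-i} + 2$ directly into each term $L(u_i)\tau_1(u_i)$; this is a mild simplification that lands on the same $\bigl(2h + \calO(1)\bigr)\,tL(u) + \calO(1)$ estimate.

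One small loose end worth tightening: your final step ``absorbs the additive $O(1)$ by noting $tL(u) \geq \tfrac12$'' is asserted but not checked. As written, $(2h+1)tL(u) + 4 \leq 4t\log(T)L(u)$ requires $(2\log T - 1)\,tL(u) \geq 4$, which for $tL(u)$ near $\tfrac12$ (the regime where $h$ is close to $\log T$) needs roughly $\log T \geq 4.5$. The paper's bookkeeping reaches $2h \cdot tL(u) + 2$ (it folds the current-node term $tL(u)$ into $2tL(u)h(u)$ using $h\geq 1$, and notes $2L(u)2^{h(u)} = 2$ exactly), which absorbs already for $\log T \geq 2$. This is a constant-factor technicality, not a conceptual gap: you can match it by observing that your geometric remainder is $4 - 2^{2-h}$ rather than $4$, or by combining $(2h+1)tL(u)$ into $2h\cdot tL(u)+tL(u)$ and handling $h\geq1$ and $h=0$ separately as the paper does.
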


\begin{proof}%
We first prove that for any node $u$ it holds that
\begin{equation}\label{eq:lifespan2}
\tau_1(u) \geq 2^{h(u)}\left(\tau_1(u) -2 \right)
\end{equation}
where $u'$ is any node in the path from $\root$ to node $u$. Indeed, from Lemma~\ref{lem:lifespan} and denoting by $v_i, i = \{1, \dots, h(u) \}, v_0 = \root$ the path from $\root$ to $u$ we have that:
\begin{align*}
\tau_1(u) &\geq 2 (\tau_1(v_{h(u)-1}) - 1) &\tag{Lemma~\ref{lem:lifespan} for node $u$}\\
&\geq 2 ( 2(\tau_1(v_{h(u)-2}) - 1) - 1) &\tag{Lemma~\ref{lem:lifespan} for node $v_{h(u)-1}$}\\
&\geq 2 ( 2( 2(\tau_1(v_{h(u)-3}) - 1) - 1) -1) &\tag{Lemma~\ref{lem:lifespan} for node $v_{h(u)-2}$}\\
&= 2^{h(u) - h(v_j)} \tau_1(v_j) - \left(1 + 2 + 4 + \dots  \right) \\
&= 2^{h(u) - h(v_j)} \tau_1(v_j) - 2^{h(u)-h(v_j)+1} \\
&= 2^{h(u) - h(u')}\left(\tau_1(u') -2 \right)
\end{align*}

For the ease of notation we denote node $v_{h(u)-1}$ as node $\xi$. Then, by the definition of total inherited diameter it holds that:
\begin{align*}
\sum_{\tau=1}^t &L(\act_\tau(u)) = \tau_1(v_0) L(v_0) + \left( \tau_1(v_1) - \tau_0(v_1) + 1 \right)L(v_1) + \dots + (t - \tau_0(u) +1) \cdot L(u) \\
&= \tau_1(v_0) L(v_0) + \left( \tau_1(v_1) - \tau_1(v_0)\right)L(v_1) + \dots + (t - \tau_1(\xi)) \cdot L(u) \\
&= \left[ \tau_1(v_0) \left( L(v_0) - L(v_1) \right) + \tau_1(v_1) \left( L(v_1) - L(v_2) \right) + \cdots \right] + t L(u) \\
&= \left[ \tau_1(v_0) \frac{L(v_0)}{2} + \tau_1(v_1) \frac{L(v_1)}{2} + \cdots \right] + t L(u) \\
&= \frac{1}{2} \left[ \tau_1(v_0) L(v_0) + \tau_1(v_1) L(v_1) + \cdots \right] + t L(u) \\
&= \frac{1}{2} \left[ \tau_1(v_0) \cdot 2^{h(\xi)} \cdot L(\xi) + \tau_1(v_1) \cdot 2^{h(\xi)-1} \cdot L(\xi) + \cdots \right] + t L(u) \\
&\leq \frac{1}{2} \left[\tau_1(\xi) \cdot L(\xi)\cdot h(\xi) + L(\xi) \left(2 + 4 + 8  \cdots \right) \right] + t L(u) &\tag{\refeq{eq:lifespan2}} \\
&\leq t L(u) h(u) + 2 L(u) \cdot 2^{h(\xi)+1} + t L(u) &\tag{$L(\xi) = 2L(u)$ and geometric series} \\
&\leq 2 t L(u) h(u) + 2 L(u) \cdot 2^{h(u)} \numberthis{\label{eq:before-inh-bias}}
\end{align*}
where the first equality is due to the fact that $\tau_1(v_j) + 1 = \tau_0(v_{j+1})$ and the fourth and fifth equalities is due to the fact that every time that we zoom-in the diameter of the parent node gets halved. From Lemma~\eqref{lem:height-node} we have that $h(u) \leq \log T$, and hence \refeq{eq:before-inh-bias} becomes: $\sum_{\tau=1}^t L(\act_\tau(u)) \leq 4 t L(u) \log T$.
\end{proof}

The next lemma shows that the probability mass that has been spent on a node from the round it gets activated until the round it gets de-activated is inversely proportional to the square of the diameter of the node {(\refeq{eq:sketch-prob-mass}%
)}. This property will be very important for arguing about the adversarial zooming dimension.

\begin{restatable}[Probability Mass Spent on A Node]{lemma}{probmass}\label{lem:prob-mass}
For a node $u$ that gets de-activated at round $\tau_1(u)$, the probability mass spent on it from its activation time until its de-activation, $\mass(u)$, is:
\[ \mass(u) = \sum_{\tau=\tau_0(u)}^{\tau_1(u)}\pi_\tau(u) \geq \frac{1}{9L^2(u)} \]
\end{restatable}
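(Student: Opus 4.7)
The central tool is the Cauchy--Schwarz inequality applied over the lifespan interval $I=[\tau_0(u),\tau_1(u)]$; note that on this interval $\act_\tau(u)=u$, so the confidence sum simplifies nicely. Write $N=|I|=\tau_1(u)-\tau_0(u)+1$. My plan is to first convert the upper bound on $\sum_{\tau\in I} \beta_\tau/\pi_\tau(u)$ coming from the aggregate zoom-in rule into an upper bound on $\sum_{\tau\in I} 1/\pi_\tau(u)$, and then invert it via Cauchy--Schwarz to get the desired lower bound on $\sum_{\tau\in I} \pi_\tau(u)$.

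\emph{Step 1 (two consequences of the aggregate rule).} Since $u$ is zoomed-in at round $\tau_1(u)$, the aggregate rule \refeq{eq:zoom-in-rule} together with $\CONFtot_{\tau_1(u)}(u)\leq \tau_1(u)\,L(u)$ yields both
\[
\sum_{\tau\in I}\frac{\beta_\tau}{\pi_\tau(u)}\;\le\;\tau_1(u)\,L(u)
\qquad\text{and}\qquad
\frac{1}{\beta_{\tau_1(u)}}\;\le\;\tau_1(u)\,L(u).
\]

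\emph{Step 2 (from $\beta_\tau/\pi_\tau(u)$ to $1/\pi_\tau(u)$).} Using the monotonicity of $\{\beta_\tau\}$ (assumption~\eqref{ass:tilde-beta}), for every $\tau\in I$ we have $1/\beta_\tau\le 1/\beta_{\tau_1(u)}$, hence
\[
\sum_{\tau\in I}\frac{1}{\pi_\tau(u)}
=\sum_{\tau\in I}\frac{1}{\beta_\tau}\cdot\frac{\beta_\tau}{\pi_\tau(u)}
\;\le\;\frac{1}{\beta_{\tau_1(u)}}\sum_{\tau\in I}\frac{\beta_\tau}{\pi_\tau(u)}
\;\le\;\tau_1^2(u)\,L^2(u),
\]
combining the two bounds from Step~1.

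\emph{Step 3 (Cauchy--Schwarz inversion).} Applying $N^2=\bigl(\sum_{\tau\in I} 1\bigr)^2\le\bigl(\sum_{\tau\in I}\pi_\tau(u)\bigr)\cdot\bigl(\sum_{\tau\in I}1/\pi_\tau(u)\bigr)$ and plugging in the bound from Step~2 gives
\[
\mass(u)\;\ge\;\frac{N^2}{\tau_1^2(u)\,L^2(u)}.
\]

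\emph{Step 4 (lifespan length).} It remains to show $N\ge \tau_1(u)/3$, so that the denominator absorbs the factor of $9$. If $u=\root$ then $\tau_0(u)=1$, so $N=\tau_1(u)$ and the bound is immediate (with constant $1$ rather than $1/9$). Otherwise, write $\xi=\parent(u)$; by construction $\tau_0(u)=\tau_1(\xi)+1$, and Lemma~\ref{lem:lifespan} supplies $\tau_1(u)\ge 2\tau_1(\xi)-2$, whence $N=\tau_1(u)-\tau_1(\xi)\ge \tau_1(u)/2-1$. For $\tau_1(u)\ge 6$ this is at least $\tau_1(u)/3$, and the claim follows. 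The remaining small cases $\tau_1(u)\le 5$ are handled directly using Lemma~\ref{lem:height-node}, which forces $L(u)\ge 1/\tau_1(u)$ and therefore $1/(9L^2(u))\le \tau_1^2(u)/9$; a case check (using that at $\tau_1(u)$ the instantaneous rule forces $\pi_{\tau_1(u)}(u)$ to be not too small) finishes this off.

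\emph{Main obstacle.} The Cauchy--Schwarz step is clean; the only delicate point is tightening the lifespan count $N\ge\tau_1(u)/2-1$ to $N\ge\tau_1(u)/3$ so that the constant $1/9$ (and not something larger) comes out of Step~3. This is essentially a ``large $\tau_1(u)$'' phenomenon, and the small-$\tau_1(u)$ regime must be dispatched using the height bound $\tau_1(u)L(u)\ge 1$ from Lemma~\ref{lem:height-node}. Everything else is a direct consequence of the zoom-in rule, monotonicity of $\{\beta_\tau\}$, and Cauchy--Schwarz.
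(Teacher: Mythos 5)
Your proof is correct and follows essentially the same route as the paper: both arguments derive an upper bound on $\sum_{\tau\in I}\beta_\tau/\pi_\tau(u)$ from the aggregate zoom-in rule at $\tau_1(u)$, pass to $\sum_{\tau\in I}1/\pi_\tau(u)$ via the monotonicity $\beta_\tau\ge\beta_{\tau_1(u)}$, invert by Cauchy--Schwarz (the paper calls this the ``harmonic mean'' inequality) to get $\mass(u)\ge N^2/\sum_{\tau\in I}1/\pi_\tau(u)$, and then use Lemma~\ref{lem:lifespan} to argue $N\ge\tau_1(u)/3$ together with $\tau_1(u)\beta_{\tau_1(u)}\ge 1/L(u)$ (your substitution of $1/\beta_{\tau_1(u)}\le\tau_1(u)L(u)$ accomplishes exactly this). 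Where you genuinely streamline is Step~1: the paper invokes the zooming invariant (Lemma~\ref{lem:zooming-invariant}) for the parent $\xi$ to subtract off the ancestor portion of $\CONFtot_{\tau_1(u)}(u)$ and arrives at the bound $b(u)-b(\xi)+L(\xi)$, only to then observe $b(\xi)\ge L(\xi)$ and throw the slack away, landing back at $b(u)$. You sidestep this detour by noting the ancestor terms and the $1/\beta_{\tau_1(u)}$ term are nonnegative and simply dropping them to get $\sum_{\tau\in I}\beta_\tau/\pi_\tau(u)\le b(u)$ and $1/\beta_{\tau_1(u)}\le b(u)$ directly; this also has the side benefit of handling $u=\root$, which the paper's proof (written assuming a parent $\xi$ exists) does not explicitly cover.

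Two small remarks. The monotonicity of $\{\beta_\tau\}$ that you use in Step~2 is a hypothesis of Theorem~\ref{thm:main-regr} rather than part of assumption \eqref{ass:tilde-beta}, which constrains the $\tbeta_\tau$'s. Your acknowledged loose end in Step~4 — tightening $N\ge\tau_1(u)/2-1$ to $N\ge\tau_1(u)/3$ when $\tau_1(u)<6$ — is exactly as loose in the paper, which asserts $\tau_1(u)/2-1\ge\tau_1(u)/3$ without verifying $\tau_1(u)\ge 6$. If you want to close it: for non-root $u$ one has $L(u)\le\nicefrac{1}{2}$, and the aggregate rule forces $b(u)\ge 1/\beta_{\tau_1(u)}\ge 2$ (so $\tau_1(u)\ge 4$); for $\tau_1(u)\in\{4,5\}$ the integrality of $N$ together with $\tau_1(\xi)\le(\tau_1(u)+2)/2$ gives $N\ge 2$, which already exceeds $\tau_1(u)/3$ in those cases, and $\tau_1(u)=4$ with $L(u)=\nicefrac{1}{2}$ is in fact ruled out outright since $\CONFtot_{\tau_1(u)}(u)>1/\beta_{\tau_1(u)}\ge 2=b(u)$ then contradicts the aggregate rule.
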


\begin{proof}%
Let node $\xi$ be node $u$'s parent. Since at round $\tau = \tau_1(u)$ we zoom-in on node $u$ then, from the aggregate zoom-in rule it holds that:
\begin{equation}\label{eq:pm1}
\sum_{\tau=1}^{\tau_1(\xi)} \frac{\beta_\tau}{\pi_{\tau}(\act_\tau(u))} + \sum_{\tau=\tau_0(u)}^{\tau_1(u)} \frac{\beta_\tau}{\pi_{\tau}(u)} + \frac{1}{\beta_{\tau_1(u)}} = \sum_{\tau=1}^{\tau_1(u)} \frac{\beta_\tau}{\pi_{\tau}(\act_\tau(u))} + \frac{1}{\beta_{\tau_1(u)}} \leq \b(u)
\end{equation}
where the first equality is due to the fact that for rounds $\tau \in [\tau_0(u), \tau_1(u)]$: $\act_\tau(u) = u$. Since nodes $u$ and $\xi$ share the same ancestors, for all rounds $\tau \leq \tau_1(\xi)$ it holds that: $\act_\tau(u) = \act_\tau(\xi)$, and \refeq{eq:pm1} can be rewritten as:
\begin{equation}\label{eq:pm2-0}
\sum_{\tau=1}^{\tau_1(\xi)} \frac{\beta_\tau}{\pi_{\tau}(\act_\tau(\xi))} + \sum_{\tau=\tau_0(u)}^{\tau_1(u)} \frac{\beta_\tau}{\pi_{\tau}(u)} + \frac{1}{\beta_{\tau_1(u)}} \leq \b(u)
\end{equation}
From the zooming invariant (Lemma~\ref{lem:zooming-invariant}) for round $\tau_1(\xi)$ we have that:
\begin{equation*}
\sum_{\tau=1}^{\tau_1(\xi)} \frac{\beta_\tau}{\pi_{\tau}(\act_\tau(\xi))} \geq \b(\xi) - L(\xi) - \frac{1}{\beta_{\tau_1(\xi)}}
\end{equation*}
Substituting the latter to \refeq{eq:pm2-0} we get that:
\begin{equation}\label{eq:pm2}
\sum_{\tau=\tau_0(u)}^{\tau_1(u)} \frac{\beta_\tau}{\pi_{\tau}(u)} + \frac{1}{\beta_{\tau_1(u)}} - \frac{1}{\beta_{\tau_1(\xi)}} \leq \b(u) - \b(\xi) + L(\xi)
\end{equation}
Since the sequence of $\beta_\tau$'s in non-increasing and $\tau_1(\xi) \leq \tau_1(u)$, then $1/\beta_{\tau_1(u)} - 1/\beta_{\tau_1(\xi)} \geq 0$ and also $\beta_\tau \geq \beta_{\tau_1(u)}, \forall \tau \leq \tau_1(u)$, so \refeq{eq:pm2} becomes:
\begin{equation}\label{eq:pm3}
\beta_{\tau_1(u)} \cdot \sum_{\tau=\tau_0(u)}^{\tau_1(u)} \frac{1}{\pi_{\tau}(u)} \leq \b(u)- \b(\xi) + L(\xi)
\end{equation}
From the properties of the harmonic mean it holds that $\frac{n}{\sum_{i=1}^t x_i^{-1}} \leq \frac{1}{n} \sum_{i=1}^n x_i$ and the above can be relaxed to:
\begin{equation}\label{eq:pm4}
\beta_{\tau_1(u)} \cdot \left(\tau_1(u) - \tau_1(\xi) \right)^2 \cdot \frac{1}{\mass(u)} \leq \b(u) - \b(\xi) + L(\xi)
\end{equation}
At round $\tau_1(\xi)$, the aggregate rule holds for node $\xi$, so:
\begin{equation}\label{eq:zoom-aggr-xi}
\sum_{\tau=1}^{\tau_1(\xi)} \frac{\beta_{\tau}}{\pi_{\tau}(\act_\tau(u))} + \frac{1}{\beta_{\tau_1(\xi)}} \leq \b(\xi)
\end{equation}
Since $\sum_{\tau=1}^{\tau_1(\xi)} \frac{\beta_{\tau}}{\pi_{\tau}(\act_\tau(u))} \geq 0$ the left hand side of the above becomes: $\frac{1}{\beta_{\tau_1(\xi)}} \leq \b(\xi)$. Since the sequence of $\beta_\tau$'s is non-increasing: $\beta_\tau \leq \beta_1 \leq 1/2$ thus we have: $\b(\xi) \geq 2 \geq L(\xi), \forall \xi$. This implies that the right hand side of \refeq{eq:pm4} can be relaxed and so:
\begin{equation}\label{eq:pm5}
\beta_{\tau_1(u)} \cdot \left(\tau_1(u) - \tau_1(\xi) \right)^2 \cdot \frac{1}{\mass(u)} \leq \b(u)
\end{equation}
From Lemma~\eqref{lem:lifespan}, we have that $\tau_1(u) \geq 2(\tau_1(\xi) - 1)$ and it also holds that $\tau_1(u)/2 - 1 \geq \tau_1(u)/3$. Combining this with \refeq{eq:pm5} we have that:
\begin{equation}\label{eq:pm6}
\frac{\beta_{\tau_1(u)} \cdot \tau_1(u)}{9 \cdot \mass(u)} \leq L(u) \Leftrightarrow \mass(u) \geq \frac{\beta_{\tau_1(u)} \cdot \tau_1(u)}{9 L(u)}
\end{equation}
In the next step, we will show that $\tau_1(u) \cdot \beta_{\tau_1(u)} \geq 1/L(u)$. At round $\tau_1(u)$ node $u$ gets zoomed-in, so the aggregate zoom-in rule holds for node $u$:
\begin{align*}
\b(u) = \tau_1(u) \cdot L(u) &\geq \sum_{\tau=1}^{\tau_1(u)} \frac{\beta_\tau}{\pi_\tau(\act_\tau(u))} + \frac{1}{\beta_{\tau_1(u)}} \\
&\geq \frac{1}{\beta_{\tau_1(u)}} &\tag{$\beta_\tau, \pi_\tau(\cdot) > 0$}
\end{align*}
As a result, $\tau_1(u) \cdot \beta_{\tau_1(u)} \geq 1/L(u)$ and \refeq{eq:pm6} becomes:
\begin{equation*}
\mass(u) \geq \frac{1}{9 L^2(u)}
\end{equation*}
This concludes our proof.
\end{proof}

Another important property of the zoom-in rule (and to be more precise, of its \emph{instantaneous} component) is that when \advzoom zooms-in on a node $u$, then, the probability $\pi_t(u)$ on this node is large {(\refeq{eq:sketch-large-prob}%
)}. Formally, this is stated below.

\begin{restatable}[Probability of Zoomed-In Node]{lemma}{estimatedgap}\label{lem:prob-zoomed-in}
If node $u$ gets zoomed-in at round $t$, then:
\begin{equation}\label{eq:zoom-lb-pi}
\frac{\beta_t}{\pi_t(u)} + \tbeta_t \leq e^{L(u)}-1 \Leftrightarrow \pi_t(u) \geq \frac{\beta_t}{e^{L(u)}}
\end{equation}
\end{restatable}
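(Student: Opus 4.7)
The plan is to derive both inequalities directly from the instantaneous component of the zoom-in rule. Recall that whenever the algorithm sets $z_t(u) = 1$, the definition \refeq{eq:zoom-in-rule} forces $\CONFinst_t(u) \leq e^{L(u)} - 1$. Unpacking the definition \refeq{eq:confInst} of $\CONFinst$, this reads
\[
\tbeta_t + \frac{\beta_t}{\pi_t(\act_t(u))} \leq e^{L(u)} - 1.
\]
Since node $u$ is active at round $t$ (we are about to zoom in on it at that very round), we have $\act_t(u) = u$. Substituting yields the left-hand inequality of the claim, $\beta_t/\pi_t(u) + \tbeta_t \leq e^{L(u)} - 1$, with no further work.

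For the right-hand inequality, I will rearrange the displayed bound to isolate $\pi_t(u)$, obtaining
\[
\pi_t(u) \;\geq\; \frac{\beta_t}{e^{L(u)} - 1 - \tbeta_t}.
\]
Because $\tbeta_t \in \ParamInt$ and $1 \geq 0$, the denominator satisfies $e^{L(u)} - 1 - \tbeta_t \leq e^{L(u)}$, so the above implies the cleaner lower bound $\pi_t(u) \geq \beta_t/e^{L(u)}$. The $\Leftrightarrow$ in the statement should be read informally: the zoom-in rule produces the (stronger) first inequality, which is then relaxed to the (weaker but handier) second inequality used downstream.

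There is no real technical obstacle here; the lemma is essentially a direct unpacking of the instantaneous zoom-in rule, followed by a one-line relaxation that drops the $-1-\tbeta_t$ in the denominator. The only subtlety worth flagging in the write-up is justifying $\act_t(u) = u$, which I would do by simply noting that $u \in A_t$ since the zoom-in decision is made on active nodes. The resulting clean bound $\pi_t(u) \geq \beta_t/e^{L(u)}$ is what will later combine with uniform-exploration bounds on $\pi_t(\ust_t)$ to yield the ratio estimate \refeq{eq:sketch-large-prob} used in Part IV of the regret analysis.
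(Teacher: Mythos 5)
Your proof is correct and matches the paper's argument exactly: unpack the instantaneous zoom-in rule with $\act_t(u)=u$, rearrange, and relax the denominator $e^{L(u)}-1-\tbeta_t \le e^{L(u)}$. Your remark that the $\Leftrightarrow$ is informal (the second inequality is a weaker relaxation) is a fair clarification of the paper's notation.
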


Note here that relaxing the right hand side of \refeq{eq:zoom-lb-pi} using the two facts that $\beta_t \geq \beta_t^2$ and $1 \geq \pi_t(\ust_t)$ gives the stated form of \refeq{eq:sketch-large-prob}%
.

\begin{proof}%
From the instantaneous zoom-in rule for node $u$ we have that if we zoom-in at a node $u$ at round $t$ the following must be true:
\begin{equation*}%
\frac{\beta_t}{\pi_t(u)} + \tbeta_t \leq e^{L(u)}-1 \Leftrightarrow \pi_t(u) \geq \frac{\beta_t}{e^{L(u)}- 1 - \tbeta_t} \geq \frac{\beta_t}{e^{L(u)}}
\end{equation*}

\end{proof}

\subsection{Properties of the Selection Rule}

This part of the analysis depends on the selection rule in the algorithm, but not on the zoom-in rule. Specifically, it works regardless of how $z_t(u)$ is defined in Line 10 of the algorithm. We consider the multiplicative weights update, as defined in \eqref{eq:alg:MW} and \eqref{eq:weight-characterization}, and derive a  lemma which corresponds to \refeq{eq:sketch-pot-lb0} and \refeq{eq:sketch-upp-main} in Section~\ref{sec:analysis}. {The proof encompasses the standard multiplicative-weights arguments, with several key modifications due to zooming.}

\begin{lemma}\label{lem:potential}
Assume the sequences $\{\eta_t\}$ and $\{\beta_t\}$ are decreasing in $t$, and satisfy
\begin{align}%
 \eta_t \leq \beta_t \leq \gamma_t/|A_t|
 \quad\text{and}\quad
 \eta_t \rbr{ 1 + \beta_t(1 + 4\log T) } \leq \gamma_t/|A_t|.
 \end{align}
Then, the following inequality holds:
\begin{align*}
\sum_{t \in [T]} \hg_t(\act_t(\ust_T)) - \sum_{t \in [T]} g_t(\vx_t) &\leq \frac{\ln \left(|A_T| \cdot \childprod(\ust_T) \right)}{\eta_T} + 4(1+\log T) \sum_{t \in [T]} \gamma_t + \\
&\hspace{0.5cm} +2 \sum_{t \in [T]} \eta_t (1 + (1 + 4 \log T) \beta_t) \sum_{u \in A_t} \hg_t(u)
\end{align*}
\end{lemma}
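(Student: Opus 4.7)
The plan is a telescoping-potential argument with the potential $\Phi_t(\eta) = \bigl(|A_t|^{-1}\sum_{u\in A_t} w_{t+1,\eta}(u)\bigr)^{1/\eta}$, together with the convention $\Phi_0 \equiv 1$. Writing $Q = \ln\Phi_T(\eta_T) = \sum_{t\in[T]} Q_t$, where $Q_t = \ln\Phi_t(\eta_t) - \ln\Phi_{t-1}(\eta_{t-1})$, I will lower-bound $Q$ by the contribution of the single ancestry chain leading to $\ust_T$ and upper-bound each $Q_t$ separately. For the lower bound, keep only the $u=\ust_T$ term in the sum defining $\Phi_T(\eta_T)$ and apply Lemma~\ref{lem:wu-rule} to obtain
\begin{align*}
Q \;\geq\; \sum_{t\in[T]} \hg_t(\act_t(\ust_T)) \;-\; \frac{\ln(|A_T|\cdot\childprod(\ust_T))}{\eta_T}.
\end{align*}

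For the per-round upper bound, the key trick to accommodate the changing $\eta_t$ is the decomposition
\begin{align*}
Q_t \;=\; \underbrace{\bigl[\ln\Phi_t(\eta_t)-\ln\Phi_{t-1}(\eta_t)\bigr]}_{\text{update step at fixed }\eta_t} \;+\; \underbrace{\bigl[\ln\Phi_{t-1}(\eta_t)-\ln\Phi_{t-1}(\eta_{t-1})\bigr]}_{\text{parameter-change step}} .
\end{align*}
The second bracket is non-positive: rewriting $\Phi_{t-1}(\eta) = |A_{t-1}|^{-1/\eta}\bigl(\sum_u \childprod^{-1}(u)\exp(\eta G_u)\bigr)^{1/\eta}$ with $G_u = \sum_{\tau<t}\hg_\tau(\act_\tau(u))\ge 0$ (note $\sum_u\childprod^{-1}(u)=1$), both factors are non-decreasing in $\eta$ (the first since $|A_{t-1}|\geq 1$, the second because it is a weighted power mean of nonnegative numbers), and $\eta_t\leq\eta_{t-1}$. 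For the first bracket, the zoom-in step preserves total weight ($\sum_{v\in\c(u)} w_{t+1}(v)=w_{t+1}(u)$), so $\sum_{u\in A_t}w_{t+1,\eta_t}(u)=\sum_{u\in A_t}w_{t,\eta_t}(u)\exp(\eta_t\hg_t(u))$, and dividing by $\sum_u w_{t,\eta_t}(u)$ recovers the algorithm's own sampling distribution $p_t$. Checking $\eta_t\hg_t(u)\leq 1$ from the parameter hypothesis (paired with $\hg_t(u)\le(1+(1+4\log T)\beta_t)|A_t|/\gamma_t$, via $\pi_t(u)\geq\gamma_t/|A_t|$), and then applying $e^y\leq 1+y+y^2$ for $y\in[0,1]$ and $\ln(1+x)\leq x$ yields
\begin{align*}
\ln\Phi_t(\eta_t)-\ln\Phi_{t-1}(\eta_t) \;\leq\; \frac{\ln(|A_{t-1}|/|A_t|)}{\eta_t} + \sum_{u\in A_t} p_t(u)\hg_t(u) + \eta_t\sum_{u\in A_t} p_t(u)\hg_t(u)^2,
\end{align*}
and the first summand is non-positive since $|A_t|\geq|A_{t-1}|$.

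It remains to control the two sums in $p_t$. Using the uniform exploration inequality $p_t(u)\leq\pi_t(u)/(1-\gamma_t)$, the assumption $\beta_t|A_t|\leq\gamma_t$, and $1/(1-\gamma_t)\leq 1+2\gamma_t$, splitting $\hg_t$ into the \ips and confidence parts gives $\sum_u p_t(u)\hg_t(u)\leq g_t(x_t)+4(1+\log T)\gamma_t$. For the quadratic term, note the pointwise identity $\pi_t(u)\hg_t(u) = g_t(x_t)\1\{u=U_t\} + (1+4\log T)\beta_t \leq 1+(1+4\log T)\beta_t$, which combined with $p_t(u)\leq 2\pi_t(u)$ gives $\sum_u p_t(u)\hg_t(u)^2 \leq 2(1+(1+4\log T)\beta_t)\sum_u \hg_t(u)$. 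Summing over $t\in[T]$ and combining with the lower bound on $Q$ yields the stated inequality. I expect the trickiest step to be the change-of-$\eta$ decomposition: one must route it so that the nontrivial bracket is evaluated at $\eta=\eta_t$ rather than $\eta_{t-1}$, for otherwise the expansion produces a ``ghost'' distribution $w_{t,\eta_{t-1}}/\sum w_{t,\eta_{t-1}}$ that is unrelated to the algorithm's actual sampling and cannot be connected back to the observed reward $g_t(x_t)$.
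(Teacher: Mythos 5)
Your proposal is correct and follows essentially the same telescoping-potential strategy as the paper: identical potential, identical lower bound via the $\ust_T$ ancestry chain, the same decomposition of $Q_t$ into an ``update step at fixed $\eta_t$'' and a ``parameter-change step,'' the same observation that zooming preserves total weight so the $\ln(|A_{t-1}|/|A_t|)/\eta_t$ correction is nonpositive, and the same $e^y\le 1+y+y^2$ / $\ln(1+x)\le x$ / first- and second-moment bookkeeping. The one place you genuinely deviate is the argument that $\ln\Phi_{t-1}(\eta)$ is non-decreasing in $\eta$: you invoke the power-mean inequality, factoring $\Phi_{t-1}(\eta)=|A_{t-1}|^{-1/\eta}\bigl(\sum_u \childprod^{-1}(u)e^{\eta G_u}\bigr)^{1/\eta}$ and observing that $\sum_u \childprod^{-1}(u)=1$ (which follows by an easy induction, since weight-splitting preserves this sum), whereas the paper differentiates $\ln\Phi_{t-1}(\eta)$ directly and lower-bounds $f'(\eta)$ by a KL-divergence between the softmax distribution and the uniform one. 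Both are valid; yours is arguably cleaner and avoids a derivative computation. One small wording slip worth fixing: you attribute the identity $\sum_{u\in A_t}w_{t+1,\eta_t}(u)=\sum_{u\in A_t}w_{t,\eta_t}(u)\exp(\eta_t\hg_t(u))$ to the zoom-in step, but that identity is just the multiplicative update on $A_t$; the weight-preservation fact is what you actually need to replace $\sum_{u\in A_{t-1}}w_{t,\eta_t}(u)$ by $\sum_{u\in A_t}w_{t,\eta_t}(u)$ when comparing $\Phi_{t-1}(\eta_t)$ to $\Phi_t(\eta_t)$. The final constants you report ($4(1+\log T)$) match the lemma statement; both your derivation and the paper's are slightly loose here, but this is immaterial.
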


\begin{proof}
We use the following potential function:
\begin{equation}\label{eq:potential}
\Phi_t(\eta) = \left( \frac{1}{|A_t|} \sum_{u \in A_t} \frac{1}{\childprod(u)}\cdot\exp \left( \eta \sum_{\tau=1}^t \hg_\tau(\act_\tau(u)) \right) \right)^{1/\eta}, \quad \Phi_0(\eta) = 1,\forall \eta
\end{equation}
where $\Phi_0(\cdot) = 1$ since at round $0$ there is only one active node (the $\root$ with $h(\root) = 0$) and the estimator of the cumulative reward is initialized to $0$. We next upper and lower bound the quantity
\begin{equation}\label{eq:quantity}
Q = \ln \left( \frac{\Phi_{T}(\eta_T)}{\Phi_0(\eta_0)}\right) = \ln \left( \prod_{t=1}^T \frac{\Phi_t(\eta_t)}{\Phi_{t-1}(\eta_{t-1})}\right) = \sum_{t=1}^T \underbrace{\ln \left( \frac{\Phi_t(\eta_t)}{\Phi_{t-1}(\eta_{t-1})}\right)}_{Q_t}
\end{equation}
Expanding $Q$ we get:
\begin{align*}
\ln \left( \frac{\Phi_{T} (\eta_T)}{\Phi_0(\eta_0)} \right) &= \frac{1}{\eta_T} \ln \left( \frac{1}{|A_T|} \sum_{u \in A_T} \frac{1}{\childprod(u)}\cdot\exp \left( \eta_T \sum_{t=1}^T \hg_t(\act_t(u)) \right) \right) - \ln \left( \Phi_0(\eta_0) \right)\\
&= \frac{1}{\eta_T} \ln \left( \frac{1}{|A_T|} \sum_{u \in A_T} \frac{1}{\childprod(u)} \cdot\exp \left( \eta_T \sum_{t=1}^T \hg_t(\act_t(u)) \right) \right) &\tag{$\Phi_0(\cdot) = 1$} \\
&\geq \frac{1}{\eta_T} \ln \left( \frac{1}{|A_T|} \cdot \frac{1}{\childprod(u)}\cdot\exp \left( \eta_T \sum_{t=1}^T \hg_t(\act_t\left(\ust_T\right)) \right) \right) &\tag{$e^x > 0$} \\
&= \frac{1}{\eta_T} \ln \left(\exp \left(\eta_T \sum_{t=1}^{T} \hg_t(\act_t(\ust_T)) \right) \right) - \frac{\ln \left(|A_T| \cdot \childprod(\ust_T)\right)}{\eta_T} \\
&= \sum_{t=1}^T \hg_{t}(\act_t(\ust_T)) - \frac{\ln \left(|A_T| \cdot \childprod(\ust_T)\right)}{\eta_T} \numberthis{\label{eq:pot-lb0}} %
\end{align*}
For the upper bound we first focus on quantity $Q_t$ from \refeq{eq:potential}, and we start by breaking $Q_t$ into the following parts:
\begin{align*}
Q_t &= \ln \left(\frac{\Phi_t(\eta_t)}{\Phi_{t-1}(\eta_{t-1})} \right) = \ln \left(\frac{\Phi_t(\eta_t)}{\Phi_{t-1}(\eta_{t})} \cdot \frac{\Phi_{t-1}(\eta_t)}{\Phi_{t-1}(\eta_{t-1})} \right) \\
& = \underbrace{\ln \left( \frac{\Phi_t(\eta_t)}{\Phi_{t-1}(\eta_{t})} \right)}_{\widehat{Q}_t} + \underbrace{\ln \left( \frac{\Phi_{t-1}(\eta_{t})}{\Phi_{t-1}(\eta_{t-1})} \right)}_{\widetilde{Q}_t} \numberthis{\label{eq:q-defs}}
\end{align*}
We define the auxiliary function $f(\eta) = \ln \left(\Phi_{t-1}(\eta)\right)$ and prove that $f'(\eta) \geq 0$, hence the function is \emph{increasing} is $\eta$. Since $\eta_{t-1} \geq \eta_t$, this implies that quantity $\widetilde{Q}_t$ is \emph{negative} for all $t \in [T]$. For the ease of notation of this part we denote by $\hG_t(u)$ the quantity $\sum_{\tau=1}^t g_\tau(\act_\tau(u))$. For the derivative of function $f(\eta)$ we have:
\begin{align*}
f'(\eta) &= -\frac{1}{\eta^2} \ln \left(\frac{1}{|A_{t-1}|} \sum_{u \in A_{t-1}} \frac{1}{\childprod(u)} \exp \left(\eta \hG_t(u) \right) \right) \\
&\qquad + \frac{1}{\eta} \frac{\sum_{u \in A_{t-1}} \frac{1}{\childprod(u)} \hG_t(u) \exp \left( \eta \hG_t(u) \right)}{\sum_{u \in A_{t-1}} \frac{1}{\childprod(u)} \exp \left( \eta \hG_t(u) \right)}\\
&= \frac{1}{\eta^2} \frac{1}{\sum_{u \in A_{t-1}}\frac{1}{\childprod(u)} \exp \left( \eta \hG_t(u) \right)}\sum_{u \in A_{t-1}}\frac{1}{\childprod(u)} \exp \left( \eta \hG_t(u) \right) \cdot \\
&\hspace{4cm} \cdot \left[\eta \hG_t(u) - \ln \left(\frac{1}{|A_{t-1}|} \sum_{u \in A_{t-1}} \frac{1}{\childprod(u)} \exp \left(\eta \hG_t(u) \right) \right)\right] \\
&\geq \frac{1}{\eta^2} \frac{1}{\sum_{u \in A_{t-1}}\frac{1}{\childprod(u)} \exp \left( \eta \hG_t(u) \right)}\sum_{u \in A_{t-1}}\frac{1}{\childprod(u)} \exp \left( \eta \hG_t(u) \right) \cdot \\
&\hspace{2cm} \cdot \left[\eta \hG_t(u) - \ln \left(\frac{\childprod(u)}{|A_{t-1}|} \sum_{u \in A_{t-1}} \frac{1}{\childprod(u)} \exp \left(\eta \hG_t(u) \right) \right)\right] \numberthis{\label{eq:bef-kl}}
\end{align*}
where the last inequality is due to the fact that $\childprod(u) \geq 1, \forall u$. We define now the following two probability distributions:
\begin{align*}
D_1(u) &= \frac{\frac{1}{\childprod(u)} \exp \left( \eta \hG_t(u) \right)}{\sum_{u' \in A_{t-1}}\frac{1}{\childprod(u')} \exp \left( \eta \hG_t(u') \right)} \\
D_2(u) &= \frac{1}{|A_{t-1}|}
\end{align*}
Then, the right hand side of \refeq{eq:bef-kl} is the KL-divergence from $D_2$ to $D_1$. Because the KL-divergence is a non-negative quantity, $f'(\eta) \geq 0$, as desired.

We turn our attention to term $\widehat{Q}_t$ now and we break the process of transitioning from potential $\Phi_{t-1}(\eta_t)$ to $\Phi_t(\eta_t)$ into two steps. In the first step, the potential gets updated to be $\Phi_{t}^I(\eta_t)$\footnote{$I$ stands for ``intermediate''}, where the weights of all active nodes $u \in A_{t-1}$ get updated according to the $\hg_t(u)$ estimator. In the second step, the zoom-in happens and the potential transitions from $\Phi_{t}^I(\eta_t)$ to $\Phi_t(\eta_t)$. Note that since $|A_{t-1}| \leq |A_t|$ and for all children $v$ of $u$: $\sum_{v \in C(u)} w_{t+1}(v) = w_{t+1}(u)$ (and similarly for the probability of the children nodes) we have that, irrespective of whether we zoom-in on node $u$ or not, $\Phi_{t+1}^I(\eta_t) = \Phi_{t+1}(\eta_t)$. Hence,
\begin{align*}
&\widehat{Q}_t = \ln \left( \frac{\Phi_{t}(\eta_t)}{\Phi_t^I(\eta_t)} \cdot \frac{\Phi_{t}^I(\eta_t)}{\Phi_{t-1}(\eta_t)}\right) = \ln \left( \frac{\Phi_{t}(\eta_t)}{\Phi_t^I(\eta_t)} \right) + \ln \left(\frac{\Phi_{t}^I(\eta_t)}{\Phi_{t-1}(\eta_t)} \right) \\
&= \ln \left( \frac{\Phi_{t}^I(\eta_t)}{\Phi_{t-1}(\eta_t)} \right) &\tag{$\Phi_{t}(\eta_t) = \Phi_{t}^I(\eta_t)$} \\
&= \frac{1}{\eta_t} \cdot \ln \left(\frac{\frac{1}{|A_t|} \cdot \sum_{u \in A_{t}} w_t(\act_t(u),\eta_t) \cdot \exp \left(\eta_t \hg_t(\act_t(u)) \right)}{\frac{1}{|A_t|} W_t(\eta_t)} \right) &\tag{Lemma~\ref{lem:wu-rule}}\\
&= \frac{1}{\eta_t} \ln \left(\sum_{u \in A_{t}} p_t(\act_t(u)) \cdot \exp \left(\eta_t \hg_t(\act_t(u) \right) \right) %
= \frac{1}{\eta_t} \ln \left(\sum_{u \in A_{t}} p_t(u) \cdot \exp \left(\eta_t \hg_t(u) \right) \right) \numberthis{\label{eq:ub-middle}}
\end{align*}
where the last equality comes from the fact that by definition $\act_t(u) = u$ for all the rounds $t$ that $u$ was active.

Next we show that choosing $\beta_t,\gamma_t$ and $\eta_t$ according to the assumptions of the lemma, we have that $\eta_t \hg_t(\act_t(u)) \leq 1$. Indeed,
\begin{align*}
\eta_t \hg_t(u) &= \eta_t \cdot \frac{g_t(\vx_t)\1\left\{u = U_t \right\} + (1 + 4 \log T )\beta_t }{\pi_t(u)} &\tag{by definition of $\hg_t(\cdot)$} \\
&\leq \eta_t \cdot \frac{1 + (1 + 4 \log T )\beta_t }{\pi_t(u)} &\tag{$g_t(\cdot) \in [0,1]$} \\
&\leq \frac{\eta_t (1 + (1 + 4\log T)\beta_t) \cdot |A_t|}{\gamma_t} &\tag{$\pi_t(u) \geq \gamma_t/|A_t|$} \\
&\leq 1 &\tag{assumptions of Lemma}
\end{align*}
Since $\eta_t \hg_t(u) \leq 1$, then we can use inequality $e^x \leq 1 + x + x^2$ for $x \leq 1$ in \refeq{eq:ub-middle} and we have that:
\begin{align*}
\widehat{Q}_t &\leq \frac{1}{\eta_t} \left(\ln \left(\sum_{u \in A_{t}} p_t(u) \left( 1 + \eta_t \hg_t(u) + \eta_t^2 \hg_t^2(u)\right)\right)\right) \\
&= \frac{1}{\eta_t} \left( \ln \left( 1 + \eta_t \sum_{u \in A_{t}} p_t(u)\hg_t(u) + \eta_t^2 \sum_{u \in A_{t}} p_t(u)\hg_t^2(u)\right)\right) &\tag{$\sum_{u \in A_{t}} p_t(u) = 1$} \\
&\leq \sum_{u \in A_{t}} p_t(u)\hg_t(u) + \eta_t \sum_{u \in A_{t}} p_t(u)\hg_t^2(u) &\tag{$\ln(1 + x) \leq x, x \geq 0$}\\
&\leq \frac{1}{1 - \gamma_t}\sum_{u \in A_{t}} \pi_t(u)\hg_t(u) + \frac{\eta_t}{1-\gamma_t} \sum_{u \in A_{t}} \pi_t(u)\hg_t^2(u) \numberthis{\label{eq:ub2}}
\end{align*}
where the last inequality uses the fact that since for any node $u$: $\pi_t(u) = (1-\gamma_t)p_t(u) + \gamma_t/|A_t|$ then $p_t(u) \leq \pi_t(u)/(1 - \gamma_t)$. We next analyze term $\sum_{u \in A_{t}} \pi_t(u)\hg_t(u)$:
\begin{align*}
\sum_{u \in A_{t}} \pi_t(u)\hg_t(u) &= \sum_{u \in A_{t}} \pi_t(u)\left(\frac{g(\vx_t) \1\{ u = U_t\}}{\pi_t(u)} + \frac{(1 + 4\log T)\beta_t}{\pi_t(u)}\right) \\
&= \sum_{u \in A_{t}} \pi_t(u) \left(\frac{g_t(\vx_t) \1\{ u = I_t\}}{\pi_t(u)} + \frac{(1 + 4 \log T)\beta_t}{\pi_t(u)} \right) \\
&= g_t(\vx_t) + (1 + 4 \log T) \beta_t |A_{t}| \numberthis{\label{eq:first-mom}}
\end{align*}
Next, we analyze term $\sum_{u \in A_{t}} \pi_t(u)\hg_t^2(u)$:
\begin{align*}
\sum_{u \in A_{t}} \pi_t(u)\hg_t^2(u) &= \sum_{u \in A_{t}} \left( \pi_t(u)\hg_t(u)\right) \cdot \hg_t(u)\\
&= \sum_{u \in A_{t}} \left( \pi_t(u) \frac{g_t(\vx_t) \1\{u = U_t\} + (1 + 4\log T)\beta_t}{\pi_t(u)} \right) \cdot  \hg_t(u) \\
&\leq \sum_{u \in A_{t}} \left(1 + (1 + 4 \log T)\beta_t \right) \cdot \hg_t(u)  \numberthis{\label{eq:sec-mom}}
\end{align*}
where the last inequality is due to the fact that $g_t(\cdot) \in [0,1]$. Using \refeq{eq:first-mom} and \refeq{eq:sec-mom} in \refeq{eq:ub2}, we get that:
\begin{align*}%
\widehat{Q}_t &\leq \frac{1}{1 - \gamma_t} \cdot \left[g_t(\vx_t) + (1 + 4 \log T)\beta_t |A_{t}| + \eta_t (1+(1 + 4\log T)\beta_t)\sum_{u \in A_{t}} \hg_t(u) \right]\\
&\leq (1 + 2 \gamma_t)  \left[g_t(\vx_t) + (1 + 4 \log T)\beta_t |A_{t}| + \eta_t (1+(1 + 4\log T)\beta_t)\sum_{u \in A_{t}} \hg_t(u) \right] \numberthis{\label{eq:almost3}}
\end{align*}
where the last inequality comes from the assumption that $\gamma_t \leq 1/2$. Summing up both sides of the above for rounds $t = 1, \cdots, T$ we have that:
\begin{align*}%
\sum_{t=1}^T\widehat{Q}_t &\leq \sum_{t=1}^T g_t(\vx_t) + 4 (1 + \log T) \sum_{t=1}^T \gamma_t + 2 \sum_{t=1}^T \eta_t (1 + (1 + 4\log T) \beta_t) \sum_{u \in A_t} \hg_t(u) %
\end{align*}
From the assumption that $\eta_t \leq \beta_t$ the latter becomes:
\begin{align*}
\sum_{t=1}^T\widehat{Q}_t &\leq \sum_{t=1}^T g_t(\vx_t) + 2 \sum_{t=1}^T \gamma_t + 2 (1 + 4\log T) \sum_{t=1}^T \gamma_t + 4(1 + 2\log T) \sum_{t=1}^T \beta_t \sum_{u \in A_t} \hg_t(u) \numberthis{\label{eq:upp}}
\end{align*}
Combining and re-arranging \refeq{eq:pot-lb0} and~\refeq{eq:upp} concludes the proof of the lemma.
\end{proof}

\subsection{From Estimated to Realized Rewards.}
\label{app:advzoom-analysis-estimators}

In this part of the analysis, we go from properties of the estimated rewards to those of realized rewards. Recall that we posit deterministic rewards, conditioning on the clean event $\rewE$. Essentially, per-realization Lipschitzness \eqref{eq:lm:prob} holds for all rounds $t$ and all canonical arm-sequences. We prove several high-probability statements about estimated rewards; they hold with probability  at least $1-O(\delta)$ over the algorithm's random seed, for a given $\delta>0$. 

The confidence bounds are somewhat more general than those presented in the proof sketch in Section~\ref{sec:analysis}: essentially, the sums over rounds $\tau$ are weighted by time-dependent multipliers $\hbeta_\tau \leq\beta_\tau$. This generality does not require substantive new ideas, but it is essential for the intended applications. 

We revisit per-realization Lipschitzness and derive a corollary for inherited rewards. This is the statement and the proof of~\refeq{eq:sketch-lipschitz-prop}. 

\begin{restatable}[One-Sided-Lipschitz]{lemma}{lipschitzness}\label{lem:lipschitz}
For any node $u$ that is active at round $t$ it holds that:
\begin{equation}
\sum_{\tau \in [t]} g_\tau \left(\opt_{[t]}(u) \right)  - \sum_{\tau \in [t]} L(\act_\tau(u)) - \cpedit{4\sqrt{t d} \ln T } \leq \sum_{\tau \in [t]} g_\tau(\act_\tau(u)) %
\end{equation}
\end{restatable}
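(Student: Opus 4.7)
The plan is to derive this as a direct consequence of the per-realization Lipschitz property (Lemma~\ref{lm:prob}) applied under the clean event $\rewE$. The key observation is that both the arm $\opt_{[t]}(u)$ and the sequence of representatives $\repr(\act_\tau(u))$ for $\tau \in [t]$ lie in the representative set $\reprset$, and the distance between them at each round is controlled by the diameter of the corresponding active ancestor.

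First, I would unpack the geometry. Since $\act_\tau(u)$ is an ancestor of $u$ in the zooming tree (or equals $u$) for every round $\tau\in[t]$, we have the containment $u \subseteq \act_\tau(u)$. In particular, $\opt_{[t]}(u) \in u \subseteq \act_\tau(u)$, and the representative $\repr(\act_\tau(u)) \in \act_\tau(u)$ by definition. Consequently,
\[
\dist\bigl(\opt_{[t]}(u),\,\repr(\act_\tau(u))\bigr) \le L(\act_\tau(u)) \quad \text{for all } \tau \in [t].
\]

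Second, I would verify that the two sequences $(y_\tau)=(\opt_{[t]}(u))_\tau$ (a constant sequence) and $(y'_\tau) = (\repr(\act_\tau(u)))_\tau$ form canonical arm-sequences. Both $\opt_{[t]}(u) = \xst_{[t]}(u)$ and each $\repr(\act_\tau(u))$ lie in $\reprset$ by its definition in \eqref{eq:repr}. The first sequence has zero switches. The second sequence changes only when $\act_\tau(u)$ changes, which happens exactly when the current ancestor is zoomed in; by Lemma~\ref{lem:height-node} this occurs at most $h(u)\le \log T$ times. Hence both are canonical arm-sequences, so the clean event $\rewE$ applies with failure probability $\delta = T^{-2-d\log T}$.

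Third, I would plug into Lemma~\ref{lm:prob}. Under $\rewE$,
\[
\sum_{\tau \in [t]} g_\tau(\opt_{[t]}(u)) - \sum_{\tau \in [t]} g_\tau(\repr(\act_\tau(u))) \;\le\; 2\sqrt{2t\,\ln(2/\delta)} + \sum_{\tau \in [t]} \dist(\opt_{[t]}(u),\,\repr(\act_\tau(u))).
\]
Bounding the distance term by $\sum_{\tau\in[t]} L(\act_\tau(u))$ and estimating the noise term using $\ln(2/\delta) = O(d\log^2 T)$, one obtains $2\sqrt{2t\ln(2/\delta)} \le 4\sqrt{td}\,\ln T$ for the chosen constants. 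Rearranging gives exactly the claimed inequality, with the convention $g_\tau(\act_\tau(u)) = g_\tau(\repr(\act_\tau(u)))$.

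There is no substantive obstacle here; the only care needed is bookkeeping to confirm that the relevant arms are in $\reprset$ and that the number of switches is at most $\log T$ so we land inside the clean event, and confirming the numerical constants in the $\sqrt{td}\,\ln T$ noise term are absorbed by the stated factor of $4$.
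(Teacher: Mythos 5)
Your proof is correct and follows essentially the same route as the paper: apply the per-realization Lipschitz property (Lemma~\ref{lm:prob}) under the clean event $\rewE$ to the constant sequence $\opt_{[t]}(u)$ and the sequence $\repr(\act_\tau(u))$, bound each distance by $L(\act_\tau(u))$ via the containment $\opt_{[t]}(u)\in u\subseteq \act_\tau(u)$, and absorb $\ln(2/\delta)=O(d\log^2 T)$ into the $4\sqrt{td}\ln T$ term. The paper's own proof is more terse and leaves the distance and canonical-sequence bookkeeping implicit; you have simply made those steps explicit.
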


\begin{proof}
We use \eqref{eq:lm:prob} with two sequences of arms:
    $y_\tau = \repr(\act_\tau(u))$, $\tau\in[t]$
and
    $y'_\tau \equiv \opt_{[t]}(u)$.
By Lemma~\ref{lem:height-node}, node $u$ has height at most $1+ \log T$, so
    $(y_1 \LDOTS y_t)$
is a canonical arm-sequence. The other sequence,
    $(y'_1 \LDOTS y'_t)$
is a canonical arm-sequence since
    $\opt_{[t]}(u)\in\reprset$
by \eqref{eq:repr}. So, the Lemma follows by definition of the clean event $\rewE$.

\end{proof}

We prove a series of lemmas, which heavily rely on the properties of the zoom-in rule derived in Appendix~\ref{app:zoom-rule}. 
 The next lemma formally states the high probability confidence bound {(\refeq{eq:sketch-high-prob}%
)}.

\begin{restatable}[High Probability Confidence Bounds]{lemma}{likebcb}\label{lem:like-bcb12}
For any round $t\in[T]$,
any $\hbeta_\tau \in (0,1]$ such that $\hbeta_\tau \leq \beta_\tau$, $\delta > 0$ and any subset $A_\tau' \subseteq A_\tau$, with probability at least $1-\cpedit{T^{-2}}$ it holds that:
\begin{align*}%
\left| \sum_{\tau \in [t]} \hbeta_\tau \sum_{u \in A'_\tau} g_\tau(u) - \sum_{\tau \in [t]} \hbeta_\tau \sum_{u \in A'_\tau} \ips_\tau(u) \right| \leq \sum_{\tau \in [t]} \hbeta_\tau \sum_{u \in A'_\tau} \frac{\beta_\tau}{\pi_\tau(u)}  + \cpedit{2 \ln T} %
\end{align*}
\end{restatable}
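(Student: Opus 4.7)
The plan is to construct two exponential supermartingales, one per direction of the absolute value, and apply Markov's inequality to each. For round $\tau$, define $Y_\tau := \sum_{u \in A'_\tau} \ips_\tau(u)$. Because only the sampled node $U_\tau$ contributes, $Y_\tau = g_\tau(x_\tau)\,\mathbf{1}\{U_\tau \in A'_\tau\}/\pi_\tau(U_\tau)$ takes value $g_\tau(u)/\pi_\tau(u)$ with probability $\pi_\tau(u)$ for each $u \in A'_\tau$, and $0$ otherwise. Hence $\E[Y_\tau \mid \calF_{\tau-1}] = \sum_{u \in A'_\tau} g_\tau(u)$ and $\E[Y_\tau^2 \mid \calF_{\tau-1}] = \sum_{u \in A'_\tau} g_\tau(u)^2/\pi_\tau(u) \leq \sum_{u \in A'_\tau} 1/\pi_\tau(u)$. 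This second-moment control is the crux of the argument: although an individual $\ips_\tau(u)$ can be huge, at most one summand is nonzero per round, so the second moment of the \emph{sum} reduces to a sum of individual second moments (this is the ``negative association'' flagged in the proof sketch).

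To control $\sum_\tau \hbeta_\tau Y_\tau$ from below, I would apply $e^{-x} \leq 1 - x + x^2/2$ for $x \geq 0$ to $x = \hbeta_\tau Y_\tau \geq 0$, obtaining
\[
\E\bigl[\exp(-\hbeta_\tau Y_\tau) \mid \calF_{\tau-1}\bigr] \leq \exp\Bigl(-\hbeta_\tau \textstyle\sum_u g_\tau(u) + \tfrac{\hbeta_\tau^2}{2} \sum_u 1/\pi_\tau(u)\Bigr),
\]
after which the quadratic term gets absorbed into $\hbeta_\tau \sum_u \beta_\tau/\pi_\tau(u)$ via $\hbeta_\tau \leq \beta_\tau$. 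To control $\sum_\tau \hbeta_\tau Y_\tau$ from above, I would use $e^x \leq 1 + x + x^2$ for $x \leq 1$; this requires the side condition $\hbeta_\tau Y_\tau \leq 1$, which in turn follows from the parameter assumptions of Lemma~\ref{lem:potential}: $\hbeta_\tau \leq \beta_\tau \leq \gamma_\tau/|A_\tau| \leq \pi_\tau(u)$ for every active $u$, so $\hbeta_\tau/\pi_\tau(u) \leq 1$. A symmetric MGF inequality results.

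Each direction then yields a nonnegative supermartingale of the form
\[
M_t^{\pm} = \exp\Bigl( \pm\textstyle\sum_{\tau \leq t} \hbeta_\tau(Y_\tau - \sum_u g_\tau(u)) - \sum_{\tau \leq t} \hbeta_\tau \sum_u \beta_\tau/\pi_\tau(u)\Bigr)
\]
with $\E[M_t^{\pm}] \leq 1$. Markov's inequality then bounds each tail by $e^{-c}$; taking $c \approx 2\ln T$ together with a two-way union bound yields the stated inequality with probability at least $1 - T^{-2}$ (the small $\ln 2$ union-bound loss is absorbed into the slack of the $\sum_u \beta_\tau/\pi_\tau(u)$ term, since the lower-tail analysis consumed only $\tfrac{1}{2}\hbeta_\tau \beta_\tau/\pi_\tau(u)$). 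I expect the main obstacle to be precisely the upper-tail MGF bound and its side condition $\hbeta_\tau/\pi_\tau(u) \leq 1$; this is the reason the lemma implicitly relies on the parameter regime of Theorem~\ref{thm:main-regr} rather than being a self-contained statement.
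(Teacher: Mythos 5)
Your proposal is correct, and it takes a cleaner route than the paper's. The paper bounds the MGF of $\hbeta_\tau(g_\tau(u) - \tg_\tau^{\pm}(u))$ \emph{separately for each node} $u \in A'_\tau$, getting each single-node MGF below $1$ via the Bernstein-style inequality $e^x \leq 1+x+x^2$ (for $x\leq 1$), and then combines across nodes by invoking negative association of the indicators $\1\{u=U_\tau\}$ (citing Dubhashi--Ranjan and Joag-Dev--Proschan for the product inequality on monotone functions). You instead collapse the sum $Y_\tau = \sum_{u\in A'_\tau}\ips_\tau(u)$ into a single random variable by noting that exactly one summand is nonzero per round, compute its conditional first and second moments directly ($\E[Y_\tau\mid\calF_{\tau-1}]=\sum_u g_\tau(u)$, $\E[Y_\tau^2\mid\calF_{\tau-1}]\le\sum_u 1/\pi_\tau(u)$), and feed those into the same exponential-supermartingale machinery. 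This removes the explicit appeal to negative association: the structural fact $\sum_u \1\{u=U_\tau\}=1$ is used as a direct moment computation rather than via a product-of-MGFs inequality. Both routes arrive at $\E\bigl[\exp\bigl(\pm\hbeta_\tau(Y_\tau - \sum_u g_\tau(u))\bigr)\bigr] \le \exp\bigl(\hbeta_\tau^2\sum_u 1/\pi_\tau(u)\bigr)$ (with an extra factor $\tfrac12$ in the lower-tail exponent in your version) and absorb the quadratic term into $\hbeta_\tau\sum_u\beta_\tau/\pi_\tau(u)$ using $\hbeta_\tau\le\beta_\tau$. You also correctly spot the asymmetry: only the direction that controls $Y_\tau$ from above (the paper's $\tg^-$ direction) needs the side condition $\hbeta_\tau Y_\tau \le 1$, i.e.\ $\hbeta_\tau\le\pi_\tau(U_\tau)$, which holds under $\beta_\tau\le\gamma_\tau/|A_\tau|\le\pi_\tau(\cdot)$ but is indeed an unstated hypothesis of the lemma as written; the paper's own $\tg^-$ branch (handwaved as ``exactly the same techniques'') needs it too. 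The minor union-bound sloppiness (each tail at $\delta=T^{-2}$ gives $2T^{-2}$ overall) is present in the paper's proof as well and does not affect anything downstream.
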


\begin{proof}%
To simplify notation in the next proof, we define two quantities which when aggregated over time will correspond to the upper and lower confidence bounds of the true cumulative reward, for each node $u \in A_t$:
\begin{equation}\label{eq:upper}
\tg_t^+(u) = \ips_t(u) + \frac{\beta_t}{\pi_t(u)}, \; \text{and} \; \sum_{s=1}^{t} \tg_{s}^+(u) = \sum_{s=1}^t \left(\ips_s(\act_s(u)) + \frac{\beta_s}{\pi_s(\act_s(u))} \right)
\end{equation}
\begin{equation}\label{eq:lower}
\tg_{t}^-(u) = \ips_t(u) - \frac{\beta_t}{\pi_t(u)}, \; \text{and} \;  \sum_{s=1}^{t} \tg_{s}^-(u) = \sum_{s=1}^t \left(\ips_s(\act_s(u)) - \frac{\beta_s}{\pi_s(\act_s(u))}\right)
\end{equation}
It is easy to see that for all $u$ we can express $\hg_t(u)$ in terms of $\tg_t^+(u)$ and $\tg_t^-(u)$ as follows:
\begin{equation}\label{eq:hatg-up}
\hg_{t}(u) = \tg_t^+(u)+ \frac{4 \log(T) \beta_t}{\pi_t(u)}, \; \text{and} \;  \sum_{s=1}^{t} \hg_{s}(u) = \sum_{s=1}^t \left(\tg_s^+(\act_s(u)) + \frac{4 \log (T) \beta_s}{\pi_s(\act_s(u))} \right)
\end{equation}
\begin{equation}\label{eq:hatg-lo}
\hg_{t}(u) = \tg_t^-(u) + \frac{\left(3 + 4\log(T)\right)\beta_t}{\pi_t(u)}, \; \text{and} \;  \sum_{s=1}^{t} \hg_{s}(u) = \sum_{s=1}^t \left( \tg_s^-(\act_s(u)) + \frac{\left(3 + 4\log(T) \right)\beta_s}{\pi_s(\act_s(u))} \right)
\end{equation}

We prove the lemma in two steps. First, we show that for any $\delta >0$: 
\begin{align*}
\sum_{\tau \in [t]} \hbeta_\tau \sum_{u \in A'_\tau} g_\tau(u) &\leq \sum_{\tau \in [t]} \hbeta_\tau \sum_{u \in A'_\tau} \hg^+_\tau(u) + \ln \left(1/\delta \right) \numberthis{\label{eq:like-bcb12-up}}
\end{align*}

We denote by $\E_\tau$ the expectation conditioned on the draws of nodes until round $\tau$, i.e., conditioned on $U_1, \dots, U_{\tau-1}$. Assume node $v$ was active on round $\tau$. We will upper bound the quantity $\E_\tau \left[ \exp \left( \hbeta_\tau g_\tau(v) - \hbeta_\tau \tg_\tau^+(v) \right) \right]$:
\begin{align*}
\E_\tau &\left[ \exp \left( \hbeta_\tau g_\tau(v) - \hbeta_\tau \tg_\tau^+(v) \right) \right] = \E_\tau \left[ \exp \left( \hbeta_\tau g_\tau(v) - \hbeta_\tau \cdot \frac{g_\tau(\vx_\tau)\1 \{v = U_\tau \} + \beta_\tau}{\pi_\tau(v)} \right) \right] \\
&\leq \E_\tau \left[1 + \hbeta_\tau g_\tau(v)- \hbeta_\tau \frac{g_\tau(\vx_\tau)\1 \{u = U_\tau \}}{\pi_\tau(v)} + \left( \hbeta_\tau g_\tau(v)- \hbeta_\tau \frac{g_\tau(\vx_\tau)\1 \{v = U_\tau \}}{\pi_\tau(v)}\right)^2\right] \\
&\hspace{7cm}\cdot \exp \left(-\frac{\hbeta_\tau \cdot \beta_\tau}{\pi_\tau(v)} \right) \\
&= \left(1 + \E_\tau\left[ \hbeta_\tau g_\tau(v)- \hbeta_\tau \frac{g_\tau(\vx_\tau)\1 \{v = U_\tau \}}{\pi_\tau(v)}\right] + \E_\tau \left[\left(\hbeta_\tau g_\tau(v)- \hbeta_\tau \frac{g_\tau(\vx_\tau)\1 \{v = U_\tau \}}{\pi_\tau(v)} \right)^2\right]\right)\\
&\hspace{7cm}  \cdot \exp \left(-\frac{\hbeta_\tau \cdot \beta_\tau}{\pi_\tau(v)} \right) &\tag{linearity of expectation}\\
&\leq \left( 1 + \hbeta_\tau^2 \cdot \frac{g_\tau^2(\vx_\tau)}{\pi_\tau(v)}\right)\cdot \exp \left(-\frac{\hbeta_\tau \cdot \beta_\tau}{\pi_\tau(v)} \right) \leq 1 \numberthis{\label{eq:high-prob0}}
\end{align*}
where the first inequality is due to the fact that $e^x \leq 1 + x + x^2, x \leq 1$ and the last inequality is due to $1 + x \leq e^x$ and $\hbeta_\tau \leq \beta_\tau$.

Let $X_u$ be a \emph{binary} random variable associated with node $u$ which takes the value $1$ if node $u$ was chosen at this round by the algorithm (i.e., $u = U_\tau$) and $0$ otherwise. Clearly, $\{X_u\}_{u \in A_\tau}$ are \emph{not} independent. However, since $\sum_{u \in A_\tau} X_u = 1$ (i.e., at every round we play only one arm) then from \citet[Lemma~8]{DR96}, they are \emph{negatively associated}. As a result, from \citet{JDP83} for any non-increasing functions $f_u(\cdot), \forall u \in A_\tau$ it holds that: \[ \E \left[ \prod_{u \in A_\tau} f_u(X_u) \right] \leq \prod_{u \in A_\tau} \E \left[ f_u(X_u)\right] \] In our case, the functions $f_u, \forall u \in A'_\tau \subseteq A_\tau$ are defined as: \[f_u(X_u) = \exp \left( \hbeta_\tau g_\tau(u) - \hbeta_\tau \cdot \frac{g_\tau(\vx_\tau) \cdot X_u + \beta_\tau}{\pi_\tau(u)} \right), \quad \text{and} \quad \forall u \notin A'_\tau: f_u(X_u) = 1\] which are non-increasing for each node. Multiplying both sides of \refeq{eq:high-prob0} for all nodes $v \in A'_\tau$ and using the above stated properties we obtain:
\begin{align*}
\prod_{u \in A'_\tau} \E_\tau \left[ \exp \left( \hbeta_\tau g_\tau(u) - \hbeta_\tau \tg_\tau^+(u) \right) \right] \leq 1 \Leftrightarrow \E_\tau \left[ \prod_{u \in A'_\tau} \exp \left( \hbeta_\tau g_\tau(u) - \hbeta_\tau \tg_\tau^+(u) \right) \right] \leq 1
\end{align*}
Since both sides of \refeq{eq:high-prob0} are positive and at each round we take the expectation conditional on the previous rounds, we have that:
\begin{equation}\label{eq:bef-markov}
\E \left[\exp \left(\sum_{\tau=1}^t \hbeta_\tau \sum_{u \in A'_\tau} g_\tau(u) - \sum_{\tau=1}^t \hbeta_\tau \sum_{u \in A'_\tau} \tg_\tau^+(u) \right) \right] \leq 1
\end{equation}
From Markov's inequality, we have that $\Pr\left[ X  > \ln \left(1/\delta \right)\right] \leq \delta \E \left[e^X \right]$ for any $\delta >0$. Hence, from \refeq{eq:bef-markov} we have that with probability at least $1 - \delta$: \[\sum_{\tau=1}^t \hbeta_\tau \sum_{u \in A'_\tau} g_\tau(u) - \sum_{\tau=1}^t \hbeta_\tau \sum_{u \in A_\tau'} \tg_\tau^+(u) \leq \ln \left(1/\delta\right) \]

Next, we show that:
\begin{align*}
\sum_{\tau \in [t]} \hbeta_\tau \sum_{u \in A'_\tau} g_\tau(u) \geq \sum_{\tau \in [t]} \hbeta_\tau \sum_{u \in A'_\tau} \hg^-_\tau - \ln \left( 1/\delta \right) \numberthis{\label{eq:like-bcb12-lo}}
\end{align*}
Using exactly the same techniques we can show that:
\begin{equation*}\label{eq:bef-markov2}
\E \left[\exp \left(\sum_{\tau=1}^t \hbeta_\tau \sum_{u \in A'_\tau} \tg_\tau^-(u) - \sum_{\tau=1}^t \hbeta_\tau \sum_{u \in A'_\tau} g_\tau(u) \right) \right] \leq 1
\end{equation*}
and ultimately: \[\sum_{\tau=1}^t \hbeta_\tau \sum_{u \in A'_\tau} \tg_\tau^-(u) - \sum_{\tau=1}^t \hbeta_\tau \sum_{u \in A'_\tau} g_\tau(u) \leq \ln \left(1/\delta\right). \]Substituting $\delta = T^{-2}$ we get the result. 
\end{proof}

\begin{corollary}\label{cor:like-bcb12}
Fix round $t$ and tree node $u$. 
With probability at least $1 - T^{-2}$, we have:
\begin{align*}
\sum_{\tau=1}^t g_\tau (\act_\tau(u)) &\leq \sum_{\tau=1}^t \tg^+_\tau (\act_\tau(u)) + \frac{2\ln T}{\beta_t} \numberthis{\label{eq:like-bcb12-up-cor}} \\
\sum_{\tau=1}^t g_\tau (\act_\tau(u)) &\geq \sum_{\tau=1}^t \tg^-_\tau (\act_\tau(u)) - \frac{2\ln T}{\beta_t}  \numberthis{\label{eq:like-bcb12-lo-cor}}
\end{align*}
\end{corollary}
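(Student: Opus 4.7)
The plan is to derive the corollary directly from Lemma~\ref{lem:like-bcb12} by making two specific choices of its free parameters: take the subsets to be singletons $A'_\tau = \{\act_\tau(u)\}$ tracking the active ancestor of $u$, and take the time-weights $\hbeta_\tau$ to be the constant $\beta_t$ (independent of $\tau$). The validity requirement $\hbeta_\tau \leq \beta_\tau$ for all $\tau \in [t]$ then follows from the fact that the sequence $\{\beta_\tau\}$ is non-increasing (so $\beta_t \leq \beta_\tau$ for $\tau \leq t$), and the positivity requirement $\hbeta_\tau \in (0,1]$ is immediate since $\beta_t \in \ParamInt$.

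With these substitutions, Lemma~\ref{lem:like-bcb12} yields, with probability at least $1 - T^{-2}$,
\begin{align*}
\left| \sum_{\tau \in [t]} \beta_t \, g_\tau(\act_\tau(u)) - \sum_{\tau \in [t]} \beta_t \, \ips_\tau(\act_\tau(u)) \right|
    \leq \sum_{\tau \in [t]} \beta_t \cdot \frac{\beta_\tau}{\pi_\tau(\act_\tau(u))} + 2 \ln T.
\end{align*}
The multiplicative factor $\beta_t$ is constant in $\tau$ and can be pulled out of both sides. Dividing through by $\beta_t > 0$ replaces the additive $2\ln T$ with $\frac{2\ln T}{\beta_t}$, which is exactly where the $1/\beta_t$ term in the corollary comes from.

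The two conclusions then follow by splitting the absolute value. For the upper bound, rearrange to get
\begin{align*}
\sum_{\tau=1}^t g_\tau(\act_\tau(u)) \leq \sum_{\tau=1}^t \ips_\tau(\act_\tau(u)) + \sum_{\tau=1}^t \frac{\beta_\tau}{\pi_\tau(\act_\tau(u))} + \frac{2 \ln T}{\beta_t},
\end{align*}
and recognize the first two sums on the right-hand side as $\sum_{\tau=1}^t \tg^+_\tau(\act_\tau(u))$ by the definition of $\tg^+_\tau$ in \eqref{eq:upper}. For the lower bound, rearrange the other direction of the absolute value and absorb the $-\sum_\tau \beta_\tau/\pi_\tau(\act_\tau(u))$ term into $\tg^-_\tau$ using \eqref{eq:lower}. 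Since both are consequences of the same single high-probability event from Lemma~\ref{lem:like-bcb12}, no additional union bound is needed and the failure probability remains $T^{-2}$.

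There is essentially no obstacle here; the only subtlety worth double-checking is that Lemma~\ref{lem:like-bcb12} is applicable with $\hbeta_\tau$ a constant sequence and with the singleton $A'_\tau = \{\act_\tau(u)\}$ (which is a valid subset of $A_\tau$ since $\act_\tau(u)$ is by definition active in round $\tau$ whenever $u$ descends from it). Everything else is algebraic manipulation.
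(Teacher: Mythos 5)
Your proposal is correct and takes exactly the approach the paper uses: the paper's proof is the one-liner ``Apply Lemma~\ref{lem:like-bcb12} for $\hbeta_\tau = \beta_t$ and singleton subsets $A'_\tau = \act_\tau(u)$.'' You simply spell out the algebra (pulling out the constant $\beta_t$, dividing through, and splitting the absolute value into the two one-sided bounds via the definitions of $\tg^+$ and $\tg^-$) and verify the hypotheses of the lemma, which is exactly the intended argument.
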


\begin{proof}
Apply Lemma~\ref{lem:like-bcb12} for $\hbeta_\tau = \beta_t$ and singleton subsets $A'_\tau = \act_\tau(u), \forall \tau \leq t$.
\end{proof}

The next lemma relates the estimates $\hg_t(u)$ with the optimal values $g_t(\opt_t(u))$ {(\refeq{eq:sketch-ghat-overestimate}%
)}.

\begin{restatable}{lemma}{validghat}\label{lem:valid-ghat}
Fix round $t > 0$ and a
non-increasing sequence of scalars $\hbeta_\tau$ such that $\hbeta_\tau \leq \beta_\tau, \forall \tau \leq t$, and $\beta_\tau \cdot |A_\tau| \leq \gamma_\tau$. %
Then, with probability at least $1-T^{-2}$ it holds that:
\begin{align*}
\sum_{\tau=1}^t \hbeta_\tau \sum_{u \in A_\tau} \hg_\tau(u) &- \sum_{\tau=1}^t \hbeta_\tau \sum_{u \in A_\tau} g_\tau(u) \leq 5 \log (T) \sum_{\tau=1}^t \hbeta_\tau |A_\tau| + 2 \ln T %
\end{align*}
\end{restatable}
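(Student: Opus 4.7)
The strategy is to expand the definition of $\hg_\tau(u)$ from \eqref{eq:alg-ghat} into its IPS part and its confidence part, handle the IPS part by invoking Lemma~\ref{lem:like-bcb12} with the full active set $A'_\tau = A_\tau$, and then bound the residual confidence term deterministically using the uniform-exploration lower bound on $\pi_\tau$ together with the assumption $\beta_\tau |A_\tau| \leq \gamma_\tau$.

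\emph{Step 1 (decomposition).} By definition,
\[
\sum_{u \in A_\tau} \hg_\tau(u) \;=\; \sum_{u \in A_\tau} \ips_\tau(u) \;+\; (1 + 4\log T)\,\beta_\tau \sum_{u \in A_\tau} \frac{1}{\pi_\tau(u)},
\]
so after weighting by $\hbeta_\tau$ and summing over $\tau \in [t]$, the quantity of interest splits as
\[
\underbrace{\sum_{\tau=1}^t \hbeta_\tau \sum_{u\in A_\tau}\!\bigl(\ips_\tau(u) - g_\tau(u)\bigr)}_{\text{(I)}} \;+\; \underbrace{(1+4\log T)\sum_{\tau=1}^t \hbeta_\tau\,\beta_\tau \sum_{u\in A_\tau}\frac{1}{\pi_\tau(u)}}_{\text{(II)}}.
\]

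\emph{Step 2 (probabilistic control of (I)).} I invoke Lemma~\ref{lem:like-bcb12} with $A'_\tau = A_\tau$ and the chosen multipliers $\hbeta_\tau$ (which satisfy the hypotheses $\hbeta_\tau \leq \beta_\tau$). This immediately yields, with probability at least $1 - T^{-2}$,
\[
\text{(I)} \;\leq\; \sum_{\tau=1}^t \hbeta_\tau \sum_{u \in A_\tau} \frac{\beta_\tau}{\pi_\tau(u)} + 2\ln T.
\]
This is the only probabilistic step; the rest is deterministic.

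\emph{Step 3 (bounding $\sum_u \beta_\tau/\pi_\tau(u)$).} Since the algorithm mixes in uniform exploration with weight $\gamma_\tau$, we have $\pi_\tau(u) \geq \gamma_\tau/|A_\tau|$ for every $u \in A_\tau$. Hence
\[
\beta_\tau \sum_{u \in A_\tau} \frac{1}{\pi_\tau(u)} \;\leq\; \frac{\beta_\tau |A_\tau|^2}{\gamma_\tau} \;\leq\; |A_\tau|,
\]
where the last inequality uses the hypothesis $\beta_\tau |A_\tau| \leq \gamma_\tau$. Therefore, combining the bounds on (I) and (II),
\[
\sum_{\tau=1}^t \hbeta_\tau \sum_{u\in A_\tau} \hg_\tau(u) - \sum_{\tau=1}^t \hbeta_\tau \sum_{u\in A_\tau} g_\tau(u) \;\leq\; (2 + 4\log T)\sum_{\tau=1}^t \hbeta_\tau |A_\tau| + 2\ln T.
\]

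\emph{Step 4 (final simplification).} For $T$ large enough that $\log T \geq 2$, we have $2 + 4\log T \leq 5 \log T$, yielding the stated bound. (For smaller $T$ the lemma is trivial since $|A_\tau| \geq 1$ and regret is trivially bounded.)

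\textbf{Main obstacle.} There is no real difficulty here: the work has already been done in Lemma~\ref{lem:like-bcb12}, whose proof exploits the negative association of the indicators $\1\{u = U_\tau\}$ so that the exponential moment bound applies to the \emph{entire} active set $A_\tau$ at once rather than a single node. The only subtlety is matching constants, which forces us to use both structural assumptions on the parameters: $\hbeta_\tau \leq \beta_\tau$ is needed to apply Lemma~\ref{lem:like-bcb12}, while $\beta_\tau |A_\tau| \leq \gamma_\tau$ (guaranteed by the tuning \eqref{eq:tunings}) is what converts the ``confidence residual'' $\beta_\tau \sum_u 1/\pi_\tau(u)$ into the clean $|A_\tau|$ factor appearing in the statement.
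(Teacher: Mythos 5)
Your proof is correct and follows essentially the same route as the paper's: decompose $\hg_\tau(u)$ into its IPS part plus confidence part, bound the IPS-minus-$g$ deviation via Lemma~\ref{lem:like-bcb12} with $A'_\tau = A_\tau$, and bound the deterministic confidence residual using $\pi_\tau(u) \geq \gamma_\tau/|A_\tau|$ together with $\beta_\tau |A_\tau|\leq\gamma_\tau$. (The paper routes through the intermediate notation $\tg^-_\tau$, which introduces a small off-by-one: \refeq{eq:hatg-lo} writes $(3+4\log T)$ where the algorithm's \refeq{eq:alg-ghat} actually gives $(2+4\log T)$ — exactly the constant you obtain directly — but this is absorbed by the slack in $\leq 5\log T$ either way.)
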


\begin{proof}%
From \refeq{eq:hatg-lo}, we can relate function $\hg(\cdot)$ with function $\tg^-(\cdot)$ as follows:
\begin{align*}
\sum_{\tau=1}^t \hbeta_\tau \sum_{u \in A_\tau} \hg_\tau(u) &= \sum_{\tau=1}^t \hbeta_\tau \sum_{u \in A_\tau} \tg_\tau^-(u) + \sum_{\tau=1}^t \hbeta_\tau \sum_{u \in A_\tau} \frac{\beta_\tau\left( 3 + 4 \log T \right)}{\pi_\tau(u)} \\
&\leq \sum_{\tau=1}^t \hbeta_\tau \sum_{u \in A_\tau} g_\tau(u) + \sum_{\tau=1}^t \hbeta_\tau \sum_{u \in A_\tau} \frac{\beta_\tau\left( 3 + 4 \log T \right)}{\pi_\tau(u)} + 2 \ln T \numberthis{\label{eq:bef-lipschitz2}}
\end{align*}
where the last inequality comes from \refeq{eq:like-bcb12-lo} (Lemma~\ref{lem:like-bcb12}). %
Re-arranging, for \refeq{eq:bef-lipschitz2} we have:
\begin{align*}
\sum_{\tau=1}^t \hbeta_\tau \sum_{u \in A_\tau} \hg_\tau(u) - \sum_{\tau=1}^t \hbeta_\tau \sum_{u \in A_\tau} g_\tau(u) &\leq \underbrace{\sum_{\tau=1}^t \hbeta_\tau \sum_{u \in A_\tau} \frac{\beta_\tau\left( 3 + 4 \log(T) \right)}{\pi_\tau(u)}}_{\Gamma_3} + 2 \ln T \numberthis{\label{eq:bef}}
\end{align*}
Next, we focus on term $\Gamma_3$:
\begin{align*}
\Gamma_3 &= \sum_{\tau=1}^t \hbeta_\tau \sum_{u \in A_\tau} \frac{\beta_\tau\left( 3 + 4 \log(T) \right)}{\pi_\tau(u)} \leq \sum_{\tau=1}^t \hbeta_\tau \sum_{u \in A_\tau} \frac{\beta_\tau \cdot |A_\tau| \left( 3 + 4 \log(T) \right)}{\gamma_\tau} &\tag{$\pi_\tau(\cdot) \geq \gamma_\tau/|A_\tau|$} \\
&\leq (3 + 4 \log T ) \sum_{\tau=1}^t \hbeta_\tau |A_\tau| \numberthis{\label{eq:gamma3-ready}}
\end{align*}
where the last inequality comes from the fact that by assumption $\beta_\tau |A_\tau| \leq \gamma_\tau$. Substituting \refeq{eq:gamma3-ready} in \refeq{eq:bef} we have that:
\begin{align*}
\sum_{\tau=1}^t \hbeta_\tau \sum_{u \in A_\tau} \hg_\tau(u) &- \sum_{\tau=1}^t \hbeta_\tau \sum_{u \in A_\tau} g_\tau(u) \leq 5 \log (T) \sum_{\tau=1}^t \hbeta_\tau |A_\tau| + 2 \ln T
\end{align*}
\end{proof}

The next lemma formalizes \refeq{eq:sketch-ghat-zoomed} and \refeq{eq:sketch-ghat-underestimate}.%

\begin{restatable}[Singleton sets]{lemma}{singleton}\label{lem:lite-conf-lemma}
Fix any round $t$ and any tree node $u$ that is active at round $t$. 
With probability at least $1 - T^{-2}$, we have that:
\begin{align*}
\sum_{\tau=1}^t \hg_\tau(\act_\tau(u)) &\geq \sum_{\tau=1}^t g_\tau\left(\opt_{[t]}(u)\right) - \frac{4 \ln T}{\beta_t}- 4 \sqrt{t d } \ln T
\end{align*}
Moreover, if round $t$ is the zoom-in round for node $u$ and arm $y \in u$ then the following also holds:
\begin{align*}
\sum_{\tau=1}^t \hg_\tau(\act_\tau(u)) &\leq \sum_{\tau=1}^t g_\tau \left(y\right)  +  9t L(u)\ln T  + \frac{\ln T}{\beta_t} + 4 \sqrt{t d }\ln T
\end{align*}
\end{restatable}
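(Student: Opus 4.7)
The plan is to prove both inequalities by a common three-step recipe: (i) rewrite $\hg_\tau(\act_\tau(u))$ in terms of $\tg^+_\tau(\act_\tau(u))$ using the identity $\hg_\tau(v) = \tg^+_\tau(v) + 4\log(T)\beta_\tau/\pi_\tau(v)$ from \refeq{eq:hatg-up}; (ii) pass from the estimator to realized rewards using the two halves of Corollary~\ref{cor:like-bcb12}, each incurring an additive $O(\ln T / \beta_t)$ error; (iii) bridge between $g_\tau(\act_\tau(u))$ and the target arm via a Lipschitz bound, and handle the residual that compares the confidence sum to the inherited-diameter sum using the structural lemmas from Appendix~\ref{app:zoom-rule}.

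For the lower bound, the target arm is $\opt_{[t]}(u) \in \reprset$, so step (iii) applies Lemma~\ref{lem:lipschitz} directly, producing the $4\sqrt{td}\ln T$ error together with a $-\sum_\tau L(\act_\tau(u))$ term. Combined with (i)--(ii), this leaves a residual $4\log(T)\sum_\tau \beta_\tau/\pi_\tau(\act_\tau(u)) - \sum_\tau L(\act_\tau(u))$, which I show is non-negative up to $O(\log T/\beta_t)$ slack. The crux is that Lemma~\ref{lem:zooming-invariant} gives $\sum_\tau \beta_\tau/\pi_\tau(\act_\tau(u)) \geq (t-1)L(u) - 1/\beta_t$, whereas Lemma~\ref{lem:inh-bias} gives $\sum_\tau L(\act_\tau(u)) \leq 4t\log(T)L(u)$. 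The $4\log T$ factor baked into the confidence term of $\hg$ is precisely calibrated to make $4\log(T)(t-1)L(u) \geq 4t\log(T)L(u) - 4\log T$, so the residual is controlled by an $O(\log T/\beta_t)$ term (using $L(u) \leq 1$ and $\beta_t \leq 1$).

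For the upper bound, the hypothesis that $u$ is zoomed-in at round $t$ triggers the aggregate zoom-in rule \refeq{eq:zoom-in-rule}, yielding $\sum_\tau \beta_\tau/\pi_\tau(\act_\tau(u)) \leq tL(u) - 1/\beta_t$. After step (ii), the total confidence-sum multiplier is $(2+4\log T)$, so this contributes at most $(2+4\log T)\,tL(u)$. For step (iii) with an arbitrary $y \in u$, I apply per-realization Lipschitz~\refeq{eq:lm:prob} to the canonical arm-sequence $(\repr(\act_\tau(u)))_\tau$ versus the constant sequence at $y$ (under the clean event $\rewE$ to which we condition throughout Appendix~\ref{app:advzoom-analysis-estimators}), bounding the gap by $\sum_\tau L(\act_\tau(u)) + 4\sqrt{td}\ln T$; Lemma~\ref{lem:inh-bias} then absorbs the diameter sum into $4tL(u)\log T$. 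Collecting contributions yields the advertised $9tL(u)\ln T$ leading term together with the $\ln T/\beta_t$ and $4\sqrt{td}\ln T$ error terms.

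The main obstacle is the delicate balancing in the lower bound between the confidence accumulation $4\log(T)\sum_\tau \beta_\tau/\pi_\tau(\act_\tau(u))$ and the inherited-diameter sum $\sum_\tau L(\act_\tau(u))$. This is precisely why the confidence coefficient in the definition of $\hg$ in \refeq{eq:alg-ghat} carries the $(1+4\log T)$ factor, and why the zooming invariant and the inherited-bias lemma had to be established in Part~I; once those are in place, the remaining argument is routine bookkeeping of constants and sign-flips between $\tg^+$ and $\tg^-$.
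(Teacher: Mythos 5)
Your proposal follows essentially the same route as the paper's proof: decompose $\hg_\tau = \tg^{\pm}_\tau + (\text{confidence})$ via \refeq{eq:hatg-up} and \refeq{eq:hatg-lo}, pass to realized rewards through Corollary~\ref{cor:like-bcb12}, bridge to the target arm via per-realization Lipschitzness, and then balance the residual confidence-vs-inherited-diameter term using the zooming invariant (Lemma~\ref{lem:zooming-invariant}) and inherited-bias bound (Lemma~\ref{lem:inh-bias}) for the lower bound, and the aggregate zoom-in rule plus Lemma~\ref{lem:inh-bias} for the upper bound. You correctly identify that the $(1+4\log T)$ confidence coefficient is what makes the lower-bound residual $\Gamma$ controllable, and you even use the correct multiplier $(2+4\log T)$ for $\hg - \tg^-$ (the paper's \refeq{eq:hatg-lo} contains a small typo with $3+4\log T$). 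The only cosmetic deviation is that for the upper bound you apply the per-realization Lipschitz comparison directly between $(\repr(\act_\tau(u)))_\tau$ and the constant sequence $y$, whereas the paper inserts the intermediate step $g_\tau(\act_\tau(u)) \le g_\tau(\opt_\tau(\act_\tau(u)))$ before invoking Lipschitzness; both routes yield the same $\sum_\tau L(\act_\tau(u)) + 4\sqrt{td}\ln T$ bound.
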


\begin{proof}%
We start from the lower bound which holds for any $t \in [\tau_0(u), \tau_1(u)]$. From \refeq{eq:hatg-up}, we can first relate function $\hg(\cdot)$ with function $\tg^+(\cdot)$ as follows:
\begin{align*}
\sum_{\tau=1}^t &\hg_\tau(\act_\tau(u)) = \sum_{\tau=1}^t \tg_\tau^+(\act_\tau(u)) + \sum_{\tau=1}^t \frac{4 \log(T)\beta_\tau}{\pi_\tau(\act_\tau(u))} \\ %
&\geq \sum_{\tau=1}^t g_\tau(\act_\tau(u)) + \sum_{\tau=1}^t \frac{4 \log (T) \beta_\tau}{\pi_\tau(\act_\tau(u))} - \frac{2 \ln T}{\beta_t} &\tag{Eq.~\eqref{eq:like-bcb12-up-cor} of Cor.~\ref{cor:like-bcb12}} \\
&\geq \sum_{\tau=1}^t g_\tau(\opt_{[t]}(u)) - 4 \sqrt{td } \ln T - \underbrace{\sum_{\tau=1}^t L(\act_\tau(u))  + \sum_{\tau=1}^t \frac{4 \log (T) \beta_\tau}{\pi_\tau(\act_\tau(u))}}_{\Gamma} - \frac{2 \ln T}{\beta_t} \numberthis{\label{eq:bef-zooming-inv-lite}}
\end{align*}
where the last inequality comes from the one-sided-Lipschitzness lemma (Lemma~\ref{lem:lipschitz}). %
From the zooming invariant (Lemma~\ref{lem:zooming-invariant}) we have that:
\begin{equation}\label{eq:zoom-inv-derive}
\sum_{\tau=1}^t \frac{\beta_\tau}{\pi_\tau(\act_\tau(u))} \geq t L(u) - \frac{1}{\beta_t}
\end{equation}
From Lemma~\ref{lem:inh-bias} we can upper bound the inherited bias of node $u$ with respect to its diameter: $\sum_{\tau=1}^t L(\act_\tau(u)) \leq 4 t \log(t) L(u)$. Combining this with \refeq{eq:zoom-inv-derive} we obtain that the term $\Gamma$ of \refeq{eq:bef-zooming-inv-lite} is:
\begin{align*}
\Gamma \geq 4t L(u) \left( \log T - \log t \right) - \frac{1}{\beta_t} \geq - \frac{1}{\beta_t}
\end{align*}
Thus, \refeq{eq:bef-zooming-inv-lite} becomes:
\begin{align*}
\sum_{\tau=1}^t \hg_\tau(\act_\tau(u)) &\geq \sum_{\tau=1}^t g_\tau\left(\opt_{[t]}(u)\right) - \frac{4 \ln T}{\beta_t} - 4 \sqrt{t d } \ln T %
\end{align*}
This concludes our proof for the lower bound.

We now turn our attention to the upper bound. From \refeq{eq:hatg-lo}, we can first relate function $\hg(\cdot)$ with function $\tg^-(\cdot)$ as follows:
\begin{align*}
\sum_{\tau=1}^t &\hg_\tau(\act_\tau(u)) = \sum_{\tau=1}^t \tg_\tau^-(\act_\tau(u)) + \sum_{\tau=1}^t \frac{(3 + 4 \log T )\beta_\tau}{\pi_\tau(\act_\tau(u))} \\
&\leq \sum_{\tau=1}^t g_\tau(\act_\tau(u)) + \sum_{\tau=1}^t \frac{(3 + 4 \log T) \beta_\tau}{\pi_\tau(\act_\tau(u))} + \frac{2 \ln T}{\beta_t} &\tag{Eq.~\eqref{eq:like-bcb12-lo-cor} of Cor.~\ref{cor:like-bcb12}} \\
&\leq \sum_{\tau=1}^t g_\tau(\opt_\tau(\act_\tau(u))) + \underbrace{\sum_{\tau=1}^t \frac{(3 + 4 \log T) \beta_\tau}{\pi_\tau(\act_\tau(u))}}_{\Gamma_1} + \frac{2 \ln T}{\beta_t} \numberthis{\label{eq:bef-aggr-zooming-lite}}
\end{align*}
where the last inequality comes from the fact that $g_\tau(y) \leq g_\tau(\opt_\tau(v))$ for any node $v$ and point $y \in v$. From the one-sided-Lipschitzness lemma (Lemma~\ref{lem:lipschitz}) we have that:
\begin{align*}
\sum_{\tau=1}^t g_\tau(\opt_\tau(\act_\tau(u))) &\leq \sum_{\tau=1}^t g_\tau\left(y\right) + \sum_{\tau=1}^t L(\act_\tau(u)) + 4 \sqrt{t d }\ln T \\
&\leq \sum_{\tau=1}^t g_\tau\left(y\right) + 4t L(u) \log T + 4 \sqrt{t d }\ln T \numberthis{\label{eq:almostlip}}
\end{align*}
where the last inequality comes from Lemma~\ref{lem:inh-bias}.

For term $\Gamma_1$ since $t = \tau_1(u)$ we can apply the aggregate zoom-in rule and obtain:
\begin{align*}
\Gamma_1 \leq (3 + 4 \log T) t L(u) - \frac{3 + 4 \log T}{\beta_t} \leq 5t L(u) \ln T  - \frac{\ln T}{\beta_t}
\end{align*}
where the last inequality comes from the fact that $3 + 4 \log T \geq \ln T$ and $3 + 4 \log T \leq 5 \ln T$. Substituting the upper bound for $\Gamma_1$ and \refeq{eq:almostlip} in \refeq{eq:bef-aggr-zooming-lite} we get the result.
\end{proof}

\subsection{Regret Bounds}

Now, we bring the pieces together to derive the regret bounds. As in Theorem~\ref{thm:main-regr}, we first derive a ``raw'' regret bound \refeq{eq:thm-raw} in terms of the parameters, then we tune the parameters and derive a cleaner regret bound \refeq{eq:thm-tuned} in terms of $|A_T|$ (the number of active nodes). Then, we upper-bound $|A_T|$ via the adversarial zooming dimension to derive the final regret bound \refeq{eq:thm-ZoomDim}.

For this section, we condition on the clean event for the algorithm, denoted $\algE$. This event states that, essentially, all relevant high-probability properties from Appendix~\ref{app:advzoom-analysis-estimators} actually hold. More formally, for each round $t\in [T]$, using failure probability $\delta = T^{-2}$, the following events hold:
\begin{OneLiners}
\item the event in Lemma~\ref{lem:valid-ghat} holds with
    $\hbeta_\tau \equiv\beta_\tau$.
\item the event in Lemma~\ref{lem:lite-conf-lemma} holds for each nodes $u$ that are active in round $t$ and $y = \opt_{[t]}(y)$.
\end{OneLiners}
Taking appropriate union bounds, we find that
$\Pr\sbr{ \algE } \geq 1-\delta$. In the subsequent lemmas, we condition on both clean events, $\rewE$ and $\algE$, without further notice.

Recall that
$\childprod(u) := \prod_v |\c(v)|$, as in Lemma~\ref{lem:wu-rule}.

\begin{lemma}\label{lem:bef-tuning}
\advzoom incurs regret:
\begin{align*}
R(T) \leq 4 \sqrt{T d } \ln T
+ \frac{6 \ln T}{\beta_T} + \frac{\ln (T) \ln \left(\DblC \cdot |A_T|\right)}{\eta_T}
+ \sum_{t=1}^T
    4(1+2\log T)  \beta_t +42 \log^2(T) \gamma_t.
\end{align*}
\end{lemma}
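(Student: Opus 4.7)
The plan is to bolt together three results we have already proved: the lower confidence bound from Lemma~\ref{lem:lite-conf-lemma} (to replace the true cumulative reward of the global optimum by an estimator), the multiplicative-weights bound from Lemma~\ref{lem:potential} (to link that estimator to the algorithm's realized reward), and the concentration bound from Lemma~\ref{lem:valid-ghat} (to control the stray $\sum_u \hg_t(u)$ sum that Lemma~\ref{lem:potential} leaves behind). Concretely, I will write
$R(T)=\sum_t g_t(\opt_{[T]})-\sum_t g_t(x_t)$ and observe that since $\opt_{[T]}\in \ust_T$ we have $\opt_{[T]}(\ust_T)=\opt_{[T]}$. Applying Lemma~\ref{lem:lite-conf-lemma} to $u=\ust_T$ (which is active at round $T$) gives $\sum_t g_t(\opt_{[T]})\le \sum_t\hg_t(\act_t(\ust_T))+4\ln T/\beta_T+4\sqrt{Td}\,\ln T$. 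Lemma~\ref{lem:potential} bounds $\sum_t\hg_t(\act_t(\ust_T))-\sum_t g_t(x_t)$ by $\ln(|A_T|\,\childprod(\ust_T))/\eta_T+4(1+\log T)\sum_t\gamma_t+2\sum_t\eta_t(1+(1+4\log T)\beta_t)\sum_u\hg_t(u)$. Using Lemma~\ref{lem:height-node} we have $h(\ust_T)\le \log T$, hence $\childprod(\ust_T)\le \DblC^{\log T}$, which turns the first term into at most $\ln T\cdot\ln(\DblC\,|A_T|)/\eta_T$.

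The core of the proof is controlling the cross-sum $S:=2\sum_t\eta_t(1+(1+4\log T)\beta_t)\sum_u\hg_t(u)$. I will split it as $S=S_{\mathrm{lin}}+S_{\mathrm{quad}}$ with $S_{\mathrm{lin}}=2\sum_t\eta_t\sum_u\hg_t(u)$ and $S_{\mathrm{quad}}=2(1+4\log T)\sum_t\eta_t\beta_t\sum_u\hg_t(u)$. For $S_{\mathrm{lin}}$, Lemma~\ref{lem:valid-ghat} with $\hbeta_\tau:=\eta_\tau\le\beta_\tau$ yields $S_{\mathrm{lin}}\le 2(1+5\log T)\sum_t\eta_t|A_t|+4\ln T\le 2(1+5\log T)\sum_t\gamma_t+4\ln T$, where the last step uses $\eta_t|A_t|\le\beta_t|A_t|\le\gamma_t$. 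For $S_{\mathrm{quad}}$, concentration via Lemma~\ref{lem:valid-ghat} does \emph{not} directly apply because the natural choice $\hbeta_\tau=(1+4\log T)\eta_\tau\beta_\tau$ can exceed $\beta_\tau$ by an $O(\log T)$ factor. I will instead bound $\sum_u\hg_t(u)$ deterministically, using the uniform-exploration lower bound $\pi_t(u)\ge\gamma_t/|A_t|$ to write $\sum_u\hg_t(u)\le |A_t|/\gamma_t+(1+4\log T)\beta_t|A_t|^2/\gamma_t$. Combining with the parameter constraints \eqref{eq:param-assns} --- specifically $\eta_t(1+(1+4\log T)\beta_t)|A_t|\le\gamma_t$ together with $\beta_t|A_t|\le\gamma_t$ --- collapses the factors and leaves $S_{\mathrm{quad}}\le (2+8\log T)\sum_t\beta_t+O(\log^2 T)\sum_t\gamma_t$.

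Adding everything, the $\sum_t\gamma_t$ coefficient is at most $4(1+\log T)+2(1+5\log T)+O(\log^2 T)\le 42\log^2 T$ for sufficiently large $T$, the $\sum_t\beta_t$ coefficient is at most $4(1+2\log T)$, and the stray additive $4\ln T$ from $S_{\mathrm{lin}}$ absorbs into $6\ln T/\beta_T$ because $\beta_T\le\tfrac12$ (so $2\ln T/\beta_T\ge 4\ln T$). The main obstacle will be the $S_{\mathrm{quad}}$ step: once we recognise that concentration on this piece is ruled out by the parameter regime, the proof turns on squeezing a tight bound out of the two parameter inequalities in \eqref{eq:param-assns} acting in tandem; the rest is careful bookkeeping of logarithmic factors.
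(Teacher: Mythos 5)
Your proposal is correct and rests on the same three pillars as the paper's proof: apply Lemma~\ref{lem:lite-conf-lemma} at $u=\ust_T$, $t=T$ to replace $\sum_t g_t(\opt_{[T]})$ by the estimator $\sum_t \hg_t(\act_t(\ust_T))$ (costing $4\ln T/\beta_T + 4\sqrt{Td}\ln T$), plug this into Lemma~\ref{lem:potential}, use Lemma~\ref{lem:childprod-bound} so that $\ln(|A_T|\childprod(\ust_T)) \le \ln T \cdot \ln(\DblC|A_T|)$, and then control the residual $2\sum_t\eta_t(1+(1+4\log T)\beta_t)\sum_u\hg_t(u)$. The one place where you diverge from the paper is the last step, and it is worth pointing out that your worry there is unnecessary. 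You split the residual into a linear and a quadratic piece because you observed that the natural choice $\hbeta_\tau = (1+4\log T)\eta_\tau\beta_\tau$ may violate $\hbeta_\tau \le \beta_\tau$. But you are not forced to fold the $\log T$ factor into $\hbeta$. The paper instead uses $\eta_t\le\beta_t$ and $\beta_t\le\nicefrac12$ to write $2\eta_t\rbr{1+(1+4\log T)\beta_t}\le 4(1+2\log T)\beta_t$, pulls the scalar $4(1+2\log T)$ outside the sum, and applies Lemma~\ref{lem:valid-ghat} once with $\hbeta_\tau=\beta_\tau$. That gives the $4(1+2\log T)\sum_t\beta_t$ and the $O(\log^2 T)\sum_t\gamma_t$ terms in one shot, with no need for the separate deterministic bound on $\sum_u\hg_t(u)$. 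Your two-part decomposition --- Lemma~\ref{lem:valid-ghat} with $\hbeta_\tau=\eta_\tau$ on $S_{\mathrm{lin}}$ plus the crude $\sum_u\hg_t(u)\le |A_t|/\gamma_t + (1+4\log T)\beta_t|A_t|^2/\gamma_t$ on $S_{\mathrm{quad}}$ --- does arrive at the same $O(\log^2 T)\sum_t\gamma_t + O(\log T)\sum_t\beta_t$ shape, so the argument is sound, but it spends effort working around an obstacle that isn't actually there. The bookkeeping of constants (absorbing the $O(\ln T)$ additive slack into $6\ln T/\beta_T$ using $\beta_T\le\nicefrac12$, the generous $42\log^2 T$) matches the paper.
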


\begin{proof}
We begin with transitioning from Lemma~\ref{lem:potential} to a statement that only includes realized rewards (rather than estimated ones). To do so, by the clean event $\algE$ (namely, Lemma~\ref{lem:lite-conf-lemma})
and the fact that $\opt_{[T]}(\ust) = \vxst_{[T]}$ it follows that
\[\sum_{t=1}^T \hg_{t}\left(\act_t\left(\ust\right)\right) \geq \sum_{t=1}^T g_t \left(\vxst_{[T]}\right) - \frac{2\ln(1/\delta)}{\beta_T} - 4 \sqrt{T d} \ln T\]
As a result, the lower bound of quantity $Q$ becomes:
\begin{equation*}
\ln \left( \frac{\Phi_{T} (\eta_T)}{\Phi_0(\eta_0)} \right) \geq \sum_{t=1}^T g_t(\vxst_{[T]}) - \frac{4 \ln T}{\beta_T} - \frac{ \ln \left(|A_T| \cdot \childprod(\ust)\right)}{\eta_T}-  4 \sqrt{T d } \ln T
\end{equation*}
By the clean event $\algE$ (namely, Lemma~\ref{lem:valid-ghat})
for $t = T$ and $\hbeta_\tau = \beta_\tau$ we have that
\begin{align*}
\sum_{t=1}^T \beta_t \sum_{u \in A_t} \hg_t(u) &\leq \sum_{t=1}^T \beta_t \sum_{u \in A_t}g_t(u) + 5 \log (T) \sum_{t=1}^T \beta_t |A_t| + 2 \ln T \\
&\leq \sum_{t=1}^T \beta_t + 5 \log (T) \sum_{t=1}^T \gamma_t + 2 \ln T
\end{align*}
where the last inequality is due to the fact that by assumption $\beta_t |A_t| \leq \gamma_t$ and $g_t(\cdot) \in [0,1]$. As a result, \refeq{eq:upp} becomes:
\begin{align*}
\sum_{t=1}^T\widehat{Q}_t &\leq \sum_{t=1}^T g_t(\vx_t) + 4(1+2\log T)\sum_{t=1}^T \beta_t + 42 \log^2(T) \sum_{t=1}^T \gamma_t \numberthis{\label{eq:upp2}}
\end{align*}
As a result, from Lemma~\ref{lem:potential} we get that:
\begin{align*}
\sum_{\tau=1}^T &g_t (\vxst_{[T]}) - \sum_{t=1}^T g_t(\vx_t) \\
&\leq  \frac{6 \ln T}{\beta_T} + \frac{\ln \left(|A_T| \cdot \childprod(\ust) \right)}{\eta_T} + 4(1+2\log T) \sum_{t=1}^T \beta_t + 42 \log^2(T) \sum_{t=1}^T \gamma_t \numberthis{\label{eq:bef-childprod-bound}}
\end{align*}
Using the fact that $\childprod(\ust) \leq \DblC^{\log T}$ (Lemma~\ref{lem:childprod-bound}), \refeq{eq:bef-childprod-bound} becomes:
\begin{align*}
\sum_{\tau=1}^T g_t (\vxst_{[T]}) - \sum_{t=1}^T g_t(\vx_t) \leq &4 \sqrt{T d } \ln T +\frac{6 \ln T}{\beta_T} + \frac{\ln (T) \ln \left(\DblC \cdot |A_T|\right)}{\eta_T} +\\
 & \quad + 4(1+2\log T) \sum_{t=1}^T \beta_t + 42 \log^2(T) \sum_{t=1}^T \gamma_t \numberthis{\label{eq:regr-bef-tuning}}
\end{align*}
\end{proof}

\begin{lemma}\label{lem:tuned}
Tune $\beta_t, \gamma_t, \eta_t$ as in \refeq{eq:tunings}.
 \advzoom incurs regret
\begin{align*}
\sum_{t=1}^T g_t (\vxst) - \sum_{t=1}^T g_t(\vx_t) \leq 100 \; \ln^2 (T) \sqrt{d \cdot |A_T| \cdot T \cdot \ln \left(|A_T| \cdot T^3\right) \ln \left( \DblC \cdot |A_T| \right)}.
\end{align*}

\end{lemma}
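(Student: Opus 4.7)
My plan is to substitute the parameters from \refeq{eq:tunings} directly into the five-term raw regret bound of Lemma~\ref{lem:bef-tuning} and bound each summand by a quantity of the target form $\ln^2(T)\sqrt{d\cdot|A_T|\cdot T\cdot \ln(|A_T|T^3)\ln(\DblC\cdot|A_T|)}$, then collect constants to reach the factor $100$.

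Before plugging in, I would verify that the prescribed $(\beta_t,\gamma_t,\eta_t)$ satisfy the structural hypotheses carried over from Theorem~\ref{thm:main-regr} and explicitly used by Lemma~\ref{lem:bef-tuning}: namely $\eta_t=\beta_t$, $\beta_t|A_t|\leq\gamma_t$, and non-increasingness of $\{\beta_t\},\{\eta_t\}$. The first two are immediate from \eqref{eq:tunings} (equality in the first, a slack of $(2+4\log T)$ in the second). Monotonicity reduces to showing that $\ln(|A_t|T^3)\ln(\DblC|A_t|)/(t|A_t|)$ is non-increasing in $t$: since $|A_t|$ is non-decreasing (nodes are only added by the zoom-in step) and $t|A_t|$ grows at least linearly while the log factors grow only polylogarithmically in $|A_t|$, this holds. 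I would also note that $\beta_t,\eta_t\leq \nicefrac{1}{2}$ for all $t$ by choosing a large enough $T$ (and/or checking the small-$t$ initialization $\beta_1=\nicefrac12$).

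The ``head'' terms in Lemma~\ref{lem:bef-tuning} become, after substitution,
\[
\tfrac{6\ln T}{\beta_T} \;=\; \tfrac{6}{\sqrt{2}}\,\ln^2(T)\,\sqrt{\tfrac{T\,|A_T|\,d}{\ln(|A_T|T^3)\,\ln(\DblC|A_T|)}},
\qquad
\tfrac{\ln(T)\ln(\DblC|A_T|)}{\eta_T} \;=\; \tfrac{1}{\sqrt{2}}\,\ln^2(T)\,\sqrt{\tfrac{T\,|A_T|\,d\,\ln(\DblC|A_T|)}{\ln(|A_T|T^3)}},
\]
both of which are bounded by the target quantity using $\ln(\DblC|A_T|)\leq \ln(|A_T|T^3)$ and the fact that the logarithmic factors in the denominators are at least $1$. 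The Azuma-Hoeffding residual $4\sqrt{Td}\ln T$ is trivially dominated by the target as well.

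For the two tail sums, I would use $\sum_{t=1}^T 1/\sqrt{t}\leq 2\sqrt{T}$ together with monotonicity of $|A_t|$ to pull out $|A_T|$ and the corresponding log factors. The key algebraic simplification for $\gamma_t$ is that the explicit factor $|A_t|$ in $\gamma_t=(2+4\log T)|A_t|\beta_t$ cancels exactly the $1/\sqrt{|A_t|}$ hidden inside $\beta_t$, producing $\sqrt{|A_t|}\cdot(2+4\log T)/\ln T\cdot \sqrt{2\ln(|A_t|T^3)\ln(\DblC|A_t|)/(td)}$; summing and simplifying yields $O(\log^2 T)\sqrt{T|A_T|\ln(|A_T|T^3)\ln(\DblC|A_T|)/d}$, which is absorbed into the target bound using $1/d\leq d$ (equivalently, $d\geq 1$ for any non-degenerate action space). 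The $\sum_t\beta_t$ term is handled identically and contributes the same order. Adding up the five resulting constants yields the claimed $100$.

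The main obstacle I anticipate is being careful with the $|A_t|$ vs.\ $|A_T|$ discrepancy inside the sums: the tuning has $|A_t|$ baked into both the numerator and denominator in different ways, so I must check the direction of the inequalities (e.g.\ that $\beta_t$ is non-increasing despite $\ln(|A_t|T^3)$ increasing in $t$) and that pulling out $|A_T|$ at the end does not distort the final rate. The rest of the computation is routine bookkeeping.
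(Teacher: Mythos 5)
Your approach is essentially the paper's: substitute the tuning \eqref{eq:tunings} into the raw bound of Lemma~\ref{lem:bef-tuning}, bound each of the five terms by the target quantity, and sum. The head-term computations you sketch (for $6\ln T/\beta_T$ and $\ln(T)\ln(\DblC|A_T|)/\eta_T$) match the paper's algebra, and the observation that the explicit $|A_t|$ in $\gamma_t=(2+4\log T)|A_t|\beta_t$ cancels the $|A_t|^{-1/2}$ hidden in $\beta_t$ is exactly what the paper uses to evaluate $\sum_t\gamma_t\leq O(d^{-1})\sqrt{T|A_T|\cdot\text{(logs)}}$.

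There is one gap. You verify \eqref{eq:param-assns} and monotonicity, but you never verify the separate requirement \eqref{ass:tilde-beta} on the $\tbeta_\tau$ parameters, in particular the condition
$\sum_{\tau=t}^{t'}\tbeta_\tau \leq 1/\beta_{t'} - 1/\beta_t$
for all $t<t'$. This is not cosmetic: \eqref{ass:tilde-beta} is a precondition for the zooming-invariant (Lemma~\ref{lem:zooming-invariant}), Lemma~\ref{lem:lifespan}, and downstream for the inherited-bias bound and the confidence lemmas feeding into Lemma~\ref{lem:bef-tuning}, so it must actually hold for the concrete $(\beta_t,\tbeta_t)$ prescribed by \eqref{eq:tunings}. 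Since \eqref{eq:tunings} sets $\tbeta_t=\beta_t$, the check is equivalent to showing the partial sums of $\beta_\tau$ are dominated by the increments of $1/\beta_\tau$, and that in turn requires a short calculation comparing $\sum_{\tau=t}^{t'}\tau^{-1/2}$ against $\sqrt{t'}-\sqrt{t}$ while controlling the $|A_\tau|$-dependence; the paper devotes a displayed chain of inequalities precisely to this. Without it, Lemma~\ref{lem:bef-tuning} is not licensed for these parameter values. Everything else in your plan is correct and at the same level of detail as the paper's argument (which itself leaves the final constant bookkeeping as a ``Substituting gives the result'').
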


\begin{proof}%
{First, we verify that the tuning in \refeq{eq:tunings} satisfies the  various assumptions made throughout the proof. It is easy to see that the assumptions in \refeq{eq:param-assns} hold; we omit the easy details. As for the assumption \eqref{ass:tilde-beta} on $\tbeta_t$ parameters:}
\begin{align*}
\sum_{\tau=t}^{t'} \tbeta_\tau &= \sum_{\tau=t}^{t'} \sqrt{\frac{2 \; \ln \left(|A_\tau| \cdot T^3\right) \ln \left( \DblC \cdot |A_\tau| \right)}{d \cdot \ln^2 T \cdot |A_\tau| \cdot \tau}} \\
&\leq \sum_{\tau=t}^{t'} \frac{1}{\sqrt{\tau}} &\tag{$2\ln \left(|A_\tau| \cdot T^3\right) \ln \left( \DblC \cdot |A_\tau| \right) \leq d \cdot |A_\tau|$ asymptotically} \\
&\leq \frac{\ln (T) \cdot \sqrt{2 d \cdot |A_t| \cdot\ln \left(|A_t| \cdot T^3\right) \ln \left( \DblC \cdot |A_t| \right)}}{2} \sum_{\tau=t}^{t'} \frac{1}{\sqrt{\tau}} &\tag{$d \cdot |A_t| \geq 4$} \\
&\leq \frac{\ln(T) \cdot\sqrt{2 d \cdot |A_t| \cdot\ln \left(|A_t| \cdot T^3 \right) \ln \left( \DblC \cdot |A_t| \right)}}{2} \int_{t}^{t'} \frac{1}{\sqrt{\tau}} \ud \tau  \\
&\leq \ln (T) \cdot \sqrt{2 d \cdot |A_t| \cdot\ln \left(|A_t| \cdot T^3\right) \ln \left( \DblC \cdot |A_t| \right)} \left( \sqrt{t'} - \sqrt{t} \right)  \\
&\leq \frac{1}{\beta_{t'}} - \frac{1}{\beta_{t}} &\tag{$|A_{t}| \leq |A_{t'}|, \forall t \leq t'$}
\end{align*}

Now, let us plug in the parameter values into Lemma~\ref{lem:bef-tuning}.
For the term $\sum_{t=1}^T \gamma_t$, we have:
\begin{align*}
\sum_{t=1}^T \gamma_t &= (2 + 4 \log T)\sum_{t=1}^T \sqrt{\frac{2 |A_t| \cdot \ln \left(|A_t| \cdot T^3\right) \ln \left( \DblC \cdot |A_t| \right)}{t \cdot d \cdot \ln^2(T)}} \\
&\leq 5 \frac{\log T}{\ln T}\sqrt{|A_T| \cdot\ln \left(|A_T| \cdot T^3\right) \ln \left( \DblC \cdot |A_T| \right)} \cdot \sum_{t=1}^T\sqrt{\frac{1}{d \cdot} t} &\tag{$|A_t| \leq |A_T|, \forall t \in [T]$} \\
&\leq \frac{3}{d}\sqrt{|A_T| \cdot\ln \left(|A_T| \cdot T^3\right) \ln \left( \DblC \cdot |A_T| \right) \cdot T},
\end{align*}
where the last inequality comes from the fact that since $\nicefrac{1}{\sqrt{t}}$ is a non-increasing function: $\sum_{t=1}^T \frac{1}{\sqrt{t}} \leq \int_0^T \frac{1}{\sqrt{t}} \ud t = 2 \sqrt{T}$. Substituting gives the result.
\end{proof}

Next, we upper-bound the ``estimated gap" using properties of multiplicative weights (this is \refeq{eq:sketch-estimated-gap} in Section~\ref{sec:analysis}).

\begin{restatable}[Estimated Gap of Zoomed-In Node]{lemma}{estimatedgap}\label{lem:est-gap-zoomed-in}
If node $u$ gets zoomed-in at round $t$, then:
\begin{equation*}
\sum_{\tau \in [t]} \hg_\tau(\act_\tau(\ust_t)) - \sum_{\tau \in [t]} \hg_\tau(\act_\tau(u)) \leq \frac{\ln \left(\frac{9\childprod(\ust_t)}{\childprod(u)\cdot\beta_t^2}\right)}{\eta_t}.
\end{equation*}
\end{restatable}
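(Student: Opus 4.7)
The plan is to use the closed-form weight expression from Lemma~\ref{lem:wu-rule} to rewrite both sums of estimated rewards as logarithms of weights, thereby reducing the lemma to a bound on a ratio of weights. Concretely, Lemma~\ref{lem:wu-rule} yields $\eta_t \sum_{\tau \in [t]} \hg_\tau(\act_\tau(v)) = \ln\left(w_{t+1, \eta_t}(v)\cdot\childprod(v)\right)$ for any node $v$ active at round $t$, so subtracting the identities for $v = \ust_t$ and $v = u$ and exponentiating reduces the claim to showing
\[
\frac{w_{t+1, \eta_t}(\ust_t)}{w_{t+1, \eta_t}(u)} \;\leq\; \frac{9}{\beta_t^2}.
\]

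To establish this weight-ratio bound, I would first pass from round $t+1$ back to round $t$ using the multiplicative update \refeq{eq:alg:MW}. The parameter assumptions \refeq{eq:param-assns} imply $\eta_t \hg_t(\ust_t) \leq 1$ (a fact already established in the proof of Lemma~\ref{lem:potential}), while $\hg_t(u) \geq 0$ is immediate from \refeq{eq:alg-ghat}, so
\[
\frac{w_{t+1, \eta_t}(\ust_t)}{w_{t+1, \eta_t}(u)} \;\leq\; e\cdot\frac{w_{t, \eta_t}(\ust_t)}{w_{t, \eta_t}(u)} \;=\; e\cdot\frac{p_t(\ust_t)}{p_t(u)},
\]
using that $p_t$ is, by construction, the normalization of $w_{t, \eta_t}$ on $A_t$.

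Finally, I would invoke the instantaneous zoom-in rule to bound $p_t(\ust_t)/p_t(u)$. Since $u$ is zoomed-in at round $t$, Lemma~\ref{lem:prob-zoomed-in} gives $\pi_t(u) \geq \beta_t/e^{L(u)}$, which weakens to $\pi_t(u) \geq \beta_t^2/e^{L(u)}$ via $\beta_t \geq \beta_t^2$, exactly as in the derivation of \refeq{eq:sketch-large-prob}. Combined with $\pi_t(\ust_t) \leq 1$ and $L(u)\leq 1$, this yields $\pi_t(\ust_t)/\pi_t(u) \leq e/\beta_t^2$. Translating back to a $p_t$-ratio via the mixing identity $\pi_t(v) = (1-\gamma_t)\,p_t(v) + \gamma_t/|A_t|$ and combining with the factor $e$ from the previous step should give the desired $9/\beta_t^2$ after absorbing the numerical constants.

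The main obstacle will be the last translation step. Because $p_t(u) = (\pi_t(u) - \gamma_t/|A_t|)/(1-\gamma_t)$, and under the tuning \refeq{eq:tunings} the mixing term $\gamma_t/|A_t|$ can easily exceed the weak lower bound $\beta_t^2/e^{L(u)}$ on $\pi_t(u)$, a naive substitution can make the denominator vacuous. The cleanest remedy is to use the stronger form of the instantaneous zoom-in rule, namely $\pi_t(u) \geq \beta_t/(e^{L(u)}-1-\tbeta_t)$, which exceeds $\beta_t/e^{L(u)}$ substantially when $L(u)$ is small, and to carefully absorb the additive $\gamma_t/|A_t|$ term into the numerical constant $9$ via a two-term expansion of the ratio $p_t(\ust_t)/p_t(u)$.
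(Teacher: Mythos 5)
Your reduction to the weight-ratio bound $w_{t+1,\eta_t}(\ust_t)/w_{t+1,\eta_t}(u)\le 9/\beta_t^2$ via Lemma~\ref{lem:wu-rule} is precisely what the paper does (run in the opposite direction), and your explicit factor-of-$e$ step when passing from $w_{t+1,\eta_t}$ to $w_{t,\eta_t}$ is in fact a \emph{correction}: the weight $w_{t,\eta_t}$ encodes $\sum_{\tau\in[t-1]}$, not $\sum_{\tau\in[t]}$, and the paper's proof silently elides this one-round discrepancy. That extra $e$ still fits inside the constant $9$ together with the $e^{L(u)}\le e$ factor, so this part of your plan is sound.

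The obstacle you flag at the end is real, but your proposed remedy does not close it. The instantaneous zoom-in rule yields only $\pi_t(u)\ge \beta_t/(e^{L(u)}-1-\tbeta_t)$, while the exploration floor under the tuning \refeq{eq:tunings} is $\gamma_t/|A_t|=(2+4\log T)\beta_t$. The zoom-in lower bound exceeds the floor only when $e^{L(u)}-1-\tbeta_t\le 1/(2+4\log T)$, i.e.\ only for very small diameters; for, say, $L(u)=1/4$ and moderate $T$, the term $\pi_t(u)-\gamma_t/|A_t|$ has no positive lower bound coming from the zoom-in rule, so $p_t(u)=(\pi_t(u)-\gamma_t/|A_t|)/(1-\gamma_t)$ is not bounded away from $0$ by this argument. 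This is not a constant-chasing issue that a ``two-term expansion'' absorbs into $9$: a potentially negative numerator cannot be rescued by tuning a multiplicative constant. (For what it is worth, the paper's chain at \refeq{eq:zoom-lb-exp3prob} has the identical problem: the line tagged ``$\beta_t|A_t|\le\gamma_t$'' replaces $-\gamma_t/|A_t|$ by $-\beta_t$, which, since $\gamma_t/|A_t|\ge\beta_t$, is an \emph{increase} and hence the wrong direction in a lower-bound chain.) To actually finish, you would need either a different argument that avoids lower-bounding $p_t(u)$ directly through the instantaneous rule, or a tuning in which the exploration floor $\gamma_t/|A_t|$ stays below the zoom-in lower bound on $\pi_t(u)$; as written, the proposal stops short of a proof.
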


\begin{proof}
Substituting $\pi_t(u) = (1-\gamma)p_t(u) + \frac{\gamma}{|A_t|}$ in Lemma~\ref{lem:prob-zoomed-in}, we have that:
\begin{align*}
p_t(u) &\geq \left(\frac{\beta_t}{e^{L(u)} - 1 - \tbeta_t} - \frac{\gamma_t}{|A_t|} \right) \cdot \frac{1}{1-\gamma_t} \\
&\geq \frac{\beta_t}{e^{L(u)} - 1 - \tbeta_t} - \frac{\gamma_t}{|A_t|}&\tag{$0 < \gamma_t <1/2$}\\
&= \frac{\beta_t}{e^{L(u)} - 1 - \tbeta_t} - \beta_t &\tag{$\beta_t \cdot |A_t| {\leq} \gamma_t$}\\
&\geq \frac{\beta_t^2}{e^{L(u)}} \numberthis{\label{eq:zoom-lb-exp3prob}}
\end{align*}
where the last inequality is due to the fact that $L(u) \leq L(u_0) = 1$.

We denote by $W_{t,\eta}$ the quantity \[\sum_{u' \in A_t} \frac{1}{\childprod (u')}\exp \left(\eta \sum_{\tau = 1}^t \hg_t(\act_\tau(u')) \right)\]By the definition of the probability being the normalized weight: $p_t(u) = w_{t,\eta_t}(u)/W_{t,\eta_t}$, so \refeq{eq:zoom-lb-exp3prob} becomes:
\begin{equation}\label{eq:weights-zoomed}
w_{t,\eta_t}(u) \geq \frac{\beta_t^2}{e^{L(u)}} \cdot W_{t,\eta_t} \geq \frac{\beta_t^2}{e^{L(u)}} \cdot w_{t,\eta_t}(\ust_t)
\end{equation}
where the second inequality comes from the fact that the weights are non-negative. Using the definition of the weights update rule from \refeq{eq:weight-characterization}, \refeq{eq:weights-zoomed} becomes:
\begin{equation*}
\frac{1}{\childprod(u)} \exp \left(\eta_t \sum_{\tau=1}^t \hg_\tau(\act_\tau(u)) \right) \geq \frac{\beta_t^2}{e^{L(u)}} \cdot \frac{1}{\childprod(\ust_t)} \exp \left(\eta_t \sum_{\tau=1}^t \hg_\tau(\act_\tau(\ust_t)) \right)
\end{equation*}
Taking logarithms on both sides of the latter, reordering and dividing by $\eta_t$ we get that:
\begin{equation}\label{eq:ineq-hgs}
\sum_{\tau=1}^t \hg_\tau(\act_\tau(\ust_t)) - \sum_{\tau=1}^t \hg_\tau(\act_\tau(u)) \leq \frac{2 \ln \left(1/\beta_t\right) +L(u) + \ln \left(\childprod(\ust)/\childprod(u) \right)}{\eta_t}
\end{equation}
Using the fact that $L(u) \leq \ln 3$  we get the result.
\end{proof}

Using this, we can derive a statement about the actual gap in the realized rewards for any round. We first state an auxiliary lemma, which we will use in order to derive a simplified statement of the actual gap in the realized rewards for any round.

\begin{lemma}[Upper Bound on Number of Nodes Created]\label{lem:upp-bound-num-nodes}
In the worst-case, at any round $t$, the total number of nodes that \advzoomdim has activated is: $|A_t| \leq (9t)^{d/(d+2)}$, where $d = \covdim$.
\end{lemma}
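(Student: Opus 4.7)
The plan is to combine the probability mass bound from Lemma~\ref{lem:prob-mass} with an ``adversarial activation'' argument, using the covering dimension to constrain the number of nodes per scale.

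First, I would observe two budget constraints. By Lemma~\ref{lem:prob-mass}, every zoomed-in node $u$ has consumed probability mass $\mass(u) \geq 1/(9 L^2(u))$ during its lifespan, and since the lifespans are disjoint, the total mass consumed by all zoomed-in nodes by round $t$ is at most $\sum_{\tau \in [t]} \sum_{u \in A_\tau} \pi_\tau(u) = t$. Moreover, by the covering dimension, the set of all nodes of diameter in $[\eps, 2\eps]$ that could possibly exist in the zooming tree has size at most $\gamma_0 \cdot \eps^{-d}$. Together with the tree structure (each active node must either be the root or be the child of some zoomed-in ancestor), these two constraints bound the size of any realizable zooming tree.

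Next, I would set up the ``adversarial activation'' reasoning: imagine an adversary free to zoom in on whatever nodes it wants, subject only to the above constraints, and trying to maximize $|A_t|$. Since the cost-per-zoom-in $1/(9 L^2(u))$ grows as $L(u)$ shrinks, the cheapest way to create many active nodes is to concentrate activations at a single scale $\eps^\star$. Let $N(\eps)$ count the active nodes with $L(u) \in [\eps,2\eps]$. Any configuration satisfies (i) $N(\eps) \leq \gamma_0 \eps^{-d}$ from covering, and (ii) $\sum_{\eps} N(\eps)/(9\eps^2) \leq \calO(t)$ from Lemma~\ref{lem:prob-mass} (applied to the parents of each active node, or to the active nodes themselves under the worst-case assumption that they are about to be zoomed in).

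Finally, I would solve the resulting optimization by setting the two constraints tight at a single scale: $\gamma_0 (\eps^\star)^{-d} \asymp 9 t (\eps^\star)^2$ gives $(\eps^\star)^{d+2} \asymp 1/(9t)$, so $|A_t| \leq (\eps^\star)^{-d} \leq (9t)^{d/(d+2)}$ up to the absorbed constant. The bulk of the work is verifying that the single-scale configuration is indeed worst-case: this follows by a standard Lagrangian/rearrangement argument showing that, under a linear mass constraint $\sum_h N_h \cdot 4^h \leq 9t$ and a scale-wise covering cap $N_h \leq 2^{hd}$, the maximum of $\sum_h N_h$ is attained by filling up the lowest-cost (i.e., largest-$\eps$) scales first up to their covering caps, and the total is dominated by the last nonempty scale.

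\textbf{Main obstacle.} The subtle step is handling the interaction between the tree structure and the mass bound: the mass bound of Lemma~\ref{lem:prob-mass} applies only to zoomed-in nodes, not to currently-active leaves, so one must pass from ``active'' to ``zoomed-in parent'' and absorb the resulting $2^d$-ancestry factor into the multiplier $\gamma_0$ (or simply into the leading constant), yielding the clean bound $(9t)^{d/(d+2)}$ as stated.
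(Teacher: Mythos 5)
Your proof takes essentially the same approach as the paper: an adversarial-activation argument combining the probability-mass bound of Lemma~\ref{lem:prob-mass} with a covering cap at each scale, then solving the two constraints at the balancing scale $\eps^\star \asymp (9t)^{-1/(d+2)}$ to get $|A_t| \leq (9t)^{d/(d+2)}$. If anything, your version is more careful than the paper's: the paper simply asserts that the worst case is a single scale $\zeta$ and directly sets $\zeta^{-d} \cdot 1/(9\zeta^2) = t$ (fixing the multiplier to $\gamma_0 = 1$), whereas you spell out the multi-scale optimization, the rearrangement argument for why a single (smallest) scale dominates, and the zoomed-in-vs-active distinction that the paper glosses over. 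That last point is a genuine looseness in the paper's own proof — the mass bound is for nodes that have been \emph{zoomed in}, while the count $|A_t|$ is of \emph{currently active} (leaf) nodes — and your footnote that the resulting $\DblC$-ancestry factor needs to be absorbed somewhere is a fair critique (though strictly speaking the stated bound has no slack for such a constant, so neither your proof nor the paper's quite establishes the exact constant as written).
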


\begin{proof}
In order to find the worst-case number of nodes that can be activated, we are going to be thinking from the perspective of an \emph{adversary}, whose goal is to make the algorithm activate as many nodes as possible. However, from Lemma~\ref{lem:prob-mass} we have established that in order to zoom-in on a node $u$ a probability mass of at least $\mass(u) \geq \frac{1}{9L^2(u)}$ is required. So, the best that the adversary can do is activate nodes ``greedily'', i.e., when $\mass(u) = \frac{1}{9L^2(u)}$ to make node $u$ become active.

Let $\zeta$ be diameter of the smallest node $\uu$ that the adversary has been able to construct after $t$ rounds, i.e., $\zeta = L(\uu)$ where $\uu = \arg\min_{u \in A_t} L(u)$. We fix the constant multiplier of the covering dimension to be $\gamma = 1$. Then, from the definition of the covering dimension, in the worst case, the adversary has been able to activate $\zeta^{-\covdim}$ such nodes. However, since for each of these nodes the probability mass spent on the node is at least $1/9\zeta^2$, and the total probability mass available for $t$ rounds is $t$ (i.e., a total of $1$ for each round) we have that in the worst case: \[\zeta^{-\covdim} \cdot \frac{1}{9\zeta^2} = t\] Solving the latter we get that in this case $\zeta = \left(9t\right)^{-1/(\covdim + 2)}$. By the definition of the covering dimension this means that the maximum number of active nodes at this round is:
\[|A_t| \leq (9t)^{\covdim/(\covdim+2)} \]
\end{proof}

\begin{restatable}{lemma}{advgaplemma}\label{lem:gap}
Suppose node $u$ is zoomed-in in some round $t = \tau_1(u)$. For each arm $\vx\in u$, its adversarial gap at time $t$ is
\begin{align*}
t\cdot \gap_{t}(x) \leq 9 t L(u) \ln T + 4 \sqrt{t d } \cdot \ln T %
+ \frac{\ln T+\ln \left(\frac{9 \childprod(\ust_t)}{\beta_{t}^2 \cdot \childprod(u)} \right)}{\eta_{t}} \numberthis{\label{eq:bef-gap}}
\end{align*}
Simplified:
\begin{equation}\label{eq:advgap-def}
\gap_{t}(x) \leq \calO \left( \ln (T) \cdot  \sqrt{\ln (\DblC \cdot t)} \cdot L(u) \right).
\end{equation}
\end{restatable}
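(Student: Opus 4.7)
The plan is to bound $t\,\gap_t(x) = \sum_{\tau\in[t]} g_\tau(\opt_{[t]}) - \sum_{\tau\in[t]} g_\tau(x)$ by routing through the estimated rewards $\hg_\tau$ for the two ``competing'' nodes: $\ust_t$, which contains the benchmark arm $\opt_{[t]} = \opt_{[t]}(\ust_t)$, and $u$, which contains $x$. First I would apply the lower-bound part of Lemma~\ref{lem:lite-conf-lemma} to $\ust_t$ and the upper-bound part (valid since $u$ is zoomed-in at round $t$) to $u$ with $y = x$:
\begin{align*}
\sum_{\tau\in[t]} \hg_\tau(\act_\tau(\ust_t)) &\geq \sum_{\tau\in[t]} g_\tau(\opt_{[t]}) - \tfrac{4\ln T}{\beta_t} - 4\sqrt{td}\,\ln T, \\
\sum_{\tau\in[t]} \hg_\tau(\act_\tau(u)) &\leq \sum_{\tau\in[t]} g_\tau(x) + 9t L(u)\ln T + \tfrac{\ln T}{\beta_t} + 4\sqrt{td}\,\ln T.
\end{align*}
Subtracting the second from the first and rearranging immediately gives
\[
t\,\gap_t(x) \;\leq\; \Bigl(\sum_{\tau\in[t]} \hg_\tau(\act_\tau(\ust_t)) - \sum_{\tau\in[t]} \hg_\tau(\act_\tau(u))\Bigr) \;+\; 9tL(u)\ln T \;+\; 8\sqrt{td}\,\ln T \;+\; \tfrac{5\ln T}{\beta_t}.
\]

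Next I would invoke Lemma~\ref{lem:est-gap-zoomed-in}, which bounds the $\hg$-difference by $\eta_t^{-1}\ln\!\left(9\,\childprod(\ust_t)/(\childprod(u)\,\beta_t^2)\right)$, and absorb the residual $5\ln(T)/\beta_t$ into $\ln(T)/\eta_t$ using the standing assumption $\eta_t \leq \beta_t$. This delivers the raw bound~\refeq{eq:bef-gap}, up to an adjustment in constants.

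To deduce the simplified form~\refeq{eq:advgap-def}, I would divide through by $t$ and control each surviving term using $\childprod(\ust_t)\leq \DblC^{\log T}$ (Lemma~\ref{lem:childprod-bound}), $\childprod(u)\geq 1$, the tuning~\refeq{eq:tunings}, and the worst-case bound $|A_t|\leq (9t)^{d/(d+2)}$ from Lemma~\ref{lem:upp-bound-num-nodes}. The crucial leverage is the \emph{probability-mass lower bound} $t \geq \mass(u) \geq 1/(9\,L^2(u))$, which the zoomed-in hypothesis forces via Lemma~\ref{lem:prob-mass}: this converts every ``$1/\sqrt{t}$'' or ``$1/(t\eta_t)$'' prefactor into something of order $L(u)\cdot\mathrm{polylog}(T,\DblC)$, thereby producing the overall $L(u)$ scaling claimed in~\refeq{eq:advgap-def}.

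The main technical obstacle is precisely this last bookkeeping step: since $\eta_t$ depends on $|A_t|$ through~\refeq{eq:tunings} and $|A_t|$ itself grows polynomially in $t$, one must verify that every residual constant and logarithm conspires so that the $\eta_t^{-1}\ln(\childprod(\ust_t)/(\childprod(u)\beta_t^2))/t$ contribution does not leak an unbounded additive term. The inequality $tL^2(u) \geq 1/9$ is what drives all such error terms to scale linearly in $L(u)$ up to $\sqrt{\ln(\DblC\cdot t)}$ factors; without it, the clean simplified form would fail.
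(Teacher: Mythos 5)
Your derivation of the raw bound~\eqref{eq:bef-gap} is correct and follows the paper's route exactly: you apply the lower-bound half of Lemma~\ref{lem:lite-conf-lemma} to $\ust_t$ and the upper-bound half to $u$ (valid because $u$ is zoomed-in at round $t$), subtract, invoke Lemma~\ref{lem:est-gap-zoomed-in}, and absorb residuals via $\eta_t \leq \beta_t$. Your constants ($5\ln T/\beta_t$, $8\sqrt{td}\ln T$) differ from the paper's ($\ln T/\beta_t$, $4\sqrt{td}\ln T$), but you flag this and it is immaterial for the asymptotic conclusion.

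The sketch of the simplified form~\eqref{eq:advgap-def}, however, has a genuine gap. You claim that $t\,L^2(u)\geq 1/9$ (equivalently $1/\sqrt{t}\leq 3L(u)$) is ``the crucial leverage'' that turns all residual prefactors into $L(u)\cdot\mathrm{polylog}$. That inequality only controls the $\sqrt{d/t}\cdot\ln T$ term. The dominant error after dividing by $t$ is, up to polylogs, $\sqrt{|A_t|\,d/t}$ (this is where $1/(t\eta_t)$ lands under the tuning~\refeq{eq:tunings}). Plugging in $|A_t|\leq(9t)^{d/(d+2)}$ gives $\sqrt{|A_t|/t}\leq 9^{d/(2(d+2))}\,t^{-1/(d+2)}$; for $d>0$ we have $t^{-1/(d+2)}>t^{-1/2}$, so the bound $L(u)\gtrsim t^{-1/2}$ cannot absorb it. Chasing through your chain of inequalities yields only $\gap_t(x)\lesssim L(u)^{2/(d+2)}\cdot\mathrm{polylog}$, not the required $L(u)\cdot\mathrm{polylog}$. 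The paper closes this step differently: it argues (via the ``adversarial activation'' construction in Lemma~\ref{lem:upp-bound-num-nodes}, where $\zeta=(9t)^{-1/(d+2)}$ is the smallest achievable diameter after $t$ rounds of probability mass) that $t^{-1/(d+2)}\leq L(u)$, a strictly stronger lower bound on $L(u)$ than the one you use. Without that step the simplified form does not follow from what you wrote.
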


{Before proving Lemma~\ref{lem:gap}, we upper bound $\childprod(u)$ in terms of the doubling constant $\DblC$.}

\begin{restatable}{lemma}{childprodbound}\label{lem:childprod-bound}
$\childprod(u) \leq \DblC^{\log T}.$ %
\end{restatable}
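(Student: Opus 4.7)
The plan is to bound each of the two quantities that make up $\childprod(u)$: the number of terms in the product, and the magnitude of each term. By the definition of $\childprod(u) = \prod_v |\c(v)|$, the product ranges over all ancestors $v$ of $u$ in the zooming tree, so the number of factors equals the depth of $u$ in the tree, which equals $h(u)$.

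First I would invoke Lemma~\ref{lem:height-node}, which already shows that $h(u) \leq \log T$ for every node $u$ that is ever activated. This immediately bounds the number of factors in the product by $\log T$.

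Next I would bound each factor $|\c(v)|$ by $\DblC$. In the algorithm as stated for the $d$-dimensional unit cube, each node has exactly $2^d = \DblC$ children. More generally, our partitioning scheme for arbitrary metric spaces (outlined in Appendix~\ref{sec:metrics}) is designed precisely so that a node of diameter $L$ is split into at most $\DblC$ children of diameter at most $L/2$; this is the defining property of the doubling constant. Hence $|\c(v)| \leq \DblC$ for every node $v$.

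Combining these two bounds gives
\[
\childprod(u) \;=\; \prod_{v \text{ ancestor of } u} |\c(v)| \;\leq\; \DblC^{\,h(u)} \;\leq\; \DblC^{\log T},
\]
which is the claim. There is no real obstacle here: the lemma is a direct consequence of (i) the bounded-branching property of the partitioning scheme and (ii) the height bound already established in Lemma~\ref{lem:height-node}. The only thing to be careful about is making sure we cite the correct notion of children $|\c(v)|$ for the general metric case rather than the specific $2^d$ used in the unit-cube exposition.
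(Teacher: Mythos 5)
Your proof is correct and follows exactly the same route as the paper: bound the number of factors by $h(u)\le\log T$ via Lemma~\ref{lem:height-node}, bound each factor $|\c(v)|$ by $\DblC$, and multiply. No differences worth noting.
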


\begin{proof}%
The maximum number of children that a node $u$ can activate is $|\c(u)| \leq \DblC$. As a result, and since the total number of ancestors that node $u$ has is $h(u)$ we have that: \[ \childprod(u) \leq \DblC^{h(u)} \leq \DblC^{\log T}\]
where the last inequality is due to the fact that $\forall v: h(v) \leq \log T$ (Lemma~\ref{lem:height-node}).
\end{proof}

\begin{proof}[Proof of Lemma~\ref{lem:gap}]
To prove this lemma, we apply the clean event $\algE$
(namely, Lemma~\ref{lem:lite-conf-lemma}) in Lemma~\ref{lem:est-gap-zoomed-in} and for notational convenience we use: $t = \tau_1(u)$.%
\begin{align*}
\sum_{\tau=1}^{t} g_\tau(\vxst_{[t]}) &- \sum_{\tau=1}^{t} g_\tau(x) \leq 9 t L(u) \ln T + 4\sqrt{t d }\ln T + \frac{\ln T}{\beta_{t}} + \frac{\ln \left(\frac{9 \childprod(\ust_t)}{\beta_{t}^2 \cdot \childprod(u)} \right)}{\eta_{t}} \numberthis{\label{eq:bef-gap}}
\end{align*}
where $x \in u$. %
This proves the first part of Lemma~\ref{lem:gap}. We move next to proving its simplified version.
Since $\childprod(u) > 1$ and $\childprod(u) \leq \DblC^{\log T}$ for all nodes $u$, the latter can be relaxed to:
\begin{align*}
\gap_{t}(x) \leq 9 L(u) \ln T + \frac{\ln T}{t\beta_{t}} + 4\sqrt{\frac{ d \cdot }{t}}\cdot \ln T + \frac{2 \ln T \cdot \ln \left(\DblC \cdot t \cdot |A_{t}| \right)}{t\eta_{t}}
\end{align*}
where we have used again the notation $t = \tau_1(u)$. Substituting the values for $\eta_{t}$ and $\beta_{t}$ from Theorem~\ref{thm:main-regr}  the latter becomes:
\begin{align*}
\gap_{t}(x) &\leq 9 L(u) \ln T + \frac{\ln (T) \cdot \sqrt{|A_t| \, d \, \ln(T)}}{\sqrt{2t\, \ln \left( |A_t| \, T^3\right) \cdot \ln \left(\DblC \cdot |A_t|\right)}} + 4\sqrt{\frac{d}{t}} \cdot \ln (T)\\
&\qquad \qquad \qquad + \frac{2 \ln (T) \sqrt{\ln \left( \DblC \cdot t \cdot |A_t| \right) \cdot|A_t| \cdot d \cdot \ln (T)}}{\sqrt{2 t \cdot \ln \left( |A_t| \cdot T^3 \right)}} \\
&\leq 9 L(u) \ln T + \frac{\ln (T) \sqrt{|A_t| \, d}}{\sqrt{2 t}} +\frac{\ln (T) \sqrt{d}}{\sqrt{t}} + \frac{2 \ln (T) \sqrt{|A_t| \cdot d \cdot \ln \left(\DblC \cdot t \cdot |A_t|  \right)}}{{\sqrt{t}}}\\
&\leq 9 L(u) \ln T + \frac{4 \ln(T)\sqrt{ d \cdot |A_t| \cdot\ln \left( \DblC \cdot t \cdot |A_{t}|\right)} }{\sqrt{t}} \numberthis{\label{eq:bef-num-arms}}
\end{align*}
where the second inequality comes from the fact that $|A_t| > 1$ and $\DblC > 1$.

We know from Lemma~\ref{lem:upp-bound-num-nodes} that at any round $t$, the number of active nodes is upper bounded as $|A_t| \leq (9t)^{\covdim/(\covdim+2)}$.
As a result, \refeq{eq:bef-num-arms} becomes:
\begin{align*}
\gap_{t}(x) &\leq 9 L(u) \ln T +\frac{4 \cdot 3^{\frac{\covdim}{\covdim+2}}\cdot \ln (T) \cdot \sqrt{d \cdot \frac{2\covdim +2}{\covdim+2} \cdot \ln \left(\DblC \cdot 9t\right)}}{t^{\frac{1}{\covdim+2}}}
\end{align*}
But, $t^{-\frac{1}{\covdim+2}}$ is the smallest possible diameter (i.e., $L(v)$) that an adversary could have been able to force our algorithm to construct, as we stated earlier for the chosen $\zeta$ of Lemma~\ref{lem:upp-bound-num-nodes}. In other words and using the fact that $(\covdim+1)/(\covdim+2) \leq 1$ we get:
\begin{equation*}
\gap_{t}(x) \leq 7 \cdot 3^{\frac{\covdim}{\covdim+2}} \cdot \ln(T) \cdot \sqrt{d \cdot \ln (9\DblC \cdot t)} \cdot L(u)
\end{equation*}
Or simply:
\begin{equation*}
\gap_{t}(x) \leq \calO \left( \ln (T) \cdot  \sqrt{d \cdot \ln (\DblC \cdot t)} \cdot L(u) \right)
\end{equation*}
where the $\calO(\cdot)$ notation hides constant terms.
\end{proof}

Using Lemma~\ref{lem:gap} we can derive the regret statement of \advzoom in terms of \advzoomdim (i.e., \refeq{eq:thm-ZoomDim}). %
We clarify that in order to achieve \refeq{eq:thm-ZoomDim} we will eventually have to prove a stricter bound on the number of active nodes.

We are now ready to state the regret guarantee of \advzoom using the notion of \advzoomdim. This corresponds to the derivation in \refeq{eq:thm-ZoomDim}.

\begin{lemma}
With probability at least $1-T^{-2}$, \advzoom incurs regret:
\begin{align*}
R(T) \leq 100 \cdot \frac{3z}{z+2} \cdot d^{\frac{1}{2}} \cdot T^{\frac{z+1}{z+2}}\cdot\left(\gamma \cdot \DblC \cdot 2^z \cdot \log T \right)^{\frac{1}{z+2}} \sqrt{2\ln\left(T^3 \cdot \gamma \cdot \DblC \cdot \log T \right)  \cdot \ln \left( \DblC \cdot T\right)}
\end{align*}
where $z = \advzoomdim$ and $d = \covdim$.
\end{lemma}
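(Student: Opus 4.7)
The approach is to combine the tuned regret bound from Lemma~\ref{lem:tuned}, namely
$R(T) \leq \tildeO\!\left(\sqrt{T \cdot |A_T|}\right)$ with explicit log factors, with an improved upper bound on $|A_T|$ that replaces the worst-case $\covdim$-based bound of Lemma~\ref{lem:upp-bound-num-nodes} by one based on $\advzoomdim$. The key insight is that every zoomed-in node has small adversarial gap (by Lemma~\ref{lem:gap}), so its representative belongs to the set $\arms_{L(u)}$ of inclusively $\eps$-optimal arms used to define $\advzoomdim$.

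First, I would partition active nodes by diameter scale. For each $k\in\{0,1,\dots,\log T\}$, let $\eps_k = 2^{-k}$, and let $\activeN(\eps_k)$ denote the number of zoomed-in nodes $u$ with $L(u) = \eps_k$ (recall $L(\cdot)\in\{2^{-k}:k\leq \log T\}$ by Lemma~\ref{lem:height-node}). By Lemma~\ref{lem:gap}, any such $u$ satisfies $\gap_{\tau_1(u)}(\repr(u)) \leq c\,\eps_k\,\ln(T)\sqrt{d\,\ln(\DblC\cdot T)}$ for an absolute constant $c\leq 30$, and by \refeq{eq:sketch-deactivation-time} we have $\tau_1(u)\geq 1/L(u) \geq 1/(9\eps_k^{-2}/9) = \eps_k^{-2}/9$ trivially. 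So $\repr(u)$ is inclusively $\eps_k$-optimal per \refeq{eq:eps-optmal-defn}, i.e.\ $\repr(u)\in\arms_{\eps_k}$. By the definition of \advzoomdim (with multiplier $\gamma$), $\arms_{\eps_k}$ can be covered by $\gamma\cdot \eps_k^{-z}$ sets of diameter at most $\eps_k$, and since the representatives of distinct nodes of diameter $\eps_k$ lie in disjoint hypercubes, we obtain $\activeN(\eps_k) \leq \gamma\cdot \eps_k^{-z}$.

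Next, I would run an ``adversarial activation'' argument as in the proof of Lemma~\ref{lem:upp-bound-num-nodes}, but using this refined count. Each zoomed-in node $u$ consumes probability mass $\mass(u)\geq 1/(9L^2(u))$ by Lemma~\ref{lem:prob-mass}, and the total mass available over $T$ rounds is $T$. Letting $\zeta$ be the smallest activated diameter, the ``adversary'' would prefer to activate as many small-diameter nodes as possible, so in the worst case all scales $\eps_k \geq \zeta$ are saturated:
\begin{align*}
T \;\geq\; \sum_{k:\,\eps_k\geq \zeta} \activeN(\eps_k)\cdot \frac{1}{9\eps_k^2}
\;\geq\; \frac{\gamma}{9}\cdot \zeta^{-(z+2)},
\end{align*}
giving $\zeta \geq (\gamma/(9T))^{1/(z+2)}$. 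Each active non-zoomed-in node is a child of a zoomed-in node, so $|A_T| \leq \DblC\sum_{k:\,\eps_k\geq \zeta}\activeN(\eps_k) \leq \DblC\cdot\gamma\cdot 2^z\log(T)\cdot \zeta^{-z}$ after summing the geometric series $\sum \eps_k^{-z}\leq 2^z\log(T)\cdot \zeta^{-z}$. Substituting the bound on $\zeta$ yields
\begin{align*}
|A_T| \;\leq\; \gamma\,\DblC\, 2^{z}\log(T)\cdot \left(\frac{9T}{\gamma}\right)^{z/(z+2)} \;=\; 9^{z/(z+2)}\cdot (\gamma\,\DblC\, 2^{z}\log T)^{2/(z+2)}\cdot T^{z/(z+2)}.
\end{align*}

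Finally, plugging this into Lemma~\ref{lem:tuned}, whose bound is $100\,\ln^2(T)\sqrt{d\cdot|A_T|\cdot T\cdot \ln(|A_T|\cdot T^3)\,\ln(\DblC\cdot |A_T|)}$, yields a bound of the form
$\tildeO\!\left(T^{(z+1)/(z+2)}\cdot (\gamma\,\DblC\, 2^z\log T)^{1/(z+2)}\cdot d^{1/2}\right)$
with the matching polylogarithmic factor, giving exactly the stated inequality. The main technical obstacle will be carefully tracking the constants and logs through the sum over scales $k$ so that the $9^{z/(z+2)}$ factor, the $(2^z\log T)^{1/(z+2)}$ factor, and the two square-root log terms line up with the displayed expression; all the underlying probabilistic content has already been established (the clean events $\rewE,\algE$ jointly hold with probability $1-T^{-2}$), so no new concentration arguments are required.
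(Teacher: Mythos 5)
Your proposal follows essentially the same approach as the paper's proof: use Lemma~\ref{lem:gap} to conclude that each zoomed-in node's representative is inclusively $\eps$-optimal, invoke the covering bound from the definition of $\advzoomdim$ to bound the number of such nodes per scale, run the ``adversarial activation'' argument with the probability-mass bound of Lemma~\ref{lem:prob-mass} to pin down the smallest achievable scale, bound $|A_T|$, and plug into Lemma~\ref{lem:tuned}. The only substantive slip is the justification that $\tau_1(u)$ exceeds the end-time threshold $\eps_k^{-2}/9$ in \refeq{eq:eps-optmal-defn}. You cite \refeq{eq:sketch-deactivation-time}, which gives $\tau_1(u)\geq 1/L(u)=\eps_k^{-1}$, but $\eps_k^{-1}$ is generally much smaller than $\eps_k^{-2}/9$, so this does not suffice (and the chain of (in)equalities you write afterwards, $1/L(u)\geq 1/(9\eps_k^{-2}/9)=\eps_k^{-2}/9$, does not parse). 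The correct source is Lemma~\ref{lem:prob-mass}: since $\mass(u)=\sum_{\tau=\tau_0(u)}^{\tau_1(u)}\pi_\tau(u)\leq \tau_1(u)$ and $\mass(u)\geq 1/(9L^2(u))$, we get $\tau_1(u)\geq \eps_k^{-2}/9$. This is exactly the route the paper takes. A secondary, cosmetic difference: you extract the $2^z$ factor from the geometric series $\sum_k\eps_k^{-z}$, whereas the paper extracts it from covering $\arms_{\eps_i}$ by sets of diameter $\eps_i/2$ (so that distinct node centers land in distinct covering sets) and bounds the geometric series by $\S\,\eps_\S^{-z}$ with $\S\leq\log T$; both routes yield the stated $\left(\gamma\,\DblC\,2^z\log T\right)^{2/(z+2)}$ in the bound on $|A_T|$, so this is a matter of bookkeeping rather than substance.
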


\begin{proof}
Starting from $L(\root) = 1$ every time that a zoom-in happens on \advzoom, the diameter of the interval gets halved. We call this process an increase in \emph{scale}. Let $\S$ be the total number of scales from node $\root$ to the smallest created node after $T$ rounds. $\S$ depends on the problem instance. Let $Z(\eps_i)$ (where $i \in [\S]$) denote the number of nodes $u$ with diameter $L(u) \geq \eps_i$ with gap $\gap_\rho(x) \leq \calO (\eps_i \cdot \ln (T) \cdot \sqrt{d \cdot \ln \left(\DblC \cdot \rho \right)} )$ for $x \in \reprset$ such that $x \in u$ at some round $\rho = c /\eps_i^2$, where $c>0$ is a constant.

In order for a node $u$ to get zoomed-in, a probability mass of $\mass(u) \geq 1/9L^2(u)$ is required. Since $\mass(u) = \sum_{\tau = \tau_0(u)}^{\tau_1(u)} \pi_\tau(u) \leq \tau_1(u) - \tau_0(u) + 1 \leq \tau_1(u)$, then this means that for the de-activation time of node $u$ it holds that $\tau_1(u) \geq 1/9L^2(u)$. We choose $\eps_i$ and $c$ in a way that $\rho = \tau_1(u)$ for the maximum number of nodes. Then, all of these nodes belong in the set $\Z(\eps_i)$ and all of them get zoomed-in. When a node $u$ gets zoomed-in at most $|\c(u)| \leq \DblC$ children-nodes get activated. Inductively, after $T$ rounds and given that there is a total probability mass of $T$, we have the following:
\begin{equation}\label{eq:zoom-dim-tune}
\DblC \cdot \Z(\eps_0) \cdot \frac{1}{9 \epsilon_0^2} + \DblC \cdot \Z(\eps_1) \cdot \frac{1}{9 \eps_1^2} + \cdots + \DblC \cdot \Z(\eps_\S) \cdot \frac{1}{9 \eps_\S^2} = T \Leftrightarrow \frac{\DblC}{9} \sum_{i \in [\S]} \frac{Z(\eps_i)}{\eps_i^2} = T
\end{equation}

On the other hand, the number of active nodes after $T$ rounds is at most the number of zoomed in nodes $u$ at each scale, multiplied by $|\c(u)| \leq \DblC$.
\begin{equation}\label{eq:num-nodes-zoom}
|A_T| \leq  \DblC \sum_{i \in [\S]} Z(\eps_i)
\end{equation}
We next cover each $Z(\eps_i)$ with sets of diameter $\eps_i/2$ (in order to guarantee that each center of the nodes belongs in only one set). Using the definition of the zooming dimension, \refeq{eq:num-nodes-zoom} becomes:
\begin{align*}
T &= \frac{\DblC}{9} \sum_{i \in [\S]} \gamma \cdot \frac{1}{2}^{-z} \frac{\eps_i^{-z}}{\eps_i^2} = \frac{\gamma \cdot \DblC \cdot 2^{z}}{9} \sum_{i \in [\S]} \eps_i^{-z-2} \\
&= \frac{\gamma \cdot \DblC \cdot 2^z}{9} \sum_{i \in [\S]} 2^{-(\S - i)(z+2)} \eps_{\S}^{-z-2}  &\tag{$\eps_i = 2\eps_{t+1}$} \\
&\leq \eps_{\S}^{-z-2} \cdot \frac{\gamma \cdot \DblC \cdot 2^z}{9}\cdot \S
\end{align*}
where $z = \advzoomdim$ and $\gamma$ is the chosen constant multiplier from the definition of the zooming dimension. As a result, re-arranging, the latter inequality becomes:
\begin{equation}\label{eq:def-eps}
\eps_{\S} = \left( \frac{9T}{\gamma \cdot \DblC \cdot \S \cdot 2^z}  \right)^{-\frac{1}{z+2}}
\end{equation}
From \refeq{eq:num-nodes-zoom} and plugging in the zooming dimension definition we have that:
\begin{align*}
|A_T| &= \gamma \cdot \DblC \cdot 2^z \sum_{i \in [\S]} \eps_i^{-z} \leq \gamma \cdot \DblC \cdot 2^z  \cdot \S \cdot \eps_{\S}^{-z} \\
&\leq \left(\gamma \cdot \DblC \cdot 2^z \cdot \S \right)^{\frac{2}{z+2}} \cdot \left( 9T \right)^{\frac{z}{z+2}}\numberthis{\label{eq:nodes-zoom}}
\end{align*}
Since $\S \leq \log T$ and using this bound together with Lemma~\ref{lem:tuned} we have that with probability at least $1-T^{-2}$ we get the result.
\end{proof}

\section{\advzoomdim Examples: Proof of Theorem~\ref{thm:example}}
\label{app:examples-proof}
Fix $\eps>0$. Let us argue about arms that are inclusively $\eps$-optimal in the combined instance. Recall that such arms satisfy $\gap_t(\cdot)\leq \calO(\eps \cdot {\ln} (T) {\cdot \sqrt{d \cdot  \ln(\DblC \cdot T) \cdot \ln \left(T \cdot |\reprset| \right)}})$ for some round $t \geq \eps^{-2}/9$, and we need to cover them with sets of diameter at most $\eps$. So, fix some round $t \geq \eps^{-2}/9$.

{We now define the \emph{induced adversarial instance} at round $t$ which is comprised by the \emph{realized} rewards example described above. Let $I_\tau$ denote the stochastic instance that we face at round $\tau \leq t$. In the induced adversarial instance the total reward of an arm $x \in \calA$ after $t$ rounds is: $G(x) = \sum_{\tau \in [t]} g_{I_\tau}(x)$, where for all $i \in [M]$, $g_i(x)$ is drawn from distribution with mean $\mu_i(x)$.

Let $f_i$ denote the empirical frequency of instance $\calI_i$ up to time $t$. Then, the mean reward for an arm $x \in \calA$ in the induced adversarial instance takes the following form $\mu(x) = \sum_{i \in [M]} f_i \cdot \mu_i(x)$, where $\mu_i(\cdot)$ is the mean reward function for instance $\calI_i$. For the family of examples we consider, this translates to arms $x \in S_i$ for some $i \in [M]$ having mean reward \[ \mu (x) = \sum_{\substack{j \in [M];\\j\neq i}} f_j \cdot b_j + f_i \cdot \mu_i(x),\]and arms $y \in \calA \setminus \cup_{i \in [M]} S_i$ having mean reward $\mu(y) = \sum_{j \in [M]} f_j \cdot b_j$.

}

The next lemma relates the stochastic gap of the induced adversarial instance (denoted by $\gapIID(x)$) with the stochastic gap of one of the instances $\{\calI\}_{i \in [M]}$ (denoted by $\gapIID_i(x)$). To be more specific, let $\xst_j$ denote the optimal arm for instance $\calI_j$, and $\bxst$ the mean-optimal arm in the induced stochastic instance (\ie $\bxst = \arg\max_{x \in \calA} \mu(x)$. Then, $\gapIID_i(x) = \mu_i(\xst_i) - \mu_i(x)$, but $\gap(x) = \mu(\bxst) - \mu(x)$.

\begin{lemma}\label{lem:key-lemma}
If arm $x$ has $\gapIID(x)\leq \eps$ in the induced {adversarial} instance, then
    $\gapIID_{i}(x)\leq \calO(\eps)$
in some instance $\calI_i$.
\end{lemma}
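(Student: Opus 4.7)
I would case-split on the location of the arm $x$ among the disjoint subsets $S_1,\ldots,S_M$ and their complement. The basic tool is the closed-form expression $\mu(y)=\sum_{i\in[M]} f_i b_i + f_j(\mu_j(y)-b_j)$ whenever $y\in S_j$, and $\mu(y)\leq\sum_i f_i b_i$ whenever $y\notin\bigcup_j S_j$. Both follow directly from the structural assumption that each $\calI_i$ assigns the baseline $b_i$ on $\bigcup_{j\neq i}S_j$ and at most $b_i$ outside $\bigcup_j S_j$.

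\textbf{Locating $\bxst$.} The formulas above show that the $\mu$-optimal arm lies in some $S_k$ with $k\in\arg\max_j f_j(\must_j-b_j)$, yielding $\mu(\bxst)=\sum_i f_i b_i+f_k(\must_k-b_k)$. Since $b_j\leq b'_j$ and $\must_j-b'_j\geq 1/3$, we also have $\must_j-b_j\geq 1/3$ for every $j$. Averaging across instances, $\sum_j f_j(\must_j-b_j)\geq 1/3$, so the maximum satisfies $f_k(\must_k-b_k)\geq 1/(3M)$, and in particular $f_k\geq 1/(3M)$.

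\textbf{Case analysis.} If $x\in S_k$, then $\gapIID(x)=f_k(\must_k-\mu_k(x))=f_k\cdot\gapIID_k(x)$, so $\gapIID_k(x)\leq\eps/f_k\leq 3M\eps$. If $x\in S_i$ for some $i\neq k$, unpacking the hypothesis $\gapIID(x)\leq\eps$ gives
\[
f_k(\must_k-b_k)-f_i(\mu_i(x)-b_i)\leq\eps.
\]
The maximality of $k$ yields $f_k(\must_k-b_k)\geq f_i(\must_i-b_i)$, hence $f_i\cdot\gapIID_i(x)=f_i(\must_i-\mu_i(x))\leq\eps$, i.e., $\gapIID_i(x)\leq\eps/f_i$. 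To convert this into an $O(\eps)$ bound I use $\mu_i(x)-b_i\leq 1$ together with the same hypothesis to force $f_i\geq f_k(\must_k-b_k)-\eps\geq 1/(3M)-\eps$; for $\eps\leq 1/(6M)$ this gives $f_i\geq 1/(6M)$ and hence $\gapIID_i(x)\leq 6M\eps$, while for $\eps>1/(6M)$ the trivial bound $\gapIID_i(x)\leq 1\leq 6M\eps$ already suffices. Finally, if $x\notin\bigcup_j S_j$, then $\mu(x)\leq\sum_i f_i b_i$ forces $\gapIID(x)\geq f_k(\must_k-b_k)\geq 1/(3M)$, so the hypothesis $\gapIID(x)\leq\eps$ already requires $\eps\geq 1/(3M)$, and the trivial bound $\gapIID_j(x)\leq 1\leq 3M\eps$ closes the argument for any~$j$.

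\textbf{Main obstacle.} The delicate case is $x\in S_i$ with $i\neq k$: one must simultaneously argue that $f_i$ is bounded below (so that dividing by $f_i$ preserves the $O(\eps)$ scaling) and that the stochastic gap of $x$ under $\calI_i$ is small. Maximality of $k$ supplies the latter, and combining $\mu_i(x)-b_i\leq 1$ with the small-gap hypothesis supplies the former; absorbing the resulting factor of $M$ into the $O(\cdot)$ is permissible because $M$ is treated as a constant in Theorem~\ref{thm:example}.
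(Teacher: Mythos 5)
Your proof is correct and follows essentially the same route as the paper: locate $\bxst$ among the per-instance peaks $\{\xst_j\}$, lower-bound the relevant frequency by $\nicefrac{1}{3M}$, and case-split on where $x$ lives to divide out the frequency. The small technical differences — deriving the frequency bound by averaging $\sum_j f_j(\must_j - b_j) \geq \nicefrac{1}{3}$ rather than via the max-frequency argument in Claim~\ref{claim:freq-ist}, and handling the $\eps = \Omega(\nicefrac{1}{M})$ regime explicitly with the trivial bound rather than implicitly restricting to small $\eps$ as the paper does — are cosmetic and if anything make the argument a bit more self-contained.
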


In order to prove Lemma~\ref{lem:key-lemma}, we need some auxiliary claims, which we prove below. We prove first that the mean-optimal arm is among the peaks of the stochastic instances.

\begin{claim}\label{claim:mean-opt-arm}
The mean-optimal arm of the induced stochastic instance $\bxst$ belongs in $\{\xst_i\}_{i \in [M]}$.
\end{claim}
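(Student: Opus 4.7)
The plan is to do a straightforward three-way case analysis on where an arm lives: inside some $S_i$, or outside $\cup_{j\in[M]} S_j$. For arms in $S_i$ we show the best one is $\xst_i$; for arms outside all $S_j$'s we show they are dominated; then $\bxst$ must equal some $\xst_i$.

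First, I will fix $i \in [M]$ and restrict attention to arms $x \in S_i$. Since the subsets $S_1 \LDOTS S_M$ are disjoint, any $x \in S_i$ lies in $\cup_{k\neq j} S_k$ for each $j\neq i$, so by the first assumption of Theorem~\ref{thm:example} instance $\calI_j$ assigns the baseline mean reward $b_j$ to $x$. Hence, restricted to $S_i$,
\[
\mu(x) = \sum_{j\in [M]} f_j\,\mu_j(x) = \sum_{j\neq i} f_j\, b_j + f_i\,\mu_i(x),
\]
which is clearly maximized (over $x\in S_i$) at $x=\xst_i$. So the best arm of the induced instance \emph{inside $S_i$} is $\xst_i$.

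Next, for any arm $y \in \arms \setminus \cup_{j\in[M]} S_j$ the same assumption gives $\mu_j(y) \leq b_j$ for each $j\in[M]$, and therefore $\mu(y) \leq \sum_{j} f_j\, b_j$. On the other hand, using the spread assumption $\mu_i(\xst_i) - b_i \geq \nicefrac{1}{3}$,
\[
\mu(\xst_i) - \sum_{j} f_j\, b_j
\;=\; f_i\bigl(\mu_i(\xst_i) - b_i\bigr)
\;\geq\; f_i/3.
\]
Choosing any $i$ with $f_i > 0$ (which must exist since $\sum_i f_i = 1$), we conclude $\mu(\xst_i) > \mu(y)$ for every $y$ outside all the $S_j$'s. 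Hence no such $y$ can be the mean-optimal arm $\bxst$, so $\bxst \in \cup_{j} S_j$, say $\bxst \in S_{i^\star}$, and by the first step $\bxst = \xst_{i^\star}$.

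There is no substantive obstacle here; the only thing to be slightly careful about is the degenerate case $f_i = 0$ for some $i$, which is handled by picking an index with $f_i > 0$ in the final comparison. Once this claim is in hand, it can be combined with Proposition~\ref{prop:application-azuma} and the spread/baseline structure to establish Lemma~\ref{lem:key-lemma}, and then Theorem~\ref{thm:example} follows by covering the inclusively $\eps$-optimal arms in the combined instance via the zooming-dimension covers of the individual stochastic instances $\calI_i$.
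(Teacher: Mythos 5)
Your proof is correct and takes essentially the same approach as the paper: both decompose arms by whether they lie in some $S_i$ or outside all of them, use the additive expansion $\mu(x)=\sum_j f_j\mu_j(x)$, and invoke the spread assumption together with a positive-frequency index to dominate arms outside $\cup_j S_j$. The only difference is cosmetic — you argue directly while the paper argues by contradiction — and you spell out the ``$y\in S_i$ but $y\neq\xst_i$'' case that the paper's proof leaves implicit.
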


\begin{proof}
We prove this claim by contradiction; assume that there exists arm $y \notin \{\xst_i\}_{i \in [M]}$ and $\mu(y) > \mu(\xst_i), \forall i \in [M]$. If $y$ does not belong in any of the $S_i$'s, then its total mean-payoff at round $t$ is \emph{at most} $\mu(y) = \sum_{i \in [M]} f_i \cdot b_i$. Pick any peak point $y' \in \{\xst_i\}_{i \in [M]}$ at random such that $y' \in S_j$ where instance $\calI_j$ is such that $f_j > 0$. Then, by our setting's assumptions:
\[
\mu(y) = \sum_{i \in [M]} f_i \cdot b_i < \sum_{i \in [M]; i \neq j} f_i \cdot b_i + f_j \cdot \must_j = \mu(y')
\]
which is a contradiction to the fact that $y$ is the mean-optimal arm.
\end{proof}
{Note here that we can only make this claim because the sets $\{S_i\}_{i \in [M]}$ are disjoint; otherwise, we would not be able to claim that fixing point $y'$ instead of $y$ only changes the reward received for the rounds where instance $\calI_j$ appears.}

We next state a useful property regarding the empirical frequency for instance $\calI_{\ist}$ (\ie the instance for which $\bxst \in S_{\ist}$).

\begin{claim}\label{claim:freq-ist}
The empirical frequency of instance $\calI_{\ist}$ is $f_{\ist} \geq \nicefrac{1}{3M}$.
\end{claim}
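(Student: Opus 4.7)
The plan is to leverage Claim~\ref{claim:mean-opt-arm}, which guarantees that $\bxst$ coincides with the peak $\xst_{\ist}$ of some instance $\calI_{\ist}$, and then exploit the optimality of $\bxst$ by comparing its expected reward in the induced instance against that of every competing peak $\xst_j$, $j\in[M]$.

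First, I will write out the mean reward in the induced adversarial instance using the decomposition $\mu(x)=\sum_{i\in[M]} f_i\,\mu_i(x)$. Using the assumption on the structure of each $\calI_i$ (baseline $b_i$ outside $S_i$, and mean rewards at most $b_i$ outside $\cup_j S_j$), together with the disjointness of the subsets $S_1\LDOTS S_M$, the mean reward of the peak $\xst_j \in S_j$ simplifies cleanly to
\[
\mu(\xst_j) = \sum_{i\in[M]} f_i\, b_i + f_j\,(\must_j - b_j).
\]
Applying this both for $j = \ist$ and for an arbitrary $j \in [M]$, the inequality $\mu(\bxst) = \mu(\xst_{\ist}) \geq \mu(\xst_j)$ collapses the common $\sum_i f_i b_i$ term and yields
\[
f_{\ist}\,(\must_{\ist} - b_{\ist}) \;\geq\; f_j\,(\must_j - b_j) \qquad \forall j\in[M].
\]

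Next, I will invoke the two quantitative assumptions of Theorem~\ref{thm:example}: rewards live in $[0,1]$ (so $\must_{\ist}-b_{\ist} \leq 1$) and each spread satisfies $\must_j - b_j \geq \nicefrac{1}{3}$. Substituting these bounds into the display above gives $f_{\ist} \geq f_j / 3$, i.e.\ $f_j \leq 3f_{\ist}$, for every $j\in[M]$. Summing over $j$ and using $\sum_{j\in[M]} f_j = 1$ then produces $1 \leq 3M\, f_{\ist}$, which is exactly the claimed bound $f_{\ist} \geq \nicefrac{1}{3M}$.

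The argument is short once the correct mean-reward formula is in hand; the only subtle point, and the place where one must be careful, is writing down $\mu(\xst_j)$ correctly. This is where disjointness of the $S_i$'s is essential: it ensures that $\xst_j \in S_j$ receives the baseline $b_i$ (not $\must_i$) under every other instance $\calI_i$, so that the $\sum_i f_i b_i$ term is truly common across $j$ and cancels. If the $S_i$'s overlapped, the cancellation would fail and the linear bound $f_{\ist} \geq f_j/3$ could no longer be extracted pointwise.
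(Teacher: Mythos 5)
Your proof is correct and rests on the same ingredients as the paper's: Claim~\ref{claim:mean-opt-arm}, the optimality comparison $\mu(\xst_{\ist}) \geq \mu(\xst_j)$ after cancelling the common $\sum_i f_i b_i$ term, and the bounds $\nicefrac{1}{3} \leq \must_j - b_j \leq 1$. The only difference is cosmetic: the paper singles out the maximum-frequency index $\phist$, uses the pigeonhole fact $f_{\phist} \geq \nicefrac{1}{M}$, and applies a single comparison $f_{\ist} \geq f_{\phist}/3$ (with an implicit case split on whether $\phist = \ist$), whereas you derive $f_{\ist} \geq f_j/3$ for every $j$ and sum over $j$ — a marginally cleaner aggregation that reaches the same bound.
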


\begin{proof}
Let $\phist$ the index of the instance with the maximum empirical frequency, i.e., $f_{\phist} = \max_{i \in [M]} f_i$. Then, it holds that $f_{\phist} \geq \nicefrac{1}{M}$. Indeed, notice that if this is not the case: $\sum_{i \in [M]} f_i \leq M \cdot f_{\phist} < 1$, which is a contradiction with the fact that this sum is equal to $1$.

Assume next that the instance with the maximum frequency, $\phist$, is different than the instance $\ist$ for which it holds that $\bxst \in S_{\ist}$. Then it holds that:
\[
\mu(\bxst) \geq \mu(\xst_{\phist}) \Leftrightarrow \sum_{\substack{j \in [M];\\ j \neq \ist}} f_j \cdot b_j + f_{\ist} \cdot \must_{\ist} \geq \sum_{\substack{j \in [M];\\ j \neq \phist}} f_j \cdot b_j + f_{\phist} \cdot \must_{\phist}
\]
where the first inequality is by the definition of $\bxst$ being the mean-optimal arm. Rearranging the above, we get:
\begin{equation}\label{eq:bef-sub-f}
f_{\ist} \cdot (\must_{\ist} - b_{\ist}) \geq f_{\phist}\cdot (\must_{\phist} - b_{\phist}) \Leftrightarrow f_{\ist} \geq \frac{\must_{\phist} - b_{\phist}}{\must_{\ist} - b_{\ist}}
\end{equation}
where for the division we use the fact that $\must_i - b_i \geq \nicefrac{1}{3}$ for all $i \in [M]$. Using this, together with the fact that $\mu_i - b_i \leq 1$, \refeq{eq:bef-sub-f} becomes: $f_{\ist} \geq f_{\phist}/3$. Substituting $f_{\phist} \geq 1/M$ we get the result.
\end{proof}
{We proceed by characterizing the arms that \emph{cannot} be $\eps$-optimal in terms of their mean rewards in the induced instance; these arms $x$ will have $\gapIID(x) > \Omega(1)$.}

\begin{claim}\label{claim:const-gap-irrelevant}
Arms $x \in \calA \setminus \cup_{i \in [M]} S_i$ have $\gapIID(x)\geq c$, where $c = \Omega(\nicefrac{1}{9M})$ is a constant.
\end{claim}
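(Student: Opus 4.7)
The idea is to compute explicit expressions for both $\mu(\bxst)$ and $\mu(y)$ when $y \in \calA \setminus \cup_{i \in [M]} S_i$, and show their difference is bounded below by a constant depending on $M$.

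First, I would upper bound $\mu(y)$ for an arbitrary $y \in \calA \setminus \cup_{i \in [M]} S_i$. By construction, in every stochastic instance $\calI_j$ the mean reward of such an arm satisfies $\mu_j(y) \leq b_j$, so
\[
\mu(y) = \sum_{j \in [M]} f_j \cdot \mu_j(y) \leq \sum_{j \in [M]} f_j \cdot b_j.
\]

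Next, I would lower bound $\mu(\bxst)$. By Claim~\ref{claim:mean-opt-arm} we know $\bxst \in \{\xst_i\}_{i \in [M]}$, so let $\ist$ be the index with $\bxst \in S_{\ist}$. Since arms in $S_{\ist}$ receive baseline reward $b_j$ in every instance $\calI_j$ with $j \neq \ist$, and $\xst_{\ist}$ achieves the peak value $\must_{\ist}$ in $\calI_{\ist}$,
\[
\mu(\bxst) = \sum_{\substack{j \in [M] \\ j \neq \ist}} f_j \cdot b_j + f_{\ist} \cdot \must_{\ist}.
\]
Subtracting the two displays gives the clean bound $\gapIID(y) \geq f_{\ist}(\must_{\ist} - b_{\ist})$.

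Finally, I would combine the two ingredients we already have at hand: Claim~\ref{claim:freq-ist} yields $f_{\ist} \geq \nicefrac{1}{3M}$, and the spread assumption in Theorem~\ref{thm:example} gives $\must_{\ist} - b_{\ist} \geq \nicefrac{1}{3}$. Multiplying yields $\gapIID(y) \geq \nicefrac{1}{9M}$, as claimed. There is no real obstacle here beyond correctly invoking the disjointness of the $S_j$'s (which ensures the ``cross terms'' $\sum_{j \neq \ist} f_j b_j$ appear identically in both expressions and cancel exactly); this is the same disjointness assumption already exploited in the proof of Claim~\ref{claim:mean-opt-arm}.
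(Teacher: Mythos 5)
Your proposal is correct and matches the paper's proof step for step: both decompose $\mu(\bxst)$ and $\mu(y)$ in terms of the $f_j b_j$ cross terms (using Claim~\ref{claim:mean-opt-arm} to identify $\bxst = \xst_{\ist}$), cancel them to isolate $f_{\ist}(\must_{\ist} - b_{\ist})$, and conclude via Claim~\ref{claim:freq-ist} and the spread assumption.
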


\begin{proof}
We start by the definition of $\gapIID(x)$:
\begin{align*}
\gapIID(x) &= \mu(\bxst) - \mu(x) \\
            &\geq \sum_{\substack{j \in [M];\\ j \neq \ist}} f_j \cdot b_j + f_{\ist} \cdot \must_{\ist} - \sum_{j \in [M]}f_j \cdot b_j \tag{$x \in \calA \setminus \cup_{i \in [M]} S_i$} \\
            &= f_{\ist} \cdot (\must_{\ist} - b_{\ist}) \\
            &\geq f_{\ist} \cdot \frac{1}{3} \tag{by assumption: $\must_i - b_i \geq \nicefrac{1}{3}$} \\
            &\geq \frac{1}{3M} \cdot \frac{1}{3} \tag{Claim~\ref{claim:freq-ist}}
\end{align*}
This concludes our proof.
\end{proof}

{Arms $x \in \calA \setminus \cup_{i \in [M]} S_i$ are not the only ones for which $\gapIID(x) > \Omega(1)$, as shown next.}

\begin{claim}\label{claim:irrelevant-small-freq-gap}
Arms $x \in S_i$ with $i \in [M]: f_i \leq \nicefrac{1}{18M}$ have $\gapIID(x) \geq c'$ , where $c' = \Omega(\nicefrac{1}{18M})$ is a constant.
\end{claim}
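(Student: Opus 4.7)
The plan is to mirror the proof of Claim~\ref{claim:const-gap-irrelevant} but keep track of the contribution coming from the instance $\calI_i$ itself. Fix $x \in S_i$ with $f_i \leq \tfrac{1}{18M}$. By Claim~\ref{claim:mean-opt-arm} the mean-optimal arm is $\bxst = \xst_{\ist} \in S_{\ist}$ for some $\ist \in [M]$, and by Claim~\ref{claim:freq-ist} it satisfies $f_{\ist} \geq \tfrac{1}{3M}$. Since $f_i \leq \tfrac{1}{18M} < \tfrac{1}{3M} \leq f_{\ist}$, we must have $\ist \neq i$; in particular, the sets $S_i$ and $S_{\ist}$ are disjoint.

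Next, I would write out the means in the induced instance using the assumed structure. Because $\bxst \in S_{\ist}$ and the $S_j$ are disjoint, $\mu_j(\bxst) = b_j$ for every $j \neq \ist$, while $\mu_{\ist}(\bxst) = \must_{\ist}$. Similarly, since $x \in S_i$ with $i \neq j$ for $j \neq i$, we have $\mu_j(x) = b_j$, and $\mu_i(x) \leq \must_i$. Therefore
\begin{align*}
\mu(\bxst) &= \sum_{j \neq \ist} f_j\, b_j + f_{\ist}\, \must_{\ist},
\\
\mu(x) &\leq \sum_{j \neq i} f_j\, b_j + f_i\, \must_i.
\end{align*}

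Subtracting, the common terms $\sum_{j \neq i, \ist} f_j\, b_j$ cancel and we get
\begin{align*}
\gapIID(x) = \mu(\bxst) - \mu(x)
&\geq f_{\ist}(\must_{\ist} - b_{\ist}) - f_i(\must_i - b_i).
\end{align*}
Using the spread assumption $\must_{\ist} - b_{\ist} \geq \tfrac{1}{3}$ and the trivial upper bound $\must_i - b_i \leq 1$ (since all rewards lie in $[0,1]$), together with $f_{\ist} \geq \tfrac{1}{3M}$ and $f_i \leq \tfrac{1}{18M}$, this yields
\[
\gapIID(x) \;\geq\; \frac{1}{3M} \cdot \frac{1}{3} - \frac{1}{18M} \cdot 1 \;=\; \frac{1}{9M} - \frac{1}{18M} \;=\; \frac{1}{18M},
\]
which is the desired constant lower bound $c' = \Omega(\tfrac{1}{18M})$.

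There is no real obstacle here once the right case split is made; the only subtle point is ruling out $\ist = i$ so that the disjointness of $S_i$ and $S_{\ist}$ can be invoked to identify the per-instance contributions of $\bxst$ and $x$ correctly. Everything else is a direct algebraic manipulation using the problem's baseline-reward assumption and the already established frequency bound for $\ist$.
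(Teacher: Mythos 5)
Your proof is correct and follows essentially the same line as the paper's: expand $\gapIID(x) = \mu(\bxst) - \mu(x)$ using the baseline structure, cancel the common terms, and bound $f_{\ist}(\must_{\ist} - b_{\ist}) - f_i(\must_i - b_i)$ using the spread and frequency bounds. The one thing you do that the paper leaves implicit is explicitly verifying $\ist \neq i$ (via $f_i \leq \nicefrac{1}{18M} < \nicefrac{1}{3M} \leq f_{\ist}$); this is needed for the algebraic cancellation to produce a nonzero bound, so making it explicit is a small but genuine improvement in rigor over the published argument.
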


\begin{proof}
We start again by the definition of $\gapIID(x)$:
\begin{align*}
\gapIID(x)   &= \mu (\bxst) - \mu (x) \geq \mu (\bxst) - \mu (\xst_i) &\tag{$\mu(x) \leq \mu(\xst_i), \forall x \in S_i$}\\
                    &= \sum_{\substack{j \in [M];\\ j \neq \ist}} f_j \cdot b_j + f_{\ist} \cdot \must_{\ist} - \sum_{\substack{j \in [M];\\ j \neq i}} f_j \cdot b_j - f_i \cdot \must_i \\
                    &= f_{\ist} \cdot (\must_{\ist} - b_{\ist}) - f_i \cdot (\must_i - b_i) \\
                    &\geq \frac{1}{3M} \cdot \frac{1}{3} - f_i \cdot (\must_i - b_i)\tag{Claim~\ref{claim:freq-ist} and $\must_i - b_i \geq \nicefrac{1}{3}$}\\
                    &\geq \frac{1}{9M} - \frac{1}{18 M}\cdot 1 \tag{$\must_i - b_i \leq 1$ and $f_i \leq \nicefrac{1}{18M}$} \\
                    &= \frac{1}{18M}
\end{align*}
This concludes our proof.
\end{proof}
{We are now ready to prove Lemma~\ref{lem:key-lemma}.}

\begin{proof}[Proof of Lemma~\ref{lem:key-lemma}]
Due to Claims~\ref{claim:const-gap-irrelevant} and~\ref{claim:irrelevant-small-freq-gap} for $\gapIID(x) \leq \eps$ where $\eps < o(1)$, we only need to focus on arms $x$ for which it holds that $x \in S_j$ for some $j \in [M]$ such that $f_j > \nicefrac{1}{18M}$.

Fix an arm $x$ such that $\gapIID(x) \leq \eps$ and let $j$ be such that $x \in S_j$ and $f_j > \nicefrac{1}{18M}$. Then:
\begin{align*}
\eps    &\geq \gapIID(x) = \mu (\bxst) - \mu(x) = \mu(\bxst) - \mu(\xst_j) + \mu(\xst_j) - \mu(x)\\
        &= \mu(\bxst) - \mu (\xst_j) + \underbrace{\left(\must_j - \mu_j(x) \right)}_{\gapIID_j(x)} \cdot f_j \\
        &\geq \gapIID_j(x) \cdot f_j \tag{$\mu(\bxst) \geq \mu(\xst_j)$} \\
        &\geq \frac{1}{18M} \cdot \gapIID_j (x) \tag{Claim~\ref{claim:irrelevant-small-freq-gap}}
\end{align*}
Hence, for arms $x$ such that $\gapIID(x) \leq \eps$ it holds that $\gapIID_j(x) \leq 18M \cdot \eps = \calO(\eps)$ for some stochastic instance $j \in [M]$. This concludes our proof.
\end{proof}

In the next step of the proof, we will connect an arm's stochastic gap of the induced adversarial instance with its adversarial gap. The result is similar in flavor to Proposition~\ref{prop:application-azuma}. Recall that $\gapIID(x)$ corresponds to the stochastic gap for arm $x$ in the induced adversarial instance.

\begin{proposition}\label{prop:application-azuma-examples}
Fix time $t$. For any arm $x \in \reprset$, with probability at least $1 - \nicefrac{1}{T}$ for the induced adversarial instance of the example it holds that:
\[
\left|\gap_t(x) - \gapIID(x) \right| \leq 3 \sqrt{\frac{2 \ln \left(T \cdot |\reprset| \right)}{t}}
\]
where $\gapIID(x)$ is the stochastic gap of the mean rewards in the induced adversarial instance.
\end{proposition}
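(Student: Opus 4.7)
The plan is to mimic the proof of Proposition~\ref{prop:application-azuma} almost verbatim, replacing the single stochastic instance by the induced adversarial instance, in which the expected per-round reward is $\mu_{I_\tau}(\cdot)$, and the empirical-frequency-weighted mean
\[
\mu(x) \;=\; \tfrac{1}{t}\sum_{\tau\in[t]} \mu_{I_\tau}(x) \;=\; \sum_{i\in[M]} f_i\,\mu_i(x)
\]
plays the role of the stochastic mean. The key structural observation is that even though $I_\tau$ varies with $\tau$, these assignments are fixed in advance, so the sequence of expected rewards $\{\mu_{I_\tau}(x)\}_{\tau\in[t]}$ is deterministic, and the randomness only enters through the per-round draws. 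In particular, conditioning on the assignment sequence is vacuous.

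First, I would fix an arm $x\in\reprset$ and define the martingale
\[
Y_t \;=\; \sum_{\tau=1}^{t} \bigl(g_\tau(\bxst) - g_\tau(x)\bigr) \;-\; t\cdot\gapIID(x),
\]
where $\bxst$ is the mean-optimal arm of the induced instance (so $\mu(\bxst)-\mu(x) = \gapIID(x)$). Since rewards are bounded in $[0,1]$, the martingale differences satisfy $|Y_{\tau+1}-Y_\tau|\le 1$, and applying Lemma~\ref{lem:azuma} with failure probability $\delta$ gives
\[
\bigl|Y_t\bigr| \;\le\; \sqrt{2t\,\ln(2/\delta)}
\qquad\text{w.p.\ at least }1-\delta.
\]
Exactly the same bound applied to a second martingale (replacing $\bxst$ with $\xst_{[t]}$, and observing that $\mu(\xst_{[t]})\le \mu(\bxst)$ by definition of $\bxst$) will let me replace $\bxst$ with $\xst_{[t]}$ inside the sum at a cost of another $\sqrt{2\ln(2/\delta)/t}$ error term; this is the analogue of the ``$\Delta$'' argument in the proof of Proposition~\ref{prop:application-azuma}. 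Using $\gap_t(x) = \tfrac{1}{t}\sum_\tau g_\tau(\xst_{[t]}) - \tfrac{1}{t}\sum_\tau g_\tau(x)$, combining the two bounds yields both $\gap_t(x)\le \gapIID(x)+3\sqrt{2\ln(1/\delta)/t}$ and the symmetric lower bound.

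Finally, I would take a union bound over all $x\in\reprset$, setting $\delta = 1/(T\cdot|\reprset|)$ so that the overall failure probability is at most $1/T$ and the stated constant $3\sqrt{2\ln(T\cdot|\reprset|)/t}$ appears. The main (very mild) subtlety, compared with Proposition~\ref{prop:application-azuma}, is that the per-round expected rewards are not identical across $\tau$; one must therefore be careful to centre each summand by its own conditional expectation $\mu_{I_\tau}(\cdot)$ when verifying the martingale property, and to note that the telescoping
    $\sum_\tau \mu_{I_\tau}(\bxst) - \mu_{I_\tau}(x) = t(\mu(\bxst)-\mu(x)) = t\,\gapIID(x)$
follows directly from the definition of $\mu(\cdot)$ via empirical frequencies. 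Aside from this bookkeeping, no new ideas are needed.
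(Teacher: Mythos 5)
Your proposal is correct and follows essentially the same approach as the paper: define the martingale by centering each increment $g_{I_\tau}(\bxst) - g_{I_\tau}(x)$ by its own conditional expectation $\mu_{I_\tau}(\bxst) - \mu_{I_\tau}(x)$ (which telescopes to $t\cdot\gapIID(x)$ at the final step), apply Azuma--Hoeffding, handle the random empirical optimum $\xst_{[t]}$ via the $\Delta$-style comparison and a union bound over $\reprset$. Your explicit remark that the per-round means vary with $\tau$ and must therefore be used to center the martingale increments is precisely the content of the paper's $\mu^{t'}(\cdot)$ bookkeeping, so there is no gap.
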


\begin{proof}
The proof is similar to the one of Proposition~\ref{prop:application-azuma}, but there are a few subtle differences. Observe that:
\[
\E \left[ \frac{1}{t} G(x) \right] = \E \left[\frac{1}{t}\sum_{\tau \in [t]}g_{I_\tau}(x) \right] = \sum_{i \in [M]} f_i \cdot \mu_i(x) = \mu(x)
\]
Fixing the sequence $\{I_\tau\}_{\tau \in [t]}$, we can define the following martingale:
\[
Y_{t'} = \sum_{\tau \in [t']} \left(g_{I_\tau}\left(\bxst\right) - g_{I_\tau}\left(x\right) \right) - t' \cdot \left( \mu^{t'}\left(\bxst \right) - \mu^{t'}(x) \right)
\]
where $\mu^{t'}(x)$ is the mean reward of arm $x \in \calA$ in the induced adversarial instance of sequence $\{I_\tau\}_{\tau \leq t'}$ with $t' \leq t$. In other words, denoting by $f_i^{t'}$ the empirical frequency of instance $i$ in sequence $\{I_\tau\}_{\tau \in t'}$ with $t' \leq t$: $\mu^{t'}(x) = \sum_{i \in [M]} f^{t'}_i \cdot \mu(x)$. To see that $Y_{t'}$ is indeed a martingale, let $f_i^{t'}$ be the empirical frequency of instance $i$ in sequence $\{I_\tau\}_{\tau \in t'}$ with $t' \leq t$ and note that:
\begin{align*}
\E \left[Y_{n+1} | Y_1, \dots, Y_n \right] &= \E \left[ \sum_{\tau \in [n+1]} \left(g_{I_\tau}\left(\bxst\right) - g_{I_\tau}\left(x\right) \right) - (n+1) \cdot \left( \mu^{n+1}\left(\bxst \right) - \mu^{n+1}(x) \right) \big| Y_1, \dots, Y_n \right] \\
&= \sum_{i \in [M]} f_i^{n+1} \cdot \mu\left(\bxst \right) - \sum_{i \in [M]} f_i^{n+1} \cdot \mu\left(\bxst \right) + Y_n \\
&= Y_n
\end{align*}
where the second equation is due to the definition of $\mu^{t'}(x)$ and the fact that: $\E \left[g_{I_\tau}(x) \cdot \1 \{\tau \leq t' \leq t \} \right] = \sum_{i \in [M} f_i^{t'} \cdot \mu_i(x)$. Applying the Azuma-Hoeffding inequality (Lemma~\ref{lem:azuma}), and since $|Y_{t'+1} - Y_{t'}| \leq 1$ for rewards bounded in $[0,1]$ we have: $\Pr [Y_{t'}] \geq \sqrt{2t \ln (1 /\delta)}] \leq \delta, \forall \delta >0$. The rest of the steps in the proof are similar to the proof of Proposition~\ref{prop:application-azuma} and hence we omit them. 
\end{proof}

We are now ready to complete the proof of Theorem~\ref{thm:example}.

\begin{proof}[Proof of Theorem~\ref{thm:example}]
It remains to show that the arms for which $\gap_t(x)$ is adequately small in the induced adversarial instance, have small stochastic gap $\gapIID(x)$ with high probability. Then, by Lemma~\ref{lem:key-lemma} we can relate $\gap_t(x)$ with the stochastic gap $\gapIID_i(x)$ for some instance $i \in [M]$.

Let arm $x \in \reprset$ be such that $\gap_t(x) \leq 30 \eps \cdot \ln (T) \cdot \sqrt{d \ln \left( \DblC \cdot T\right)}$ for $\eps = (3\sqrt{t})^{-1}$. Then, from Proposition~\ref{prop:application-azuma-examples} it holds that with probability at least $1 - \nicefrac{1}{T}$:
\begin{align*}
\gapIID(x)  &\leq 30 \eps \cdot \ln (T) \cdot \sqrt{d \ln \left( \DblC \cdot T\right)} + \eps \sqrt{18 \ln \left( T \cdot |\reprset| \right)} \\
            &\leq \underbrace{31 \eps \cdot \ln (T) \cdot \sqrt{d \cdot \ln \left( \DblC \cdot T\right) \cdot \ln \left( T \cdot |\reprset | \right)}}_{\eps'}
\end{align*}

From Lemma~\ref{lem:key-lemma}, since $\gapIID(x) \leq \eps'$, there exists an instance $j \in [M]$ such that $\gapIID_j(x) \leq 18 M \eps'$. From the definition of zooming dimension for this instance $j$ these arms can be covered by at most $\gamma \cdot (18 M \eps')^{-z}$ sets of diameter $18M\eps'$, where $z = \zoomdim_\gamma$. Equivalently, these arms can be covered by:
\begin{align*}
&\gamma \cdot \left( \frac{18M\eps'}{\eps}\right)^{\log (\DblC)} \cdot \left(18 M \eps' \right)^{-z} \\
& = \gamma \cdot \left(558 \cdot M \cdot \ln (T) \cdot \sqrt{d \cdot \ln(\DblC \cdot T) \cdot \ln (T \cdot |\reprset|)} \right)^{\log (\DblC) - z} \cdot \eps^{-z}
\end{align*}
sets of diameter $\eps$. Since the $\advzoomdim$ is the smallest dimension needed to cover these arms (because originally $\gap_t(x) \leq O ( \eps \ln (T) \sqrt{d \ln (\DblC T)})$, then $\advzoomdim_{\gamma'} \leq z$ for $\gamma' = \gamma\cdot\calO \left( M \ln (T) \sqrt{d \ln \left( \DblC \cdot T \right) \cdot \ln \left( T \cdot |\reprset| \right)}\right)$.
\end{proof}

\newpage

\newpage
\section{Extension to Arbitrary Metric Spaces}
\label{sec:metrics}
In this appendix, we sketch out an extension to arbitrary metric spaces. The main change is that the zooming tree is replaced with a more detailed decomposition of the action space. Similar decompositions have been implicit in all prior work on adaptive discretization, starting from \citep{LipschitzMAB-JACM,xbandits-nips08}. No substantial changes in the algorithm or analysis are needed.

\xhdr{Preliminaries.}
Fix subset $S\subset \arms$ and $\eps>0$. The diameter of $S$ is
    $\sup_{x,y\in S'}\dist(x,y)$.
An \emph{$\eps$-covering} of $S$ is a collection of subsets $S'\subset \arms$ of diameter at most $\eps$ whose union covers $S$.
The $\eps$-covering number of $S$, denoted $\covnum_\eps(S)$, is the smallest cardinality of an $\eps$-covering. Note that the covering property in \eqref{eq:dims-generic} can be restated as
$\inf \cbr{ d\geq 0:\;
    \covnum_\eps(\arms_\eps)\leq \gamma\cdot \eps^{-d},
    \quad \forall \eps>0}$.

A \emph{greedy $\eps$-covering} of $S$ is an $\eps$-covering constructed by the following ``greedy'' algorithm: while there is a point $x\in S$ which is not yet covered, add the closed ball $B(x,\eps/2)$ to the covering. Thus, this $\eps$-covering  consists of closed balls of radius $\eps/2$ whose centers are at distance more than $\eps/2$.

A \emph{rooted} directed acyclic graph (DAG) is a DAG with a single source node, called the \emph{root}. For each node $u$, the distance from the root is called the \emph{height} of $u$ and denoted $h(u)$. The subset of nodes reachable from $u$ (including $u$ itself) is called the \emph{sub-DAG} of $u$. For an edge $(u,v)$, we say that $u$ is a \emph{parent} and $v$ is a \emph{child} relative to one another. The set of all children of $u$ is denoted $\c(u)$.

\xhdr{Metric Space Decomposition.} Our decomposition is a rooted DAG,  called \emph{Zooming DAG}, whose nodes correspond to balls in the metric space.

\begin{definition}[zooming DAG]
A \emph{zooming DAG} is a rooted DAG of infinite height. Each node $u$ corresponds to a closed ball $B(u)$ in the action space, with radius $r(u) = 2^{-h(u)}$ and center $x(u)\in\arms$. These objects are called, respectively, the \emph{action-ball}, the \emph{action-radius}, and the \emph{action-center} of $u$. The following properties are enforced:
\begin{OneLiners}

\item[(a)] each node $u$ is covered by the children:
    $B(u) \subset \cup_{v\in \c(u)} B(v)$.

\item[(b)] each node $u$ overlaps with each child $v$:
    $B(u)\cap B(v)\neq \emptyset$.

\item[(c)] for any two nodes of the same action-radius $r$,
their action-centers are at distance $>r$.

\end{OneLiners}
The \emph{action-span} of $u$ is the union of all action-balls in the sub-DAG of $u$.
\end{definition}

Several implications are worth spelling out:
\begin{itemize}
\item the nodes with a given action-radius $r$  cover the action space (by property (a)), and there are at most $\covnum_r(\arms)$ of them (by property (c)). Recall that
    $\covnum_r(\arms) \leq \gamma\cdot r^{-d}$,
where $d$ is the covering dimension with multiplier $\gamma$.

\item each node $u$ has at most $\covnum_{r(u)/2}(B(u))\leq \DblC$ children (by properties (b,c)), and its action-span lies within distance
    $3\,r(u)$ from its action-center (by property (b)).
\end{itemize}

A zooming DAG exists, and can be constructed as follows. The nodes with a given action-radius $r$ are constructed as a greedy $(2r)$-cover of the action space. The children of each node $u$ are all nodes of action-radius $r(u)/2$ whose action-balls overlap with $B(u)$.

Our algorithm only needs nodes of height up to $O(\log T)$. We assume that some ``zooming DAG'', denoted \ZoomDAG, is fixed and known to the algorithm.

Note that a given node in \ZoomDAG may have multiple parents. Our algorithm adaptively constructs subsets of \ZoomDAG that are directed trees. Hence a definition:

\begin{definition}[zooming tree]
A subgraph of \ZoomDAG is called a \emph{zooming tree} if it is a finite directed tree rooted at the root of \ZoomDAG. The \emph{ancestor path} of node $u$ is the path from the root to $u$.
\end{definition}

For a $d$-dimensional unit cube, \ZoomDAG can be defined as a zooming tree, as per Section~\ref{sec:algo}.

\xhdr{Changes in the Algorithm.} When zooming in on a given node $u$, it activates all children of $u$ in \ZoomDAG that are not already active (whereas the version in Section~\ref{sec:algo} activates all children of $u$). The representative arms $\repr_t(u)$ are chosen from the action-ball of $u$.

\xhdr{Changes in the Analysis.}
We account for the fact that the action-span of each node $u$ lies within $3\,r(u)$ of its action-center (previously it was just $r(u)$). This constant $3$ is propagated throughout.

\end{document}